\newtheorem{theorem}{Theorem}
\newtheorem{mydef}{Definition}
\newtheorem{proof}{Proof}
\newcommand{\paratitle}[1]{\vspace{1.2ex}\noindent\textbf{\normalsize \color{blue} #1}}
\newcommand{\ie}{\emph{i.e., }}
\newcommand{\eg}{\emph{e.g., }}
\newcommand{\mywritedate}{November 30, 2025}
\begin{document}

\title{\bf \large
A Tutorial on Regression Analysis: From Linear Models to \\ Deep Learning\\[2pt]
\it \normalsize --- Lecture Notes on Artificial Intelligence at Beihang University}

\author{ Jingyuan Wang, Jiahao Ji \\ \small BIGSCITY LAB, Beihang University \\ \small \url{https://www.bigscity.com}, jywang@buaa.edu.cn}
\date{\small \mywritedate}
\maketitle

\abstract{This article serves as the regression analysis lecture notes in the Intelligent Computing course cluster (including the courses of Artificial Intelligence, Data Mining, Machine Learning, and Pattern Recognition) at the School of Computer Science and Engineering, Beihang University. It aims to provide students -- who are assumed to possess only basic university-level mathematics (\ie  with prerequisite courses in calculus, linear algebra, and probability theory) -- with a comprehensive and self-contained understanding of regression analysis without requiring any additional references. The lecture notes systematically introduce the fundamental concepts, modeling components, and theoretical foundations of regression analysis, covering linear regression, logistic regression, multinomial logistic regression, polynomial regression, basis-function models, kernel-based methods, and neural-network-based nonlinear regression. Core methodological topics include loss-function design, parameter-estimation principles, ordinary least squares, gradient-based optimization algorithms and their variants, as well as regularization techniques such as Ridge and LASSO regression. Through detailed mathematical derivations, illustrative examples, and intuitive visual explanations, the materials help students understand not only how regression models are constructed and optimized, but also how they reveal the underlying relationships between features and response variables. By bridging classical statistical modeling and modern machine-learning practice, these lecture notes aim to equip students with a solid conceptual and technical foundation for further study in advanced artificial intelligence models.
}

\newpage

\tableofcontents

\newpage

\section{What is Regression Analysis}

\begin{tcolorbox}
\begin{quote}
``The Tao of heaven is like the bending of a bow. The high is lowered, and the low is raised. If the string is too long, it is shortened; If there is not enough, it is made longer.''

--- \textit{Laozi, Tao Te Ching, Chapter 77}
\end{quote}
\end{tcolorbox}

\subsection{Definition of Regression Analysis}

According to Wikipedia, the definition of Regression Analysis is: {\em ``In statistical modeling, regression analysis is a statistical method for estimating the relationship between a dependent variable (often called the outcome or response variable, or a label in machine learning parlance) and one or more independent variables (often called regressors, predictors, covariates, explanatory variables, or features).''}

\begin{table}[h]\small
\centering
\caption{Terms in regression analysis.}\label{tab:lr:terminology}
\begin{tabular}{ll}\toprule
  $X$ & $Y$ \\
  \midrule
      Feature & Label \\
      Independent Variable & Dependent Variable \\
      Explanatory Variable & Explained Variable \\
      Controlled Variable & Response Variable \\
      Predictor Variable & Predicted Variable \\
      Regressor & Regressand \\
      Income & Outcome \\
  \bottomrule
\end{tabular}
\end{table}

According to this definition, a variable $X$ that is relatively well understood and a variable $Y$ about which our knowledge is limited. Our objective is to investigate how $X$ influences $Y$. Regression analysis provides a statistical framework and methodological tools for examining the relationship between these two variables. Across different disciplines, the terms used to describe $X$ and $Y$ vary, as summarized in Table~\ref{tab:lr:terminology}.

In machine learning and data mining, the variables $X$ and $Y$ are typically referred to as \emph{features} and \emph{labels}, respectively. In this book, we adopt the same terminology to denote $X$ and $Y$. Then, the formal definition of regression analysis is given as follows.

\begin{mydef}[Regression Analysis]
Given a set of samples
$$\mathcal{D}=\{(\bm{x}_1, y_1), (\bm{x}_2, y_2), \ldots, (\bm{x}_M, y_M)\},$$
where $\bm{x}_m$ denotes the feature and $y_m$ denotes the corresponding label of the $m$-th sample, the objective of regression analysis is to learn a function that estimates or predicts $y_m$ from $\bm{x}_m$.
Formally,
\begin{equation}\label{}
      \hat{y}_m = f(\bm{x}_m; \bm{\theta}),
\end{equation}
where $f(\cdot; \bm{\theta})$ denotes the function that characterizes the relationship between the features and labels, referred to as the \textbf{regression function}, and $\bm{\theta}$ denotes its parameters.
The quantity $\hat{y}_m$ represents the predicted value of $y_m$.
In a regression function, the parameter $\bm{\theta}$ is unknown and must be estimated from the data.
\end{mydef}

Using a regression model for data analysis typically involves two stages.
The first stage is \textbf{Model Construction}, in which a regression function $f(\cdot; \bm{\theta})$ is selected based on the dataset $D$, and the parameter vector $\theta$ is estimated.
The second stage is \textbf{Model Utilization}, in which the regression function $f(\cdot; \bm{\theta})$ and the estimated parameters $\theta$ are used either to analyze the relationship between the feature vector $\bm{x}$ and the label $y$ within the dataset $D$, or to predict the value of a label $y'$ given an input feature $\bm{x}'$ corresponding to a sample not contained in $D$.

In some machine learning textbooks, the term ``regression'' is specifically used to describe predictive models and tasks in which the label $y$ is continuous, whereas predictive models and tasks with a discrete label are referred to as ``classification.'' As we will see in later chapters, regression analysis models are capable of handling both types of tasks. For example, logistic regression is specifically designed for classification tasks, despite being formulated within the general framework of regression analysis.  To reconcile these different uses of the term \emph{regression}, we may adopt the following perspective. From the viewpoint of predictive tasks, problems involving continuous labels and those involving discrete labels can be referred to as \emph{regression tasks} and \emph{classification tasks}, respectively. From the viewpoint of modeling, however, any analytical approach that employs a parametric model to study how a feature $x$ influences a label $y$ falls under the category of a \emph{regression analysis model}.

\subsection{The Elements of Regression Model Construction}~\label{sec:elements_regression}

The construction of a regression model consists of three essential elements:
the \textbf{regression function}, the \textbf{loss function}, and \textbf{parameter estimation} (we refer to these as the ``three steps for putting an elephant into a refrigerator,'' \ie open the refrigerator, put the elephant inside, and close the refrigerator).

\paratitle{Regression Functions.} Given the dataset $\mathcal{D}$, a regression analysis model needs to select a function $f(\cdot;\bm{\theta})$ to characterize the relationship between the feature vector $\bm{x}_m$ and the label $y_m$. This function is referred to as the \emph{regression function}. Different regression models employ different forms of regression functions. Table~\ref{tab:lr:reg_func} lists several commonly used regression functions adopted by various regression analysis models.

\begin{table}[t]\footnotesize
    \centering
    \caption{Regression functions used in several commonly adopted regression models.}\label{tab:lr:reg_func}
    \renewcommand{\arraystretch}{1.5}
    \begin{tabular}{rcc}
    \toprule
        Model & Regression Function $f(\cdot; \bm{\theta})$ & Parameters $\bm{\theta}$ \\
    \midrule
         Linear Regression &
         $y = \theta_0 + \theta_1 x_1 + \dots + \theta_N x_N$ &
         $(\theta_0, \theta_1, \dots, \theta_N)$ \\

         Logistic Regression &
         $y = \frac{1}{1+\exp( \theta_0 + \theta_1 x_1 + \dots + \theta_N x_N)}$ &
         $(\theta_0, \theta_1, \dots, \theta_N)$ \\

         Multinomial Logistic Regression &
         $y_k = \frac{\exp(\bm{\theta}_k^\top {\bm{x}})}{\sum_{j=1}^{K} \exp(\bm{\theta}_j^\top {\bm{x}})}$ &
         $(\bm{\theta}_1, \bm{\theta}_2, \dots, \bm{\theta}_K)$ \\

         Polynomial Regression &
         $y = \theta_0 + \theta_1 x + \theta_2 x^2 + \dots + \theta_N x^N$ &
         $(\theta_0, \theta_1, \dots, \theta_N)$ \\

         Autoregression (AR) &
         $x_{t+1} = \theta_0 + \theta_1 x_t + \theta_2 x_{t-1} + \dots + \theta_p x_{t-p+1}$ &
         $(\theta_0, \theta_1, \dots, \theta_p)$ \\
    \bottomrule
    \end{tabular}
\end{table}

Deep learning can be viewed as a regression analysis model in which the regression function is implemented by a multi-layer neural network. In this setting, the neural network serves as a highly flexible function approximator capable of modeling complex and nonlinear relationships between features and labels. However, in traditional regression analysis (listed in Table~\ref{tab:lr:reg_func}), the model parameters typically possess clear physical or statistical meaning and can therefore be used to interpret the data-generation mechanism (\ie how the feature $X$ influences or produces the label $Y$). Models with this property are usually referred to as \emph{parametric models}. In contrast, the parameters of a neural network in deep learning do not carry explicit physical meaning and cannot be directly used to explain the underlying data-generation process. For this reason, deep learning models are often categorized as \emph{non-parametric models}. This characteristic of neural networks prevents them from being incorporated into the probabilistic framework introduced in the later chapters of this book, and it also limits their ability to serve as tools for examining
or interpreting the relationship between $X$ and $Y$. Precisely due to these limitations, most taxonomies do not classify models that use neural networks as their regression function as part of the traditional family of regression analysis models.

\paratitle{Loss Functions.} Once the regression function is specified, the prediction for the label $y_m$ can be computed as
$\hat{y}_m = f(\bm{x}_m; \bm{\theta})$. To evaluate the quality of these predictions, a regression
analysis model introduces a \textbf{loss function}, also known as a \textbf{cost function}\footnote{
Some references distinguish between the two terms: the loss function $L(\hat{y}_m, y_m)$ measures
the error for a single sample, while the cost function refers to the average loss over the entire
dataset, as in Eq.~\eqref{eq:lr:loss_cost}. In this book, we do not differentiate between the two.}.
Given the labels $\{y_1, \dots, y_M\}$ and their corresponding predicted values
$\{\hat{y}_1, \dots, \hat{y}_M\}$, the loss function is defined as
\begin{equation}\label{eq:lr:loss_cost}
    \mathcal{L}(\bm{\theta})
    = \frac{1}{M} \sum_{m=1}^{M}
      L\Big(\hat{y}_m(\bm{\theta}, \bm{x}_m),\, y_m\Big),
\end{equation}
where $L(\cdot, \cdot)$ is an error measure between $\hat{y}_m$ and $y_m$.
Here, $\hat{y}_m(\bm{\theta}, \bm{x}_m) = f(\bm{x}_m; \bm{\theta})$ is a function of the parameter
vector $\bm{\theta}$ for a given sample $\bm{x}_m$ in the dataset $D$.

For continuous dependent variables (ratio or interval scales), common error measures include the Mean Square Error (MSE) and the Mean Absolute Error (MAE). For discrete dependent variables (\eg nominal or ordinal types), common measures include Cross Entropy, Accuracy, and Precision.  Table~\ref{tab:lr:error_metric_function} summarizes several widely used error metrics and their applicable data types.

\begin{table}[t]\footnotesize
    \centering
    \caption{Commonly Used Error Metric Functions.}\label{tab:lr:error_metric_function}
    \renewcommand{\arraystretch}{1.5}
    \begin{tabular}{rcc}
    \toprule
        Error Metric & $L(\hat{y}_m(\bm{\theta}, \bm{x}_m), y_m)$ & Applicable Label Type \\
    \midrule
         MSE, Mean Squared Error &
         $\frac{1}{M}\sum_{m=1}^{M} (\hat{y}_m - y_m)^2$ &
         Continuous \\

         RMSE, Root Mean Squared Error &
         $\sqrt{\frac{1}{M}\sum_{m=1}^{M} (\hat{y}_m - y_m)^2}$ &
         Continuous \\

         MAE, Mean Absolute Error &
         $\frac{1}{M}\sum_{m=1}^{M} |\hat{y}_m - y_m|$ &
         Continuous \\

         MAPE, Mean Absolute Percentage Error &
         $\frac{1}{M}\sum_{m=1}^{M} \left|\frac{\hat{y}_m - y_m}{y_m}\right|$ &
         Continuous \\

         Cross Entropy &
         $-\frac{1}{M}\sum_{m=1}^{M}\sum_{k=1}^{K} y_{k,m} \log p_{k,m}$ &
         Discrete \\

         Accuracy &
         $1 - \frac{\mathrm{\#\ error\ items}}{\mathrm{\#\ all\ items}}$ &
         Discrete \\

         Precision &
         $\frac{\mathrm{True\ Positives}}{\mathrm{True\ Positives + False\ Positives}}$ &
         Discrete \\

         Recall &
         $\frac{\mathrm{True\ Positives}}{\mathrm{True\ Positives + False\ Negatives}}$ &
         Discrete \\

         F1 Score &
         $\frac{2 \cdot \mathrm{Precision} \cdot \mathrm{Recall}}{\mathrm{Precision + Recall}}$ &
         Discrete \\

         AUC Value &
         Area under the ROC curve &
         Discrete \\
    \bottomrule
    \end{tabular}
\end{table}

In regression modeling, the purpose of an error metric is to compute the optimal
parameters $\bm{\theta}$. Therefore, the choice of error measure must also take into account the tractability of parameter estimation (\eg differentiability), and hence not all error metrics can be directly used to construct a loss function.

\paratitle{Parameter Estimation.} The purpose of defining a loss function is to determine the parameter $\bm{\theta}$ that best fits the dataset $\mathcal{D}$. According to Eq.~\eqref{eq:lr:loss_cost}, the loss function is a function of the regression model parameters $\bm{\theta}$. Clearly, among all possible choices of $\bm{\theta}$, the optimal parameter vector is the one that minimizes the prediction error (\ie the loss function). Given the dataset $\mathcal{D}$, regression analysis typically computes the optimal parameters $\bm{\theta}^*$ by minimizing the loss function, namely
\begin{equation}\label{eq:minimize_loss_4}
    \bm{\theta}^* = \mathop{\arg\min}\limits_{\bm{\theta}}\, \mathcal{L}(\bm{\theta}).
\end{equation}
There are many approaches to computing the optimal parameters $\bm{\theta}^*$.
In general, these methods can be categorized into {\em closed-form solutions}
and {\em iterative optimization methods}.
\begin{itemize}
  \item {\bf Closed-form solutions} refer to parameter estimates that can be obtained analytically by directly solving an exact mathematical expression. Such solutions typically exist when the loss function is convex, differentiable, and admits an explicit minimizer. A classical example is the ordinary least squares (OLS) solution for linear regression. Closed-form methods are computationally efficient and provide exact solutions, but they are applicable only to relatively simple models or specific loss functions.
  \item {\bf Iterative optimization methods}, in contrast, compute parameter estimates by gradually refining an initial guess through repeated updates. Examples include gradient descent, stochastic gradient descent, Newton's method, and various other numerical optimization algorithms. These methods are highly flexible and can handle complex models, non-linear regression functions, regularization terms, and non-convex objectives. Although iterative methods do not always guarantee a global optimum, they can still yield practical and useful solutions for models whose loss functions cannot be minimized via closed-form expressions -- after all, half a loaf is better than none. For example, in regression models where neural networks serve as the regression function (\ie deep learning), iterative optimization methods are essentially the only feasible option for parameter estimation, since neural networks lack closed-form solutions and typically involve highly non-linear and non-convex objective functions.
\end{itemize}

\subsection{Model Utilization of Regression Models}

Once a regression model has been constructed and its optimal parameters have been estimated, the model can be used in two primary ways: \emph{prediction} and
\emph{interpretation}.
\begin{itemize}
  \item \textbf{Prediction.} A regression model can be used to make predictions for new samples not contained in the original dataset. Given a new feature vector $\bm{x}'$, the regression function produces a corresponding prediction $\hat{y}' = f(\bm{x}'; \bm{\theta}^*)$. This predictive capability is essential in a wide range of machine learning applications, including forecasting, estimation, and decision-making.
  \item \textbf{Interpretation.} For many parametric regression models, the learned parameters carry explicit statistical or physical meaning and can thus be used to interpret how the feature variables $X$ influence the response variable $Y$. This interpretability enables tasks such as identifying important predictors, understanding causal mechanisms, and analyzing the sensitivity of $Y$ with respect to changes in $X$. Beyond directly examining parameter values, interpretation often involves applying various forms of statistical inference or hypothesis testing.
      These procedures allow us to determine whether a particular feature has a statistically significant effect on the response variable, and enable researchers to assess whether the observed relationships between $X$ and $Y$ are meaningful or may have arisen merely from random noise.
\end{itemize}

Classical linear regression, generalized linear models, and many probabilistic
regression models fall into the category of parametric regression models, whose
parameters possess explicit statistical or physical meaning. As a result, these
models support rich interpretative analyses. However, the strong structural assumptions underlying these models---such as linear relationships or specific distributional forms---often oversimplify the complexity of real-world data. Consequently, although highly interpretable, such models frequently underperform in predictive accuracy, especially in the modern era of AI where non-parametric deep learning models dominate by capturing rich nonlinear patterns. In contrast, more complex models such as neural networks primarily emphasize predictive accuracy. Their parameters lack clear statistical interpretation, making them unsuitable for traditional inferential tasks and limiting their role in the interpretative component of regression analysis. In practice, choosing an appropriate regression function often requires a trade-off between modeling capacity and interpretability. Predictive accuracy and interpretability are often mutually exclusive -- you can't have your cake and eat it too.

\begin{figure}[ht]
    \centering
    \subfigure[Model construction and estimation]{\includegraphics[width=0.45\linewidth]{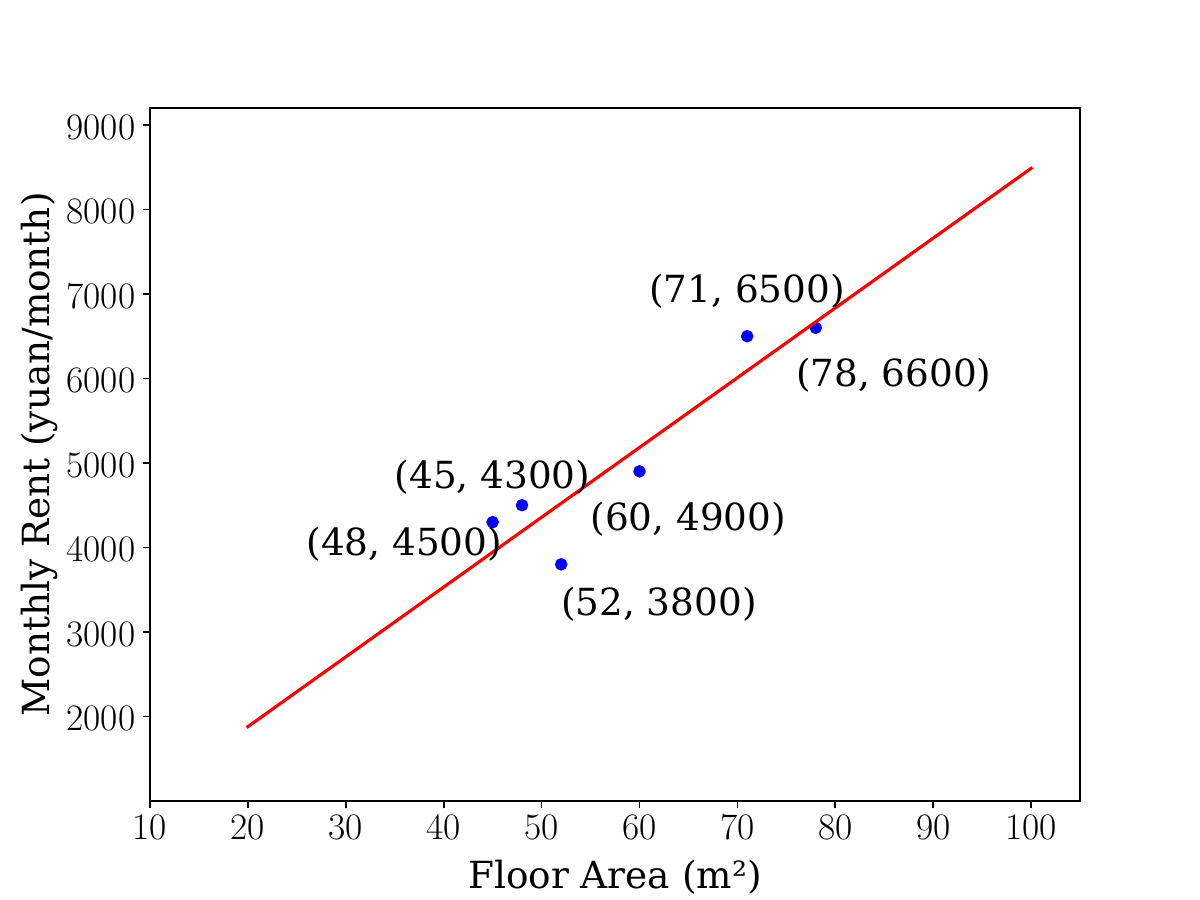}\label{fig:lr:model_train}}~~
    \subfigure[Model utilization]{\includegraphics[width=0.45\linewidth]{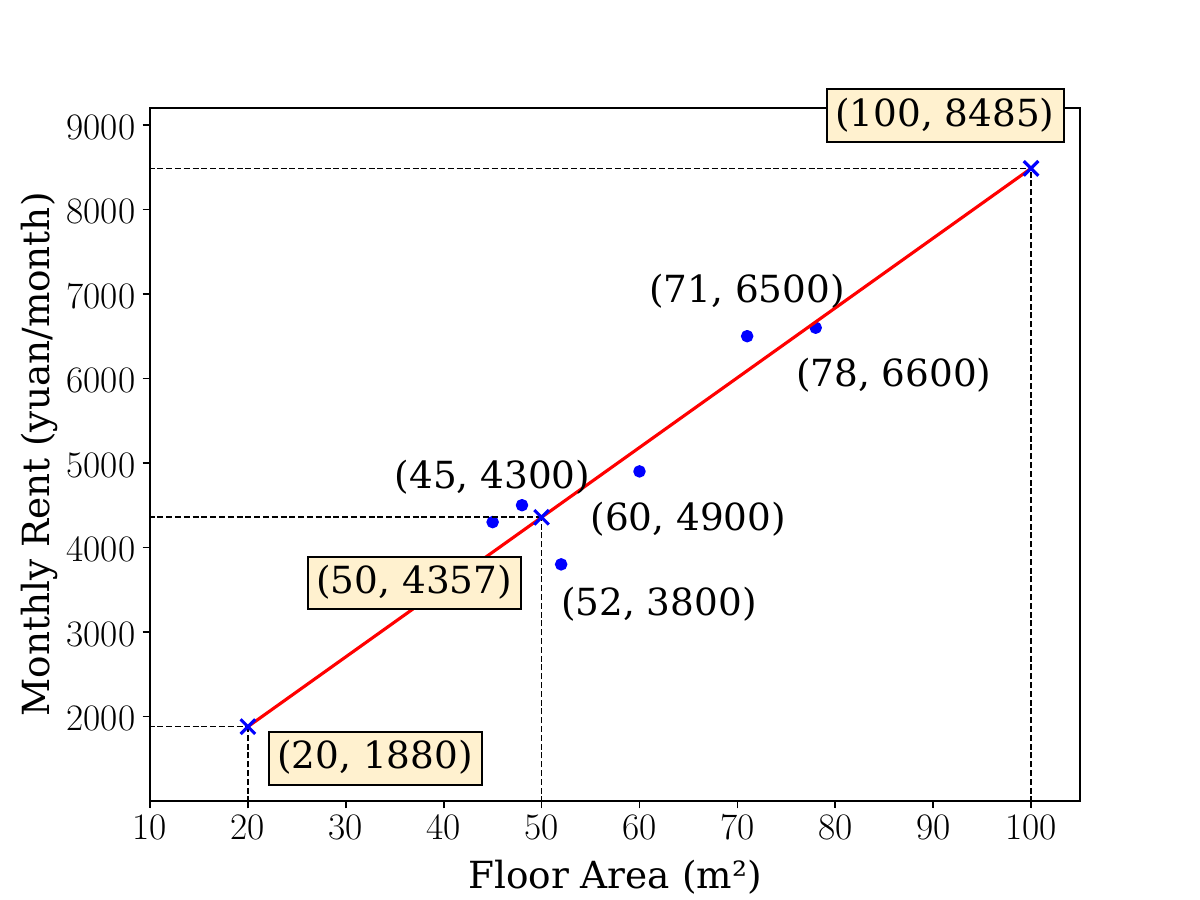}\label{fig:lr:model_use}}
    \caption{\small An example of regression analysis for rental prices near a university.}
    \label{fig:lr:reg_model}
\end{figure}

\paratitle{Example: Regression for Rental Price near Beihang.} Next, we use rental prices near Beihang University as an example to illustrate how a regression model can be used to analyze data. The regression analysis process consists of four steps: data collection, model construction, parameter estimation, and model utilization.
\begin{itemize}
    \item \textbf{Data Collection.}
    In this example, the data consist of rental prices for apartments near Beihang University, collected from an online rental platform\footnote{\url{https://www.58.com}}. Each data point is represented as a pair $(x, y)$, where $x$ denotes the floor area (in square meters) and $y$ denotes the monthly rent (in yuan), \ie,
        \begin{equation*}
            \{(78, 6600), (71, 6500), (60, 4900), (48, 4500), (52, 3800), (45, 4300)\},
        \end{equation*}
    where, for example, the pair $(78, 6600)$ indicates that an apartment of $78$ square meters rents for 6{,}600 yuan per month.

    \item \textbf{Model Construction.}
    We select the simplest linear regression model,
    $$ y = wx + b, $$
    where $y$ denotes the monthly rent, $x$ denotes the floor area, and $w$ and $b$ are the parameters to be estimated.

    \item \textbf{Parameter Estimation.}
    Various methods exist for estimating the parameters of a regression model, which will be discussed in detail in Section~\ref{sec:lr:linreg}. Here, we directly present the estimation result:
    $$ w = 82.6,\quad b = 228.4, $$
    giving the fitted regression model
    $$ y = 82.6x + 228.4, $$
    as shown in Fig.~\ref{fig:lr:model_train}.

    \item \textbf{Model Utilization.}
    From the estimated slope $w$, we can interpret that each additional square meter of floor area increases the monthly rent by approximately 82.6 yuan. The model can also be used for prediction. For example, suppose there is a 20-square-meter apartment near the university and we wish to estimate a reasonable rental price. The model produces
    $$  y = 82.6 \times 20 + 228.4 = 1880.4. $$
    Similarly, the model can predict the rent for apartments of 50 or even 100 square meters, as illustrated in Fig.~\ref{fig:lr:model_use}.
\end{itemize}

\subsection{The Origin of the Term ``Regression''}

Readers may wonder why the term \emph{regression} is used to describe the process of constructing a predictive/relation analysis model. The word ``regression'' in fact originates from a natural phenomenon known as \emph{regression toward the mean}, which refers to the
tendency whereby, for a given random variable, extremely large or small observations are likely to be followed by observations closer to the average.

This phenomenon was first discovered by the British scientist and explorer Francis Galton (1822--1911). In his 1886 paper titled ``Regression towards mediocrity in hereditary stature''~\citep{galton1886regression}, Galton analyzed the relationship between the heights of parents and their adult children. He collected height data for 1,078 families and used the \emph{mid-parent height} (the average of the father's and mother's heights) as the explanatory variable $x$, and the son's height as the response variable $y$. To quantitatively capture the relationship between the two generations, the paper reported the following regression equation:
\[
    y = 0.516\,x + 33.73,
\]
where $x$ is the mid-parent height and $y$ is the adult child's height.
This equation displays a striking property: when $x$ exceeds about $68.25$ inches --- the ``level of mediocrity'' identified by Galton---the predicted child height $y$ falls below $x$; conversely, when $x$ is below $68.25$ inches, the predicted child height exceeds $x$. In other words, children of unusually tall parents tend to be shorter, and children of unusually short parents tend to be taller. This is the empirical pattern Galton famously termed \emph{regression toward mediocrity}, now known as \emph{regression toward the mean}.

\begin{figure}[t] \centering \includegraphics[width=0.5\columnwidth]{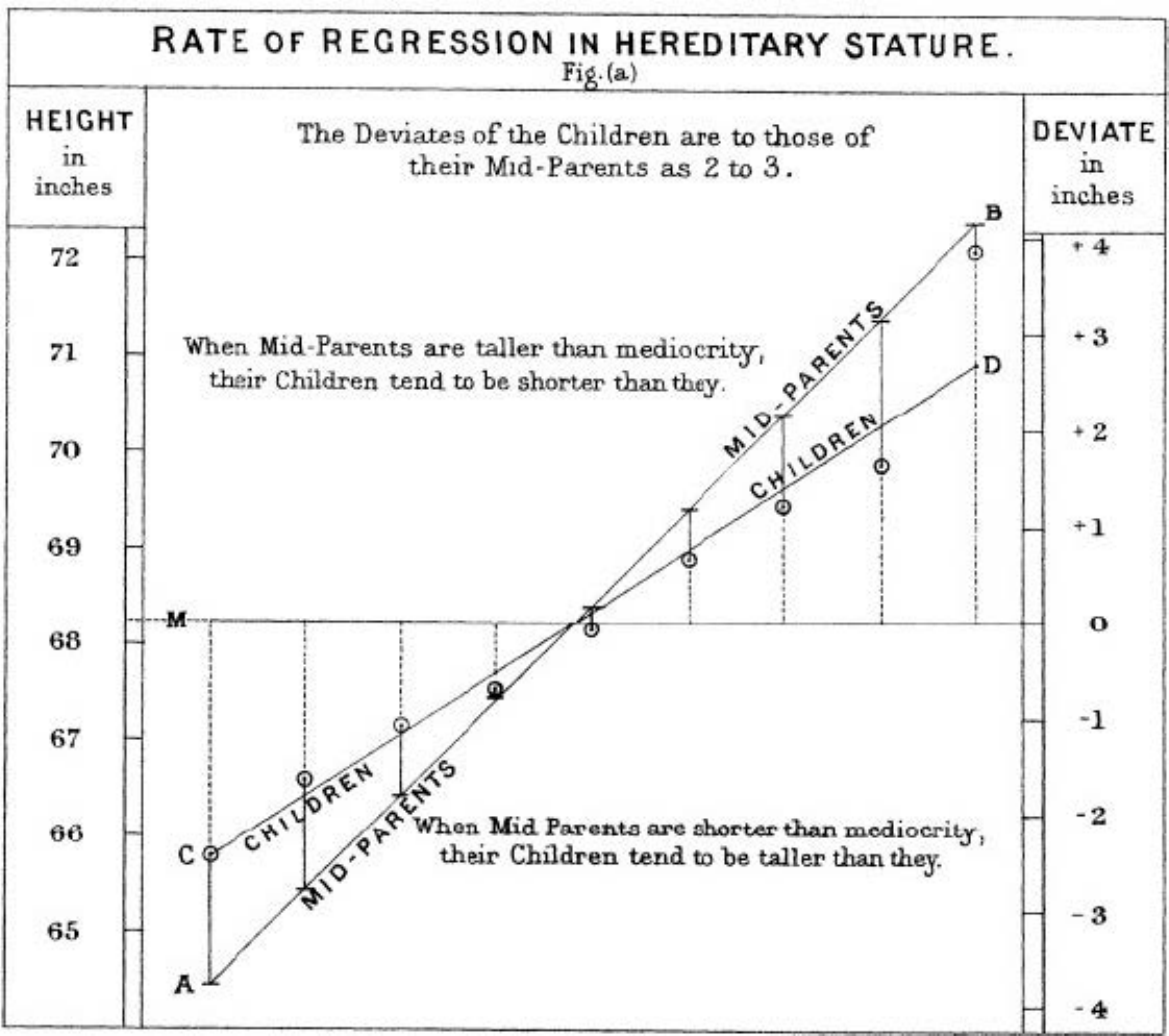}
    \caption{\small Plate IX from Galton (1886): Rate of regression in hereditary stature. Mid-parent height (in inches) is plotted on the horizontal axis, and the median child height is plotted for each parental height band. The identity line (AB) represents no change between generations, while the empirical line (CD) shows that children of extremely tall parents tend to be shorter, and children of extremely short parents tend to be taller--illustrating regression toward the mean.}
\label{fig:img_reading_4_1}
\end{figure}

This regression-to-the-mean phenomenon has since been recognized as widespread in natural processes, such as fluctuations in biological traits, measurement errors in physical experiments, seasonal variations in climate variables, and performance changes in repeated competitions or athletic events. By counteracting extreme fluctuations, regression toward the mean helps maintain the stability of natural systems. Indeed, if such regression did not occur, human height would diverge dramatically over generations, leading to unrealistically extreme groups---a phenomenon never observed in reality.

Its opposite, the \emph{Matthew effect}~\footnote{The term ``Matthew effect'' originates from a passage in the Gospel of Matthew: ``For to everyone who has, more will be given \ldots but from the one who has not, even what he has will be taken away.''}, is more commonly observed in social systems. In contrast to regression toward the mean---which pulls extreme outcomes back toward the average and thereby stabilizes natural processes---the Matthew effect reinforces initial differences and leads to increasing divergence over time. Individuals or groups who start with advantages tend to accumulate even more resources, opportunities, and recognition, while those who begin with disadvantages often fall further behind.

\begin{figure}[ht] \centering \includegraphics[width=0.5\columnwidth]{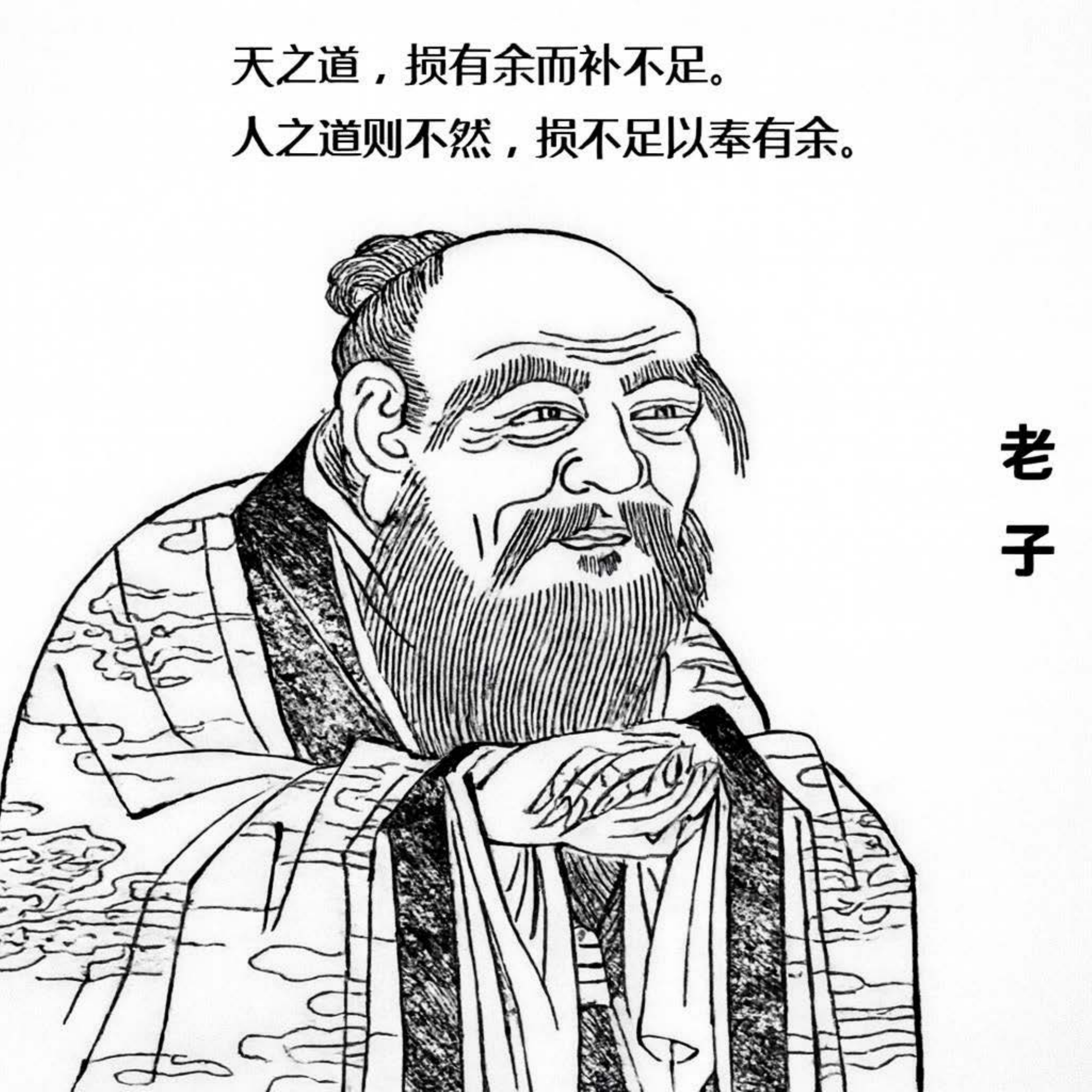}
    \caption{\small Laozi and the passage from the \emph{Tao Te Ching}: ``The Way of Heaven reduces excess and replenishes deficiency; the way of humankind is the opposite -- it takes from the poor to serve the rich.''}
\label{fig:laozi}
\end{figure}

The Matthew effect is more commonly observed in social phenomena, whereas regression toward the mean appears more frequently in natural processes. Remarkably, more than 2,500 years ago, the Chinese philosopher Laozi had already perceived this contrast with great insight (Figure~\ref{fig:laozi}). In his work \emph{Tao Te Ching}, he articulated this idea as follows: ``The Way of Heaven reduces excess and replenishes deficiency; the way of humankind is the opposite---it takes from the poor to serve the rich.''

The phenomenon of ``regression toward the mean'' was originally discovered through the construction of a statistical model describing the relationship between a variable $x$ and another variable $y$. Since then, the term ``regression'' has been retained to refer specifically to modeling methods that use explanatory variables to predict response variables, even though modern regression analysis is no longer limited to studying regression-to-the-mean behavior.

\newpage

\section{Linear Regression}\label{sec:lr:linreg}

Suppose the final examination of the course is approaching, and the instructor wishes to predict students' final scores in order to identify those who may need additional support. In essence, the instructor's goal is to determine a function $f$ whose input is a student's performance during the semester and whose output is the predicted final score:
$$f(\text{student performance}) = \text{final score}.$$
This is a typical regression problem. Let $\bm{x}$ denote a feature vector
representing a student's course performance. The vector $\bm{x}$ may contain multiple components such as $x_{qa}$ (the number of questions answered in class), $x_{at}$ (attendance rate), $x_{hw}$ (homework score), and $x_{te}$ (midterm exam score). We use $\hat{y}$ to denote the predicted final score. By collecting past data and fitting a regression model, the instructor may obtain a predictive function of the form
$$\hat{y} = 7.0 + 3.0\, x_{qa} + 1.2\, x_{at} + 0.4\, x_{hw} + 0.6\, x_{te}.$$
This is a \emph{linear regression model}. It is called ``linear'' because the relationship between each feature (\eg $x_{qa}, x_{at}$) and the predicted outcome $\hat{y}$ is linear. In the following subsections, we provide a detailed introduction to the linear regression model.

\subsection{Elements of Linear Regression}

Here, we introduce the linear regression model through the three elements of regression modeling: the \textbf{regression function}, the
\textbf{loss function}, and the \textbf{parameter estimation} (see Section~\ref{sec:elements_regression}).

\paratitle{Regression Function.} {Linear regression} is a regression analysis model that uses a linear equation as its regression function. It models the relationship between the feature $\bm{x}$ and the label $y$ using a linear function, expressing the label as a linear combination of one or more
features.

\begin{mydef}[Linear Regression]
Given a dataset consisting of $M$ samples,
\begin{equation}
    \mathcal{D}=\{(\bm{x}_1, y_1),(\bm{x}_2, y_2), \ldots, (\bm{x}_M, y_M)\},
\end{equation}
where $\bm{x}_m \in \mathbb{R}^{N}$ is an $N$-dimensional feature vector and
$y_m \in \mathbb{R}$ is its corresponding label for $m=1,\ldots,M$.
\textbf{Linear regression} models the relationship between features and the label by predicting $y$ as a linear combination of the features. Its functional form is
\begin{equation}\label{eq:lr:linear}
    \hat{y}_m = f(\bm{x}_m; \bm{\theta})
    = \theta_0 + \theta_1 x_{m,1} + \cdots + \theta_N x_{m,N}
    = \bm{\theta}^{\top} {\bm{x}}_m,
\end{equation}
where $\top$ denotes the transpose operator. Specifically,

$\bullet$ $\bm{\theta} = (\theta_0, \theta_1, \ldots, \theta_N)^{\top}
    \in \mathbb{R}^{N+1}$ is the parameter vector, where $\theta_0$ is the bias term;

$\bullet$ ${\bm{x}}_m = (1, x_{m,1}, \ldots, x_{m,N})^{\top}$ is the augmented
    feature vector formed by prepending a $1$;

$\bullet$ $\hat{y}_m$ denotes the model's prediction or estimate of the true value $y_m$.
\end{mydef}

\paratitle{Loss Function.} Linear regression adopts the \textbf{mean squared error (MSE)} as its loss function to measure the prediction error between $\hat{y}_m$ and $y_m$. It is defined as
\begin{equation}
    \label{eq:lr:mse_loss}
    \mathcal{L}(\bm{\theta})
    = \frac{1}{M}\sum_{m=1}^{M}(\hat{y}_m - y_m)^2
    = \frac{1}{M}\sum_{m=1}^{M}\left(\bm{\theta}^{\top}{\bm{x}}_m - y_m\right)^2.
\end{equation}
It can be seen that the MSE loss computes the squared prediction error
$(y_m - \hat{y}_m)^2$ for each sample in the dataset, and uses their average
as an overall measure of the regression function's performance on the dataset.
The squaring of the term $y_m - \hat{y}_m$ prevents positive and negative
errors from canceling each other out. Using the absolute error
$\lvert y_m - \hat{y}_m \rvert$ would achieve the same purpose, but the squared
form has two advantages: it makes the loss function differentiable, which is
important for parameter estimation, and it penalizes larger errors more
strongly.

The mean squared error is also the most commonly used loss function for regression models whose response variable is continuous (that is, for regression tasks in machine learning or data mining).

\paratitle{Parameter Estimation.} Given the dataset $\mathcal{D}$, the objective of parameter estimation in linear regression is to minimize the mean squared error loss function, that is,
\begin{equation}\label{eq:mse_loss_2}
    \bm{\theta}^*
    = \mathop{\arg\min}\limits_{\bm{\theta}}
    \frac{1}{M} \sum_{m=1}^{M}
    \left(\bm{\theta}^{\top}{\bm{x}}_m - y_m\right)^2,
\end{equation}
where $\bm{\theta}^*$ is the estimated optimal parameter. There are multiple approaches for solving this optimization problem. For linear
regression, one can obtain a closed-form solution using the method of ordinary
least squares, or apply iterative optimization algorithms such as gradient
descent. We introduce both types of methods in the next subsection.

\subsection{Parameter Estimation for Linear Regression: Least Squares}

The method of least squares provides a closed-form solution for estimating the
parameters of a linear regression model. The core idea is to find the parameter
vector $\bm{\theta}$ that minimizes the loss function $\mathcal{L}(\bm{\theta})$
in Eq.~\eqref{eq:lr:mse_loss} by identifying its minimum point. Because the
optimization problem in Eq.~\eqref{eq:mse_loss_2} seeks to minimize a sum of
squared errors, this approach is known as the ``least squares'' method. In linear regression, the parameter estimation method based on minimizing the
squared error is known as the \emph{Ordinary Least Squares} (OLS) method.

In the Ordinary Least Squares metohd, we first rewrites the linear regression loss function in Eq.~\eqref{eq:lr:mse_loss} in matrix form. For the dataset $\mathcal{D}$, we define the design matrix of the features as
\begin{equation}
    {\bf{X}} =
    \begin{pmatrix}
        1      & x_{1,1} & \cdots & x_{1,N} \\
        1      & x_{2,1} & \cdots & x_{2,N} \\
        \vdots & \vdots  & \ddots & \vdots  \\
        1      & x_{M,1} & \cdots & x_{M,N}
    \end{pmatrix}.
\end{equation}
Each row corresponds to the transposed augmented feature vector
${\bm{x}}_m^{\top}$ for sample $m = 1,\ldots,M$. The first column of
${\bf{X}}$ consists of ones and is introduced to incorporate the bias term
into the matrix formulation of the regression model. Similarly, we construct a
vector containing all response values:
\begin{equation}
    \bm{y} = (y_1, \ldots, y_M)^{\top}.
\end{equation}

With these definitions, the loss function of linear regression in
Eq.~\eqref{eq:lr:mse_loss} can be rewritten in matrix form as
\begin{equation}\label{eq:lr:lss}
    \begin{split}
        \mathcal{L}(\bm{\theta})
        &= \sum_{m=1}^{M} \left(\bm{\theta}^{\top}{\bm{x}}_m - y_m\right)^2 \\
        &= ({\bf{X}}\bm{\theta} - \bm{y})^{\top}
           ({\bf{X}}\bm{\theta} - \bm{y}).
    \end{split}
\end{equation}
Careful readers may notice that the loss function used here omits the factor
$\frac{1}{M}$ that appears in Eq.~\eqref{eq:lr:mse_loss}. This omission is
intentional: the constant factor does not affect the value of the parameter
vector that minimizes the loss. For simplicity of notation, we leave it out in
the derivation.

\begin{figure}[t]
    \centering\label{fig:max_min_saddle}
    \subfigure[Local Minimum]{\includegraphics[width=0.3\linewidth]{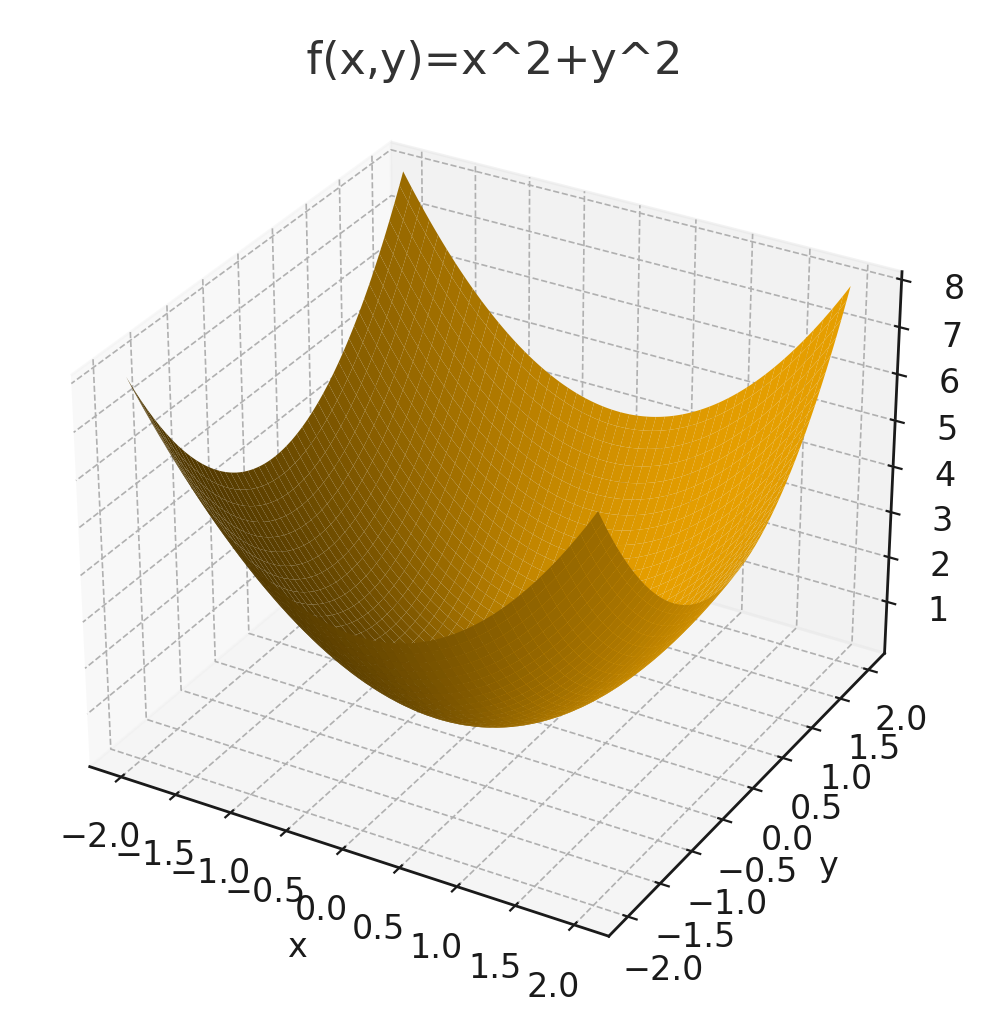}\label{fig:lr:local_mini}}~~
    \subfigure[Local Maximum]{\includegraphics[width=0.3\linewidth]{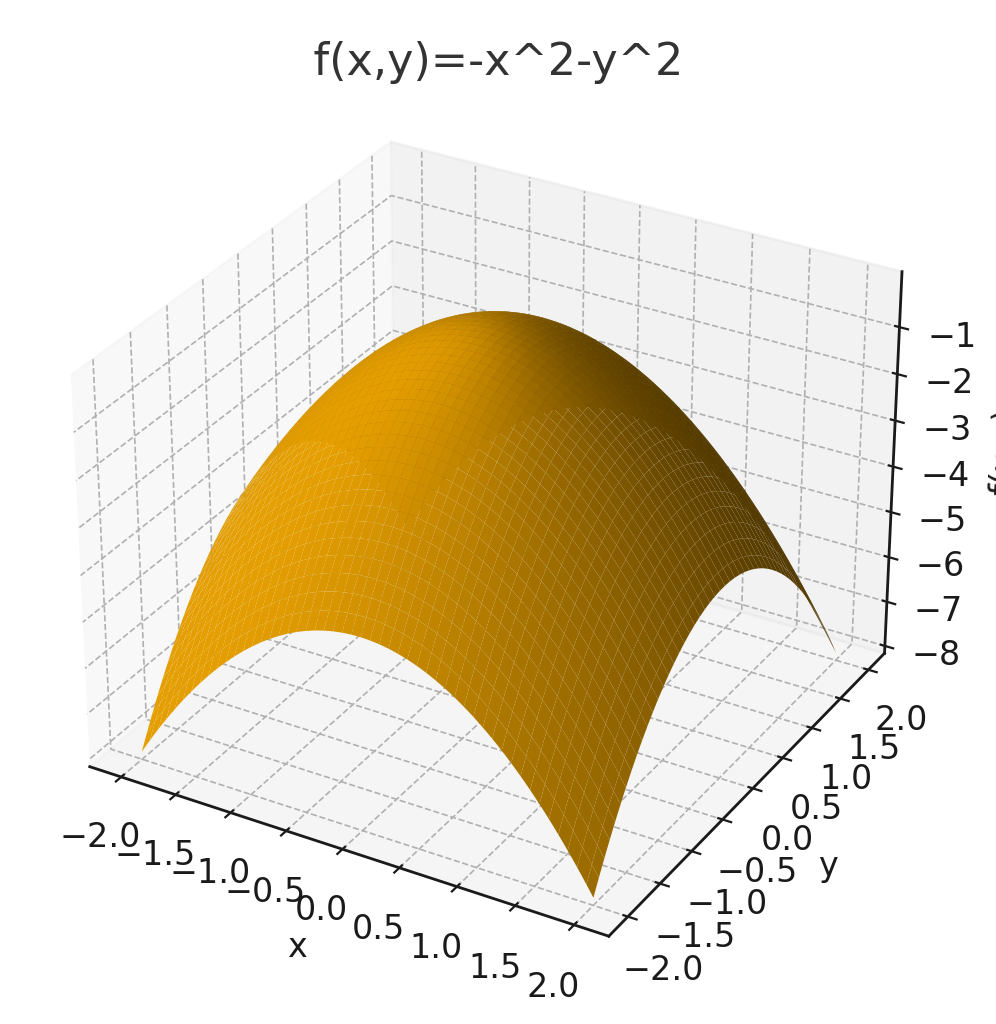}\label{fig:lr:local_maxi}}~~
    \subfigure[Saddle Point]{\includegraphics[width=0.3\linewidth]{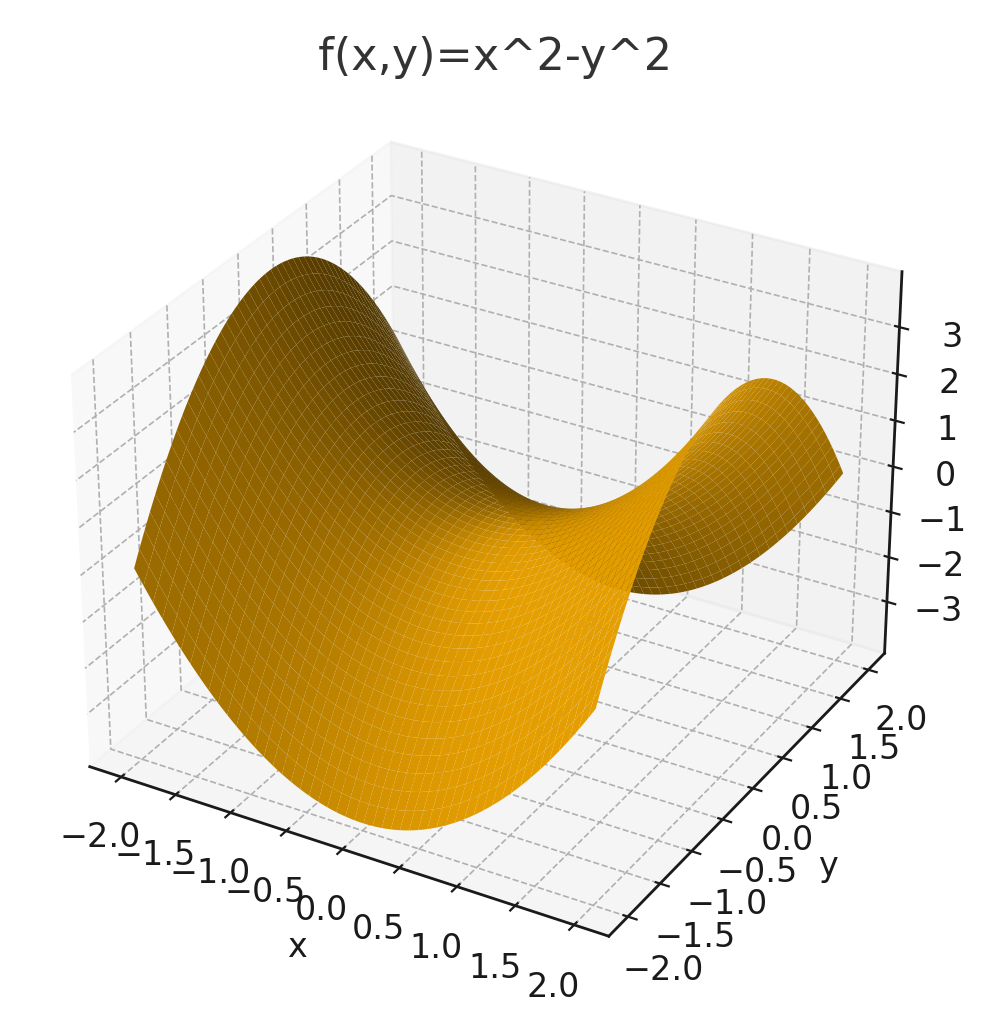}\label{fig:lr:saddle_point}}
    \caption{\small Geometric interpretation of extrema for multivariate functions: a positive definite Hessian corresponds to a bowl-shaped surface (local minimum), a negative definite Hessian corresponds to a dome-shaped surface (local maximum), and an indefinite Hessian produces a saddle-shaped surface (saddle point).}
\end{figure}

\paratitle{Preliminary Knowledge of Calculus.} Before calculating the minimum value of Eq.~\eqref{eq:lr:lss}, we need to review some basic knowledge of calculus first. From basic knowledge of calculus, given a real-valued function $y = f(x)$, the function attains an extremum at any point $x$ where its first derivative $\frac{\mathrm{d}f(x)}{\mathrm{d}x} = 0$. Moreover, if the second derivative $\frac{\mathrm{d}^2 f(x)}{\mathrm{d} x^2} > 0$, the extremum is a minimum; conversely,if the second derivative is strictly negative, the point is a maximum. For a multivariate real-valued function $y = f(x_1, x_2, \ldots, x_n)$, its first derivative becomes a vector, $\frac{\partial f(\bm{x})}{\partial \bm{x}}
= \left( \frac{\partial f(\bm{x})}{\partial x_1}, \, \frac{\partial f(\bm{x})}{\partial x_2}, \, \ldots, \, \frac{\partial f(\bm{x})}{\partial x_n}
\right)^{\top}$. A point $\bm{x}$ satisfying $\frac{\partial f(\bm{x})}{\partial \bm{x}} = \bm{0}$ is called a \emph{stationary point}. Such a point may correspond to one of three possibilities: a local minimum, a local maximum, or a saddle point. Determining which case occurs requires examining the second-order derivatives of $f(\bm{x})$. The second-order derivatives include terms such as
$\frac{\partial^2 f}{\partial x_i^2}$, $\frac{\partial^2 f}{\partial x_i \partial x_j}$, which construct a matrix known as the \emph{Hessian matrix}, defined as
\[
{\bf H}(f) =
\begin{pmatrix}
\frac{\partial^2 f}{\partial x_1^2} &
\frac{\partial^2 f}{\partial x_1 \partial x_2} &
\cdots &
\frac{\partial^2 f}{\partial x_1 \partial x_n} \\
\frac{\partial^2 f}{\partial x_2 \partial x_1} &
\frac{\partial^2 f}{\partial x_2^2} &
\cdots &
\frac{\partial^2 f}{\partial x_2 \partial x_n} \\
\vdots & \vdots & \ddots & \vdots \\
\frac{\partial^2 f}{\partial x_n \partial x_1} &
\frac{\partial^2 f}{\partial x_n \partial x_2} &
\cdots &
\frac{\partial^2 f}{\partial x_n^2}
\end{pmatrix}.
\]
If the Hessian is \emph{positive definite} at a stationary point, the function attains a local minimum there. If the Hessian is \emph{negative definite}, the stationary point is a local maximum. When the Hessian is neither positive nor negative definite, the stationary point is a saddle point, indicating that the function curves upward in some directions and downward in others (See Figure~\ref{fig:max_min_saddle}).

\paratitle{Calculating the Optimal Parameters.} Following the principle given in preliminary knowledge of calculus, we set the first derivative of $\mathcal{L}(\bm{\theta})$ to zero to obtain the candidate stationary point $\bm{\theta}$, and then examine the second derivative to verify whether $\mathcal{L}(\bm{\theta})$ indeed achieves a minimum at that point. To compute the first- and second-order derivatives of $\mathcal{L}(\bm{\theta})$ in Eq.~\eqref{eq:lr:lss}, we first expand the quadratic form:
\begin{equation}\label{eq:lr:extend}
    \begin{split}
    \mathcal{L}(\bm{\theta})& = \big({\bf{X}}\bm{\theta} - \bm{y}\big)^{\top}
           \big({\bf{X}}\bm{\theta} - \bm{y}\big) \\
    & = \bm{\theta}^{\top}{\bf{X}}^{\top}{\bf{X}}\bm{\theta}
      - \bm{\theta}^{\top}{\bf{X}}^{\top}\bm{y}
      - \bm{y}^{\top}{\bf{X}}\bm{\theta}
      + \bm{y}^{\top}\bm{y}.
    \end{split}
\end{equation}
Taking the derivative of $\mathcal{L}(\bm{\theta})$ with respect to $\bm{\theta}$ yields the first derivative of Eq.~\eqref{eq:lr:extend} as\footnote{
The derivations in Eqs.~\eqref{eq:lr:1order_derivation} and
\eqref{eq:lr:2order_derivation} make use of several basic rules from matrix
calculus, including:
\begin{equation*}
    \begin{cases}
          \dfrac{\partial (\bm{x}^{\top}\bm{a})}{\partial \bm{x}}
          = \dfrac{\partial (\bm{a}^{\top}\bm{x})}{\partial \bm{x}}
          = \bm{a},\\[0.8em]
          \dfrac{\partial (\bm{x}^{\top}\mathbf{A}\bm{x})}{\partial \bm{x}}
          = \mathbf{A}\bm{x} + \mathbf{A}^{\top}\bm{x},
    \end{cases}
\end{equation*}
where $\bm{a}$ and $\bm{x}$ are vectors, and $\mathbf{A}$ is a matrix.
}
\begin{equation}\label{eq:lr:1order_derivation}
    \begin{split}
        \frac{\partial \mathcal{L}(\bm{\theta})}{\partial \bm{\theta}}
        &= \frac{\partial}{\partial\bm{\theta}}
           \Big(\bm{\theta}^{\top}{\bf{X}}^{\top}{\bf{X}}\bm{\theta} \Big)
           - \frac{\partial}{\partial\bm{\theta}}
           \Big(\bm{\theta}^{\top}{\bf{X}}^{\top}\bm{y}\Big)
           - \frac{\partial}{\partial\bm{\theta}}
           \Big(\bm{y}^{\top}{\bf{X}}\bm{\theta}\Big)
           + \frac{\partial}{\partial\bm{\theta}}
           \Big(\bm{y}^{\top}\bm{y}\Big) \\
        &= \Big({\bf{X}}^{\top}{\bf{X}}\bm{\theta}
           + ({\bf{X}}^{\top}{\bf{X}})^{\top}\bm{\theta}\Big)
           - {\bf{X}}^{\top}\bm{y}
           - {\bf{X}}^{\top}\bm{y}
           + 0 \\
        &= 2{\bf{X}}^{\top}{\bf{X}}\bm{\theta}
           - 2{\bf{X}}^{\top}\bm{y}.
    \end{split}
\end{equation}

Setting the first-order derivative to zero yields the stationary point:
\begin{equation}~\label{eq:LR_close}
    \begin{split}
        \frac{\partial \mathcal{L}(\bm{\theta})}{\partial \bm{\theta}} = \bm{0}
        &\Rightarrow\quad
        2{\bf{X}}^{\top}{\bf{X}}\bm{\theta}
        - 2{\bf{X}}^{\top}\bm{y} = \bm{0} \\[0.4em]
        &\Rightarrow\quad
        {\bf{X}}^{\top}{\bf{X}}\bm{\theta}
        = {\bf{X}}^{\top}\bm{y} \\[0.4em]
        &\Rightarrow\quad
        \bm{\theta}^{*}
        = \left({\bf{X}}^{\top}{\bf{X}}\right)^{-1}
        {\bf{X}}^{\top}\bm{y},
    \end{split}
\end{equation}
where $\bm{0}$ denotes the zero vector having the same dimensionality as
${\partial \mathcal{L}(\bm{\theta})}/{\partial \bm{\theta}}$. At this point, we have obtained the stationary point of the linear regression
model, $\bm{\theta}^{*} = \left({\bf{X}}^{\top}{\bf{X}}\right)^{-1}{\bf{X}}^{\top}\bm{y}$. To determine whether this stationary point corresponds to a minimum of the loss
function, we examine the second-order derivative of $\mathcal{L}(\bm{\theta})$,
that is, the Hessian matrix:
\begin{equation}\label{eq:lr:2order_derivation}
    \frac{\partial^{2} \mathcal{L}(\bm{\theta})}{\partial \bm{\theta}^{2}}
    = \frac{\partial}{\partial \bm{\theta}}
      \big( 2{\bf{X}}^{\top}{\bf{X}}\bm{\theta} \big)
      - \frac{\partial}{\partial \bm{\theta}}
      \big( 2{\bf{X}}^{\top}\bm{y} \big)
    = 2{\bf{X}}^{\top}{\bf{X}}.
\end{equation}
The Hessian matrix of the loss function is therefore ${\bf H} = 2{\bf X}^{\top}{\bf X}$, which is always \emph{positive semidefinite}, since for any vector $\bm{v}$, we have
\[
\bm{v}^{\top}({\bf X}^{\top}{\bf X})\bm{v}
    = ({\bf X}\bm{v})^{\top}({\bf X}\bm{v})
    = \|{\bf X}\bm{v}\|^{2} \ge 0.
\]
Moreover, if the columns of $\bm{X}$ are linearly independent, then
${\bf X}^{\top}{\bf X}$ is \emph{positive definite}, because
\[
\bm{v}^{\top}({\bf X}^{\top}{\bf X})\bm{v}
    = \|{\bf X}\bm{v}\|^{2} = 0
    \quad\Longrightarrow\quad
    {\bf X}\bm{v} = \bm{0}
    \quad\Longrightarrow\quad
    \bm{v} = \bm{0}.
\]
In this case, the Hessian is positive definite, implying that the stationary
point $\bm{\theta}^{*}$ is indeed a \emph{global} minimum of the loss function.

Therefore, when $\bm{\theta} = \bm{\theta}^*$, the loss function
$\mathcal{L}(\bm{\theta})$ attains its minimum value. The resulting solution
obtained by the method of least squares is
\begin{equation}\label{eq:lr:theta_star}
    \bm{\theta}^* = \left({{\bf X}}^{\top}{{\bf X}}\right)^{-1}
      {{\bf X}}^{\top}\bm{y}.
\end{equation}
With the optimal parameter vector $\boldsymbol{\theta}^*$ obtained in
Eq.~\eqref{eq:lr:theta_star}, the prediction of linear regression for a new
input sample $\boldsymbol{x}_i$ is given by
$\boldsymbol{\theta}^* = (\mathbf{X}^{\top}\mathbf{X})^{-1}\mathbf{X}^{\top}\boldsymbol{y}$
into the prediction function yields
\begin{equation}
    \hat{y}_i
    = {\boldsymbol{\theta}^*}^{\top}\boldsymbol{x}_i
    = \left[
        (\mathbf{X}^{\top}\mathbf{X})^{-1}\mathbf{X}^{\top}\boldsymbol{y}
      \right]^{\top}
      \boldsymbol{x}_i.
\end{equation}
Since $(AB)^{\top} = B^{\top}A^{\top}$, this can be further written as
\begin{equation}
    \hat{y}_i
    = \boldsymbol{y}^{\top}\mathbf{X}
      (\mathbf{X}^{\top}\mathbf{X})^{-1}
      \boldsymbol{x}_i.
\end{equation}

\paratitle{Limitations of the Least Squares Method.}
The closed-form solution in Eq.~\eqref{eq:lr:theta_star} requires the data
matrix $\mathbf{X}$ to be of full column rank. This condition imposes two
important requirements on the dataset.
First, the columns of $\mathbf{X}$ must be linearly independent, meaning that
the features must not exhibit multicollinearity. Each feature must contribute
unique information rather than being expressible as a linear combination of
others.
Second, full column rank implies that $M \ge N+1$, \ie the number of samples
must be at least as large as the number of parameters to be estimated.
Otherwise, the model cannot uniquely determine the contribution of each
feature, and $\mathbf{X}^{\top}\mathbf{X}$ becomes singular.

Even when these conditions are met, the least squares method still suffers from
several practical limitations. Computing the analytic solution requires forming $\mathbf{X}^{\top}\mathbf{X}$ and inverting it, an operation with time complexity $O(N^{3})$ and memory complexity $O(N^{2})$. This becomes prohibitive in high-dimensional applications such as natural language processing or bioinformatics. When the number of samples is also large, even constructing
$\mathbf{X}^{\top}\mathbf{X}$ is computationally expensive. Moreover, if the dataset violates either of the two rank conditions -- insufficient
sample size or multicollinearity -- the matrix $\mathbf{X}^{\top}\mathbf{X}$ becomes non-invertible, and the closed-form least squares solution does not exist.

These limitations motivate the use of more scalable and robust alternatives,
such as gradient-based optimization or regularization methods, which avoid
explicit matrix inversion and remain effective in high-dimensional or
ill-conditioned settings.

Due to these limitations, the least squares method is often impractical for
large-scale or high-dimensional data mining problems. In such scenarios,
iterative optimization methods -- most notably gradient descent -- are typically used for parameter estimation. These methods will be introduced in the next section.

\subsection{Parameter Estimation for Linear Regression: Gradient Descent}~\label{know:lr:gd}

{\em Gradient Descent} is a kind of iterative optimization method. Given a loss function $\mathcal{L}(\bm{\theta})$, gradient descent starts from a randomly initialized parameter vector $\bm{\theta}^{(0)}$ and iteratively updates $\bm{\theta}$ in the direction of the \emph{negative gradient} of
$\mathcal{L}(\bm{\theta})$, \ie $-\frac{\partial \mathcal{L}(\bm{\theta})}{\partial \bm{\theta}}$. The update process continues until the value of $\mathcal{L}(\bm{\theta})$ no longer decreases significantly, indicating that the algorithm has converged. Since the negative gradient points in the direction of the steepest descent of the objective function, gradient descent is efficient methods for solving unconstrained optimization problems.

\paratitle{Principle of Gradient Descent.} We now prove the basic principle underlying gradient descent: \emph{the negative gradient direction is the direction of steepest descent} for a differentiable function.

\begin{theorem}[Steepest Descent Direction]\label{thm:steepest_descent}
Let $f:\mathbb{R}^n \to \mathbb{R}$ be differentiable at a point
$\bm{x}\in\mathbb{R}^n$, and assume that $\nabla f(\bm{x}) \neq \bm{0}$.
Among all directions $\bm{d}$ with unit norm $\|\bm{d}\|_2 = 1$, the directional
derivative of $f$ at $\bm{x}$ in the direction $\bm{d}$, is defined as
\begin{equation}\label{eq:directional_derivative}
  D f(\bm{x}; \bm{d}) = \nabla f(\bm{x})^{\top}\bm{d},
\end{equation}
which is rate of change of the function $f$ at the point $\bm{x}$ in the direction $\bm{d}$. The directional derivative $D f(\bm{x}; \bm{d})$ is minimized when
\[
\bm{d}^* = -\frac{\nabla f(\bm{x})}{\|\nabla f(\bm{x})\|_2},
\]
and the minimum value of the directional derivative is
\[
D f(\bm{x}; \bm{d}^*) = -\|\nabla f(\bm{x})\|_2.
\]
Therefore, the negative gradient direction $-\nabla f(\bm{x})$ is the direction
of steepest local decrease of $f$ at $\bm{x}$.
\end{theorem}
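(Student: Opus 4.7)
The plan is to reduce the problem to a single application of the Cauchy--Schwarz inequality. The theorem statement already supplies the directional derivative formula $Df(\bm{x}; \bm{d}) = \nabla f(\bm{x})^{\top}\bm{d}$; strictly speaking this follows from differentiability via the first-order Taylor expansion $f(\bm{x}+t\bm{d}) = f(\bm{x}) + t\,\nabla f(\bm{x})^{\top}\bm{d} + o(t)$ as $t\to 0^{+}$, which I would recall in a single line. The optimization then becomes: minimize the scalar quantity $\nabla f(\bm{x})^{\top}\bm{d}$ over the unit sphere $\{\bm{d}\in\mathbb{R}^{n} : \|\bm{d}\|_{2} = 1\}$.

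The first step is to establish a lower bound. By Cauchy--Schwarz, $\lvert \nabla f(\bm{x})^{\top}\bm{d}\rvert \le \|\nabla f(\bm{x})\|_{2}\|\bm{d}\|_{2}$, and since $\|\bm{d}\|_{2} = 1$ this collapses to the two-sided bound $-\|\nabla f(\bm{x})\|_{2} \le \nabla f(\bm{x})^{\top}\bm{d} \le \|\nabla f(\bm{x})\|_{2}$. The second step is to verify that the lower bound is attained: substituting the candidate $\bm{d}^{*} = -\nabla f(\bm{x})/\|\nabla f(\bm{x})\|_{2}$ (well-defined because $\nabla f(\bm{x}) \neq \bm{0}$, and of unit norm by construction) into the inner product yields exactly $-\|\nabla f(\bm{x})\|_{2}$, matching the bound and finishing the minimization. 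An optional uniqueness remark follows from the equality case of Cauchy--Schwarz: equality forces $\bm{d}$ and $\nabla f(\bm{x})$ to be collinear, and the negative sign of the inner product pins down the antiparallel orientation, so $\bm{d}^{*}$ is the unique minimizer.

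There is essentially no main obstacle; the whole argument is a two-line consequence of Cauchy--Schwarz, which is precisely why this theorem serves as the canonical motivation for gradient descent. Alternative routes---Lagrange multipliers on the constraint $\|\bm{d}\|_{2}^{2} = 1$, or writing $\nabla f(\bm{x})^{\top}\bm{d} = \|\nabla f(\bm{x})\|_{2}\cos\theta$ and minimizing in the angle between $\bm{d}$ and $\nabla f(\bm{x})$---reach the same conclusion but introduce extra notation without extra content. The one point that deserves a brief aside is why the hypothesis $\nabla f(\bm{x}) \neq \bm{0}$ is needed: otherwise the normalizing denominator in $\bm{d}^{*}$ is undefined, and in fact every unit direction yields directional derivative zero, so no strict direction of steepest descent exists. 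This observation also explains, heuristically, why gradient descent stalls at stationary points.
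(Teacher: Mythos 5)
Your proposal is correct and follows essentially the same route as the paper's proof: a single application of the Cauchy--Schwarz inequality to bound $\nabla f(\bm{x})^{\top}\bm{d}$ from below by $-\|\nabla f(\bm{x})\|_2$, followed by direct substitution of $\bm{d}^{*}=-\nabla f(\bm{x})/\|\nabla f(\bm{x})\|_2$ to show the bound is attained. Your added remarks on uniqueness and on why the hypothesis $\nabla f(\bm{x})\neq\bm{0}$ is needed are correct but not required.
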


\begin{proof}
We restrict attention to directions with unit norm, i.e., $\|\bm{d}\|_2 = 1$.
By the Cauchy--Schwarz inequality, we have
\[
\nabla f(\bm{x})^{\top}\bm{d}
\ge -\|\nabla f(\bm{x})\|_2 \,\|\bm{d}\|_2
= -\|\nabla f(\bm{x})\|_2,
\]
and
\[
\nabla f(\bm{x})^{\top}\bm{d}
\le \|\nabla f(\bm{x})\|_2 \,\|\bm{d}\|_2
= \|\nabla f(\bm{x})\|_2.
\]
Thus, among all unit vectors $\bm{d}$, the smallest value of the
directional derivative is $-\|\nabla f(\bm{x})\|_2$.

Moreover, equality in the Cauchy--Schwarz inequality holds if and only if
$\bm{d}$ is parallel to $\nabla f(\bm{x})$ but with opposite direction, i.e.,
\[
\bm{d}^* = -\frac{\nabla f(\bm{x})}{\|\nabla f(\bm{x})\|_2}.
\]
Substituting this into Eq.~\eqref{eq:directional_derivative} for the directional derivative gives
\[
D f(\bm{x}; \bm{d}^*)
= \nabla f(\bm{x})^{\top}\left(-\frac{\nabla f(\bm{x})}{\|\nabla f(\bm{x})\|_2}\right)
= -\frac{\|\nabla f(\bm{x})\|_2^2}{\|\nabla f(\bm{x})\|_2}
= -\|\nabla f(\bm{x})\|_2.
\]
Hence, the negative gradient direction $\bm{d}^*$ yields the steepest local
descent of the function, and no other unit direction can produce a larger
instantaneous decrease in $f$.
\end{proof}

The steepest descent property can be made more explicit via a first-order
Taylor expansion. For any unit direction $\bm{d}$ and small step size
$\alpha > 0$, we have
\[
f(\bm{x} + \alpha \bm{d})
= f(\bm{x}) + \alpha \nabla f(\bm{x})^{\top}\bm{d} + o(\alpha),
\]
where $o(\alpha)$ denotes higher-order terms that vanish faster than $\alpha$
as $\alpha \to 0$. For sufficiently small $\alpha$, the dominant term in the
change of $f$ is
\[
f(\bm{x} + \alpha \bm{d}) - f(\bm{x})
\approx \alpha \nabla f(\bm{x})^{\top}\bm{d}.
\]
Thus, to decrease $f(\bm{x})$ as much as possible for a small step size
$\alpha$, we should choose $\bm{d}$ to minimize
$\nabla f(\bm{x})^{\top}\bm{d}$ under the constraint $\|\bm{d}\|_2 = 1$.
By Theorem~\ref{thm:steepest_descent}, this is achieved by
\[
\bm{d}^* = -\frac{\nabla f(\bm{x})}{\|\nabla f(\bm{x})\|_2},
\]
which explains why gradient descent updates the parameters in the direction of
the negative gradient.

\paratitle{Implementation of Gradient Descent.} Figure~\ref{fig:lr:gd} illustrates a simple example of how the gradient descent algorithm operates. Consider a quadratic loss function $L(\theta) = (\theta - 1)^2$.
When $\theta < 1$, the derivative is $L'(\theta) = 2(\theta - 1) < 0$, meaning that $L(\theta)$ decreases as $\theta$ moves in the positive direction---that is, along the direction of the negative gradient. Similarly, when $\theta > 1$, the derivative $L'(\theta) = 2(\theta - 1) > 0$, so decreasing $L(\theta)$ requires moving $\theta$ in the negative direction, which is again the negative gradient direction. When $L'(\theta) = 0$, the derivative provides no information about where to move next, and the algorithm converges to a stationary point. Gradient descent leverages precisely this principle: at each iteration, it updates the parameter in the direction of the negative gradient so as to progressively reduce $L(\theta)$ and eventually reach the value of $\theta$ that minimizes the loss.

\begin{figure}[t]
    \centering
    \includegraphics[width=0.5\columnwidth]{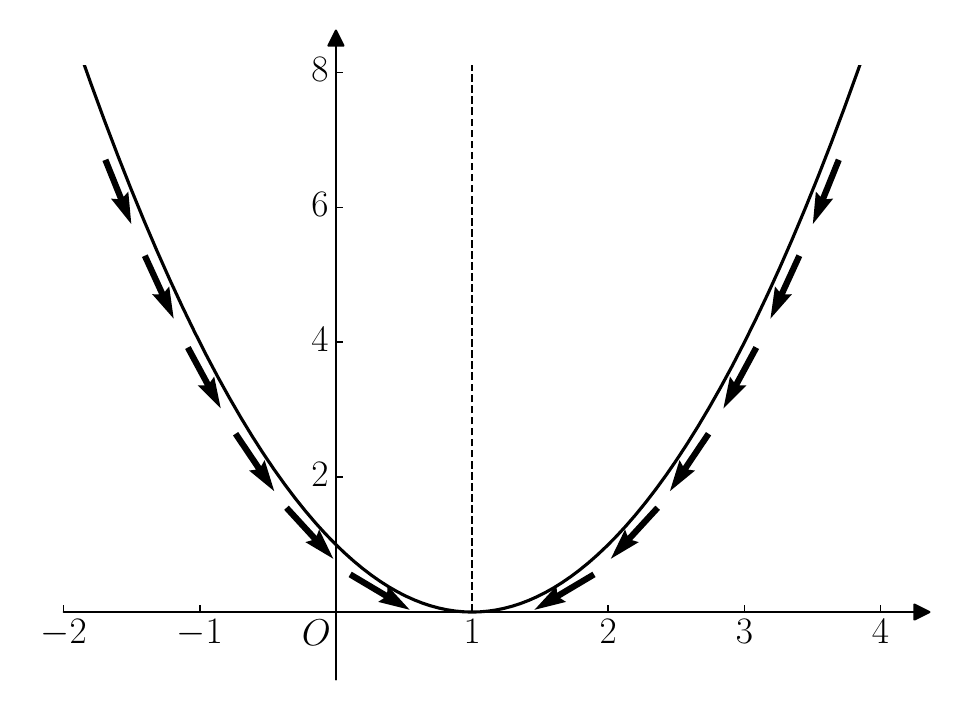}
    \caption{\small An illustrative example of gradient descent.}
    \label{fig:lr:gd}
\end{figure}

We now present a formal description of the gradient descent algorithm. Consider
a differentiable (loss) function $\mathcal{L} : \mathbb{R}^{N+1} \rightarrow \mathbb{R}$ and the unconstrained optimization problem
\begin{equation*}
    \mathop{\min}_{\bm{\theta}} \mathcal{L}(\bm{\theta}).
\end{equation*}
Gradient descent computes an approximate minimizer $\bm{\theta}^*$ through the
following three steps:

\begin{itemize}
    \item \textbf{Step 1 (Initialization).} Choose an initial parameter vector $\bm{\theta}^{(0)}$ randomly. Clearly,     $\mathcal{L}(\bm{\theta}^{(0)}) \geq \mathcal{L}(\bm{\theta}^*)$.

    \item \textbf{Step 2 (Iterative update).} Update the parameter vector in the direction of the \emph{negative gradient} of the loss. At iteration $t$, the update rule is     \begin{equation}\label{eq:lr:gd_updt}
        \bm{\theta}^{(t+1)}
        = \bm{\theta}^{(t)}
        - \eta \, \frac{\partial \mathcal{L}(\bm{\theta}^{(t)})}{\partial \bm{\theta}}.
        \end{equation}
        Here, $\eta > 0$ is the \emph{learning rate}, which is a preset hyperparameter.

    \item \textbf{Step 3 (Stopping criterion).} Choose a small constant $\delta > 0$. The iteration stops when
        \[
            \mathcal{L}(\bm{\theta}^{(t)}) -
            \mathcal{L}(\bm{\theta}^{(t+1)}) < \delta,
        \]
        in which case we set $\bm{\theta}^* \leftarrow \bm{\theta}^{(t)}$.         Alternatively, one may stop after a predefined number of iterations, or use a combination of both criteria.
\end{itemize}
The basic procedure of gradient descent is summarized in
Algorithm~\ref{alg:lr:gd}.

\begin{algorithm}[t]
    \caption{Gradient Descent Algorithm}
    \label{alg:lr:gd}
    \begin{algorithmic}[1]
        \Require
            Objective function $\mathcal{L}(\bm{\theta})$, learning rate $\eta$
        \Ensure
            Estimated optimal parameter $\bm{\theta}^*$
        \State Randomly initialize $\bm{\theta}^{(0)}$
        \State $t \leftarrow 0$
        \While{stopping criterion is not satisfied}
            \State Compute the gradient
            {\small $\partial {\mathcal{L}}(\bm{\theta}^{(t)}) / \partial \bm{\theta}$}
            \State Update parameter: {\small $\bm{\theta}^{(t+1)} \leftarrow
                \bm{\theta}^{(t)} - \eta \,
                \frac{\partial \mathcal{L}(\bm{\theta}^{(t)})}{\partial \bm{\theta}}$}

            \State $t \leftarrow t + 1$
        \EndWhile
        \State $\bm{\theta}^* \leftarrow \bm{\theta}^{(t)}$
    \end{algorithmic}
\end{algorithm}

\begin{figure}[t]
    \centering
    \includegraphics[width=0.85\columnwidth]{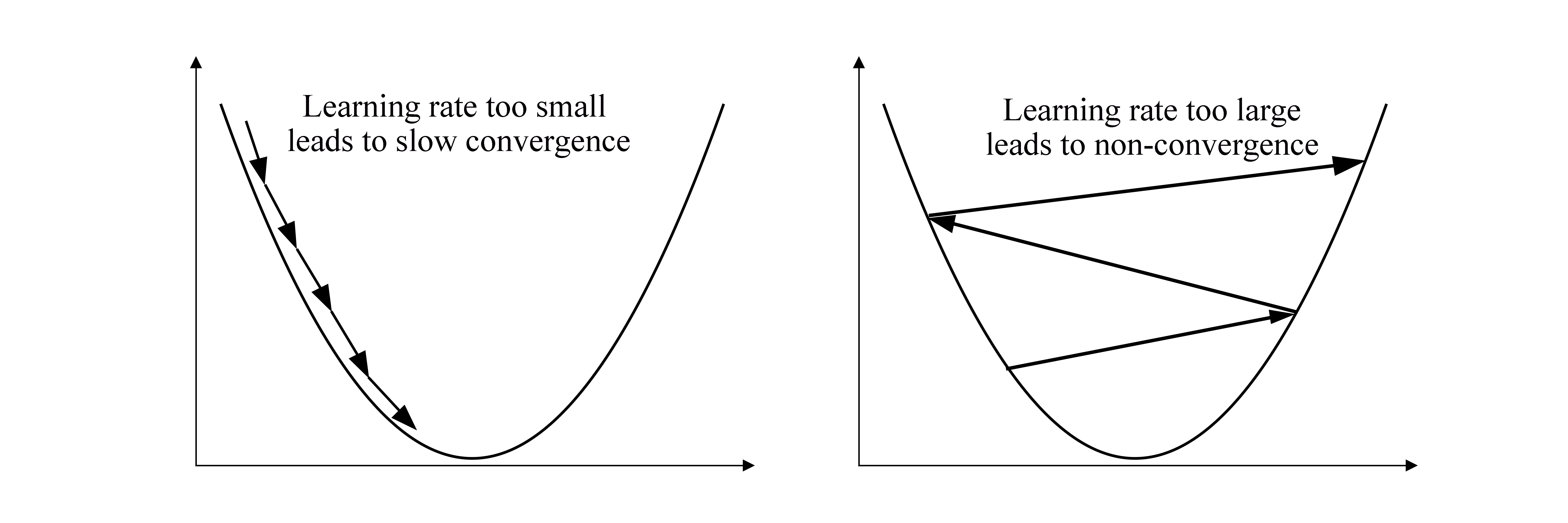}
    \caption{\small Illustration of the effects of overly small or overly large learning rates.}
    \label{fig:lr:learning_rate}
\end{figure}

In gradient descent, the learning rate $\eta$ is a crucial hyperparameter, as it
directly determines both the efficiency and the quality of convergence. As
illustrated in Fig.~\ref{fig:lr:learning_rate}, if $\eta$ is set too small, the
updates of $\bm{\theta}^{(t)}$ will move slowly, resulting in extremely slow
convergence. Conversely, if $\eta$ is set too large, the updates may overshoot
the optimum and oscillate around the minimum, preventing the algorithm from
converging at all. In practical applications, adaptive strategies are often designed to adjust $\eta$ dynamically during training. A common approach is to begin with a relatively large learning rate and reduce it once the loss stops decreasing. Training may terminate after several reductions in the learning rate. Classical scheduling methods include step decay~\citep{Ge2019StepDecay}, exponential decay~\citep{Bengio2012Practical}, and cosine annealing~\citep{Loshchilov2017SGDR}, among others.

\paratitle{Gradient Computation for Linear Regression.} To compute the gradient with respect to $\bm{\theta}$, we expand the loss for a single sample:
\[
\mathcal{L}_m(\bm{\theta})=\frac{1}{2}(\hat{y}_m - y_m)^2
= \frac{1}{2}(\bm{\theta}^\top \bm{x}_m - y_m)^2.
\]
Differentiating with respect to $\bm{\theta}$ and applying the chain rule gives
\[
\frac{\partial \mathcal{L}_m}{\partial \bm{\theta}}
= (\bm{\theta}^\top \bm{x}_m - y_m)\,\frac{\partial}{\partial \bm{\theta}}(\bm{\theta}^\top \bm{x}_m).
\]
Since the derivative of a linear form is simply the input vector, we have
\[
\frac{\partial}{\partial \bm{\theta}}(\bm{\theta}^\top \bm{x}_m)=\bm{x}_m.
\]
Thus,
\[
\frac{\partial \mathcal{L}_m}{\partial \bm{\theta}}
= (\hat{y}_m - y_m)\bm{x}_m.
\]
This result shows that each training sample contributes to the gradient through the product of its prediction error $(\hat{y}_m - y_m)$ and its input features $\bm{x}_m$. Samples with larger errors or larger feature values exert a stronger influence on the parameter update.

\paratitle{Different Gradient Computation Strategies.} Given a dataset $\mathcal{D}=\{(\bm{x}_1, y_1),$ $(\bm{x}_2, y_2),$ $\ldots,$ $(\bm{x}_M, y_M)\}$ containing $M$ samples, different strategies can be used to compute the gradient depending on the application scenario. There are three representative computational methods.
\begin{itemize}
  \item \textbf{Batch Gradient Descent (BGD).} In batch gradient descent, each parameter update uses the gradients computed from \emph{all} samples in the dataset. For the linear regression loss function in Eq.~\eqref{eq:lr:mse_loss}, the gradient of $\mathcal{L}(\bm{\theta}^{(t)})$ at iteration $t$ is
    \begin{equation}\label{eq:lr:bgd}
        \frac{\partial \mathcal{L}(\bm{\theta}^{(t)})}{\partial \bm{\theta}^{(t)}}
        = \frac{1}{M}\sum_{m=1}^{M}
        \left( \bm{\theta}^{(t)\top} {\bm{x}}_m - y_m \right)
        {\bm{x}}_m.
    \end{equation}
    The advantage of BGD is that the gradient estimate is highly accurate; indeed, \eqref{eq:lr:bgd} provides an unbiased estimate of the true gradient, with its variance decreasing as the number of samples increases. The drawback is that each iteration requires a full pass over all samples, which becomes expensive for large datasets (millions or even billions of samples).
  \item \textbf{Stochastic Gradient Descent (SGD).} In many online learning scenarios, data arrive one sample at a time. In this case, each parameter update uses only a single sample:
    \begin{equation}
    \frac{\partial \mathcal{L}(\bm{\theta}^{(t)})}{\partial \bm{\theta}}
    =
    \left(\bm{\theta}^{(t)\top} {\bm{x}}_m - y_m \right) {\bm{x}}_m,
    \quad m \in \{1,\ldots, M\}.
    \end{equation}
    SGD is computationally efficient and well suited for large-scale or online settings. However, gradients computed from single samples are noisy and may fluctuate heavily, especially near convergence, making it difficult for the algorithm to settle at the minimum.
  \item \textbf{Mini-Batch Gradient Descent (MBGD).} Mini-batch gradient descent strikes a balance between BGD and SGD. At each iteration, a small batch of samples (after shuffling the dataset) is randomly selected to compute the gradient. Given a dataset $\mathcal{D}$, in the $t$-th iteration we randomly sample a mini-batch of size $B$, \ie $\mathcal{B}^{(t)}=\{(\bm{x}_{m_1},y_{m_1}),\ldots,(\bm{x}_{m_B},y_{m_B})\}$. The parameter update rule of \emph{mini-batch gradient descent} is
        \[ \bm{\theta}^{(t+1)} = \bm{\theta}^{(t)} - \eta\cdot \frac{1}{B} \sum_{m\in\mathcal{B}^{(t)}} \left(\bm{\theta}^{(t)\top}{\bm{x}}_m - y_m\right){\bm{x}}_m.
        \]
      This approach reduces gradient variance while maintaining computational efficiency, making it the most widely used strategy in deep learning and large-scale machine learning.
\end{itemize}
Figure~\ref{fig:lr:three_gd_method} shows the iterative behaviors of the three gradient-descent algorithms.

\begin{figure}[t]
    \centering
    \includegraphics[width=0.6\columnwidth]{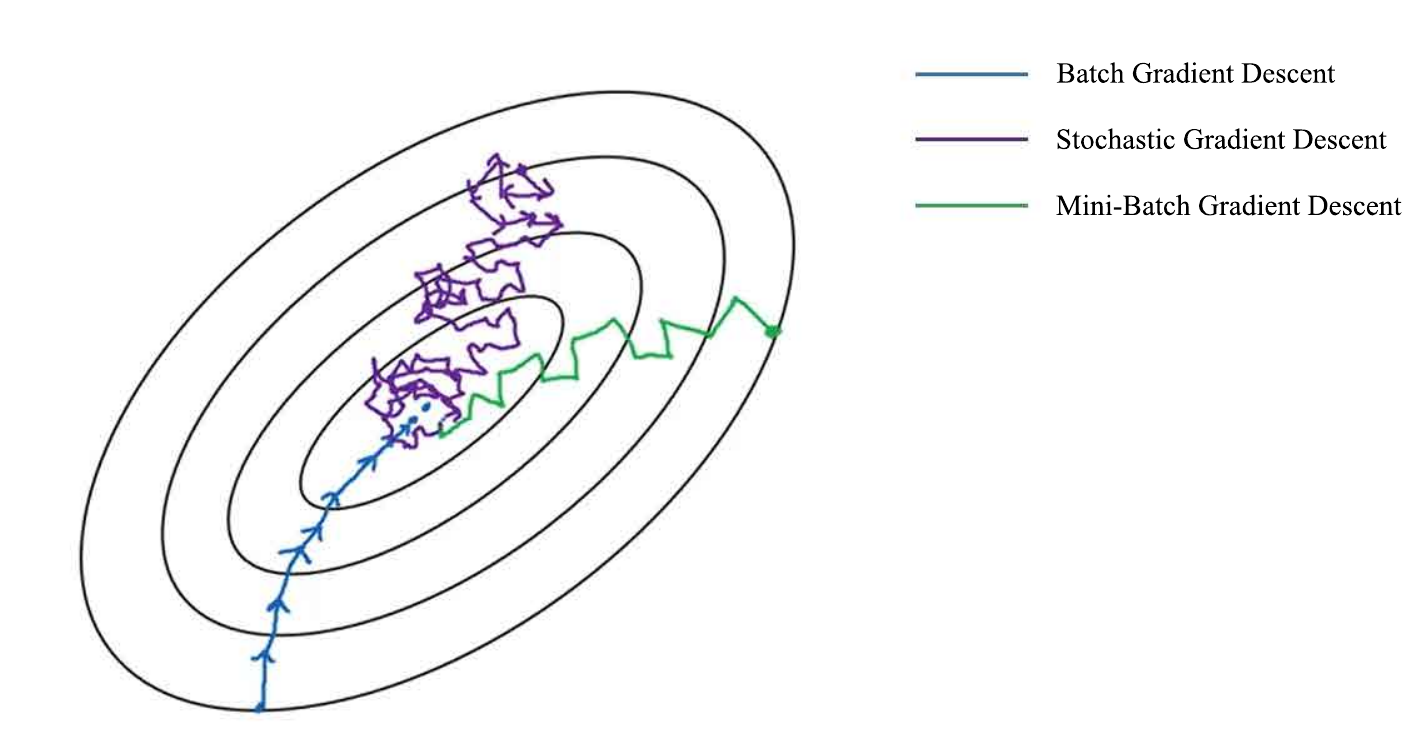}
    \caption{\small Comparison of convergence paths for batch gradient descent, stochastic gradient descent, and mini-batch gradient descent. Figure source:~\cite{NgMiniBatchGD}}.
    \label{fig:lr:three_gd_method}
\end{figure}

\subsection{Extended Discussion on Gradient Descent}

Gradient descent is one of the simplest and most classical algorithms for solving unconstrained optimization problems. Its applicability extends far beyond linear regression. For example, the back-propagation (BP) algorithm used to train deep learning models is essentially a variant of gradient descent. In this subsection, we step beyond the context of linear regression and provide several extended discussions related to gradient descent and its broader applications.

\paratitle{Local Optimality Issues.} Gradient descent can be applied to a wide variety of optimization problems, providing a practical solution even when no analytical or closed-form solution exists. However, it is important to note that gradient descent is guaranteed to converge to the global optimum \emph{only when} the objective function $\mathcal{L}(\bm{\theta})$ is convex. When $\mathcal{L}(\bm{\theta})$ is a highly non-convex function with an irregular landscape, gradient descent may converge to a \emph{local} optimum rather than the global one. For linear regression, the loss function in  Eq.~\eqref{eq:lr:mse_loss} is a convex quadratic function. Therefore, gradient descent is guaranteed to find the global minimizer in this case.

\begin{figure}[t]
    \centering
    \includegraphics[width=0.75\columnwidth]{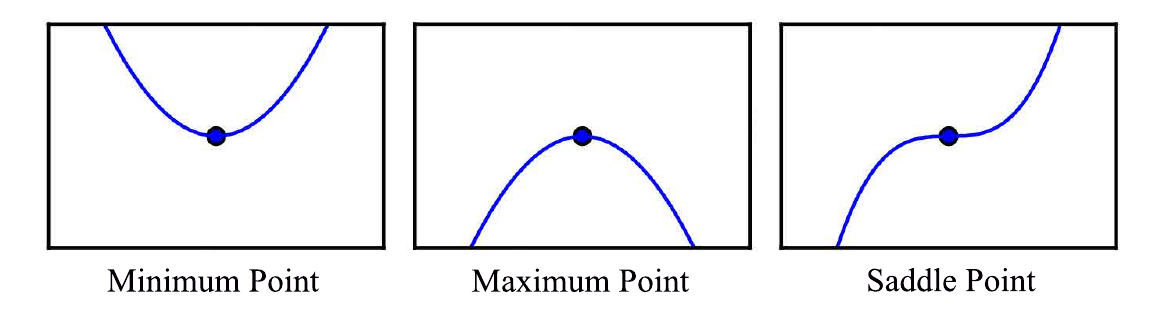}
    \caption{\small A point with zero gradient may correspond to a minimum, a maximum, or a saddle point.}
    \label{fig:lr:saddle}
\end{figure}

In gradient descent, the algorithm converges to points where the gradient is
zero, known as \textbf{critical points}. As illustrated in Fig.~\ref{fig:lr:saddle}, a critical point may correspond to a local minimum or a local maximum. However, not every critical point is an extremum; some are \textbf{saddle points}, where the function decreases in some directions and
increases in others.

In many data mining and machine learning applications, the objective function
often contains numerous local minima, as well as a large number of saddle
points located in flat or nearly flat regions, as shown in Fig.~\ref{fig:lr:1d}. This phenomenon becomes even more pronounced in high-dimensional parameter spaces, as illustrated in Fig.~\ref{fig:lr:2d}. To prevent gradient-based optimization from being trapped in poor-quality local minima or flat saddle regions, a wide range of improved optimization algorithms have been proposed. Representative methods include Momentum~\cite{rumelhart1986learning}, AdaGrad~\cite{duchi2011adaptive}, RMSProp~\cite{tieleman2012lecture}, and
Adam~\cite{kingma2015adam}.

\begin{figure}[t]
    \centering
    \subfigure[\small Different types of local minima.]{\includegraphics[width=0.45\linewidth]{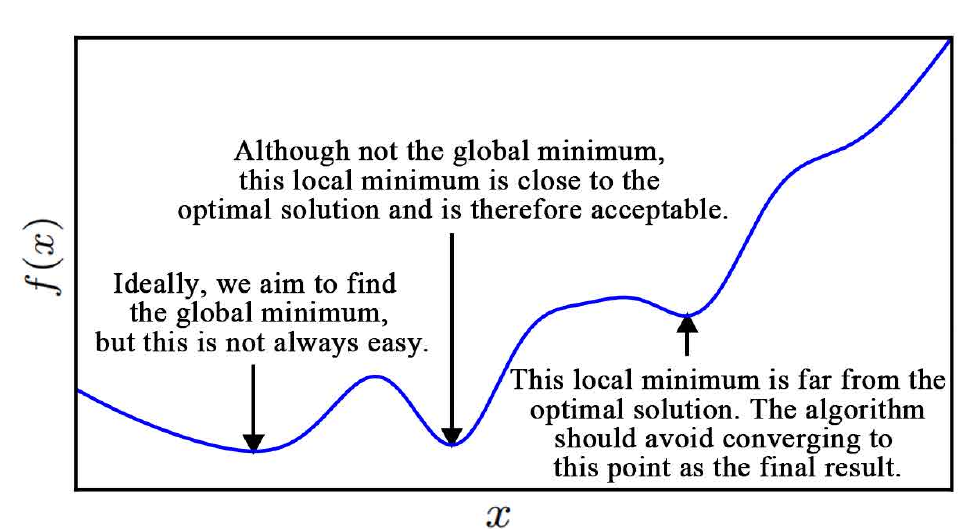}\label{fig:lr:1d}}~~
    \subfigure[\small Gradient descent trapped in a local minimum in a 2D landscape. Figure source:~\cite{NgGradientDescent}.]{\includegraphics[width=0.52\linewidth]{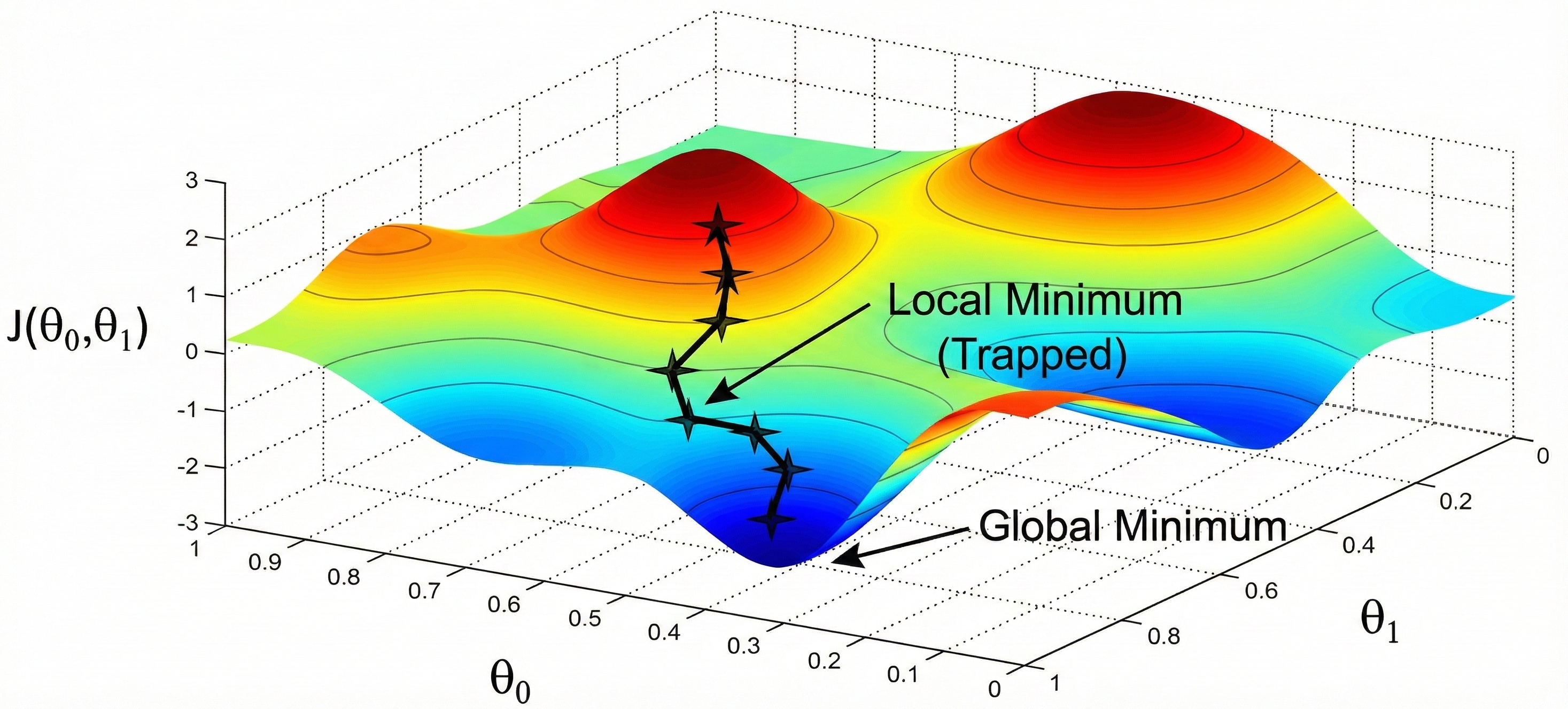}\label{fig:lr:2d}}
    \caption{\small  Illustration of the local minimum problem}
\end{figure}

\paratitle{Nondifferentiable Loss Functions.} Gradient descent requires the loss function to be differentiable; otherwise, the gradient $\frac{\partial \mathcal{L}}{\partial \bm{\theta}}$ cannot be computed. However, in many applications the loss function may contain nondifferentiable points, such as those arising from piecewise-defined functions. In practice, these nondifferentiable points can usually be ignored during training, because with a large number of samples the probability that the gradient evaluation falls exactly on such a point is extremely small. If the function is one-sided differentiable (left- or right-differentiable), the corresponding one-sided derivative may also be used to perform the gradient update.

For loss functions that contain nondifferentiable components, the \textbf{coordinate descent method} is an effective alternative optimization
technique. Unlike gradient descent, which updates all components of the
parameter vector simultaneously, coordinate descent updates only one component
of $\bm{\theta} = (\theta_1, \theta_2, \ldots, \theta_n)$ at each iteration
while keeping all other components fixed. Given a loss function $\mathcal{L}(\bm{\theta})$, during the $t$-th iteration,
coordinate descent updates $\theta_1$ through $\theta_n$ sequentially:
\begin{equation*}
    \begin{aligned}
        \theta_1^{(t+1)} &= \theta_1^{(t)} - \eta\, \frac{\partial \mathcal{L}(\theta_1^{(t)})}{\partial \theta_1},\\
        \theta_2^{(t+1)} &= \theta_2^{(t)} - \eta\, \frac{\partial \mathcal{L}(\theta_2^{(t)})}{\partial \theta_2},\\
        &\;\;\vdots\\
        \theta_n^{(t+1)} &= \theta_n^{(t)} - \eta\, \frac{\partial \mathcal{L}(\theta_n^{(t)})}{\partial \theta_n}.
    \end{aligned}
\end{equation*}
Here, $\mathcal{L}(\theta_k^{(t)})$ denotes the loss treated as a function of
$\theta_k$ only, with all other coordinates regarded as constants. Thus,
$\mathcal{L}(\bm{\theta})$ effectively reduces to a single-variable function in
$\theta_k$ during the update.

When the partial derivative with respect to a coordinate does not exist (for
example, when the loss function is piecewise-defined or contains nondifferentiable points), the update direction can be determined by comparing function values in a small neighborhood. For instance, for coordinate $\theta_n$, the update rule may be defined as:
\begin{equation*}
\theta_n^{(t+1)} =
\begin{cases}
\theta_n^{(t)}+\eta, &
\text{if }\min\big(\mathcal{L}(\theta_n^{(t)}),\;\mathcal{L}(\theta_n^{(t)}-\eta)\big)
> \mathcal{L}(\theta_n^{(t)}+\eta),\\[6pt]
\theta_n^{(t)}-\eta, &
\text{if }\min\big(\mathcal{L}(\theta_n^{(t)}),\;\mathcal{L}(\theta_n^{(t)}+\eta)\big)
> \mathcal{L}(\theta_n^{(t)}-\eta).
\end{cases}
\end{equation*}
This local comparison strategy avoids the need for derivative information,
making coordinate descent particularly suitable for nondifferentiable losses or
functions with sharp corners.

\newpage

\section{Logistic Regression and Softmax Regression}

Linear regression is suitable for handling continuous label variables,
\ie $y \in \mathbb{R}$, or discrete variables that can be reasonably
approximated as continuous (for example, age, which---although a natural
number---can be treated as continuous and remains meaningful even when
extended to non-integer values). However, in many real-world applications,
the label is often \textbf{nominal}, meaning that different values merely
indicate categorical distinctions without implying any order or magnitude
(supporting only equality and inequality operations, such as $=$ or $\neq$).
Examples include gender and product category. For such cases, alternative
regression models are required. In this section, we introduce two regression models specifically designed for discrete label variables:
\begin{enumerate}
  \item \textbf{Logistic Regression} for binary labels, also called {\em Logit Regression} (Here, ``Logit'' is not an abbreviation of ``Logistic.'').
  \item \textbf{Multinomial Logistic Regression} for multi-class labels, also known as {\em Softmax Regression}.
\end{enumerate}

\subsection{Elements of Logistic Regression}

\paratitle{Regression Function.}
Suppose we have collected a binary classification dataset
\[
    \mathcal{D}=\{(\bm{x}_1, y_1), (\bm{x}_2, y_2), \ldots, (\bm{x}_M, y_M)\},
\]
where $\bm{x}_m \in \mathbb{R}^{N}$ and $y_m \in \{0,1\}$ for
$m=1,2,\ldots,M$. For binary classification, two common label encodings are used: $y_m \in \{0,1\}$ or $y_m \in \{-1,1\}$.
Logistic regression adopts the former, \ie $y_m \in \{0,1\}$. Ideally, logistic regression would like to model the relationship between
$\bm{x}_m$ and $y_m$ using the following form:
\begin{equation}\label{eq:lr:logistic_1}
    \hat{y}_m = g(\bm{w}^\top \bm{x}_m + b),
\end{equation}
where $g(\cdot)$ is the \emph{unit step function} defined as
\begin{equation}\label{eq:unit_step_function}
      g(z)=
    \begin{cases}
        0, & z<0,\\[4pt]
        \frac{1}{2}, & z=0,\\[4pt]
        1, & z>0.
    \end{cases}
\end{equation}
In other words, if $\bm{w}^\top \bm{x}_m + b < 0$, then $\hat{y}_m = 0$;
if $\bm{w}^\top \bm{x}_m + b > 0$, then $\hat{y}_m = 1$;
and $\hat{y}_m = 0.5$ when the expression equals zero.
This formulation essentially takes the output of a linear regression model and
binarizes it: positive values map to $1$ and negative values map to $0$.

However, the unit step function is piecewise-defined: it is non-differentiable
at $z=0$ and has zero derivative everywhere else. This makes parameter optimization extremely difficult, as gradient-based methods cannot be applied.
To address this issue, logistic regression adopts a smooth approximation of the
step function -- the \textbf{standard logistic function}~\footnote{This choice is the origin of the term ``logistic''
in logistic regression.}, also known as the
\emph{sigmoid function}. The standard logistic function is defined as
\begin{equation}\label{eq:lr:logistic_2}
    \sigma(z)=\frac{1}{1+\exp(-z)}.
\end{equation}
The logistic function (or sigmoid function) is an $S$-shaped curve.
Figure~\ref{fig:lr:sigmoid} illustrates its shape and compares it with the
unit step function. As shown in the figure, logistic regression replaces the discontinuous step function with a smooth approximation, allowing continuous outputs that can still be interpreted as predictions for a binary label. Moreover, the logistic function converges to the unit step function in
Eq.~\eqref{eq:unit_step_function} in the limit:
\[
g(x)
= \lim_{k \rightarrow \infty}
  \frac{1}{1+\exp(-2k x)},
\]
where $g(x)$ denotes the unit step function.

Combining Eq.~\eqref{eq:lr:logistic_1} with the logistic function
Eq.~\eqref{eq:lr:logistic_2}, the final regression function of logistic
regression becomes
\begin{equation}\label{eq:lr:logistic_3}
    \hat{y}_m
    = \frac{1}{1+\exp\!\big(-\bm{\theta}^\top \bm{x}_m\big)}
    \in (0,1).
\end{equation}
As in linear regression, the parameter vector
$\bm{\theta} = (\theta_0, \theta_1, \ldots, {\theta}_N)^\top$ is to be
estimated, and the linear term $\bm{\theta}^\top \bm{x}_m$ is explicitly given by
\[
    \bm{\theta}^\top \bm{x}_m
    = \theta_0 + \theta_1 x_{m,1} + \cdots + \theta_N x_{m,N}.
\]

\begin{figure}[t]
    \centering
    \includegraphics[width=0.5\columnwidth]{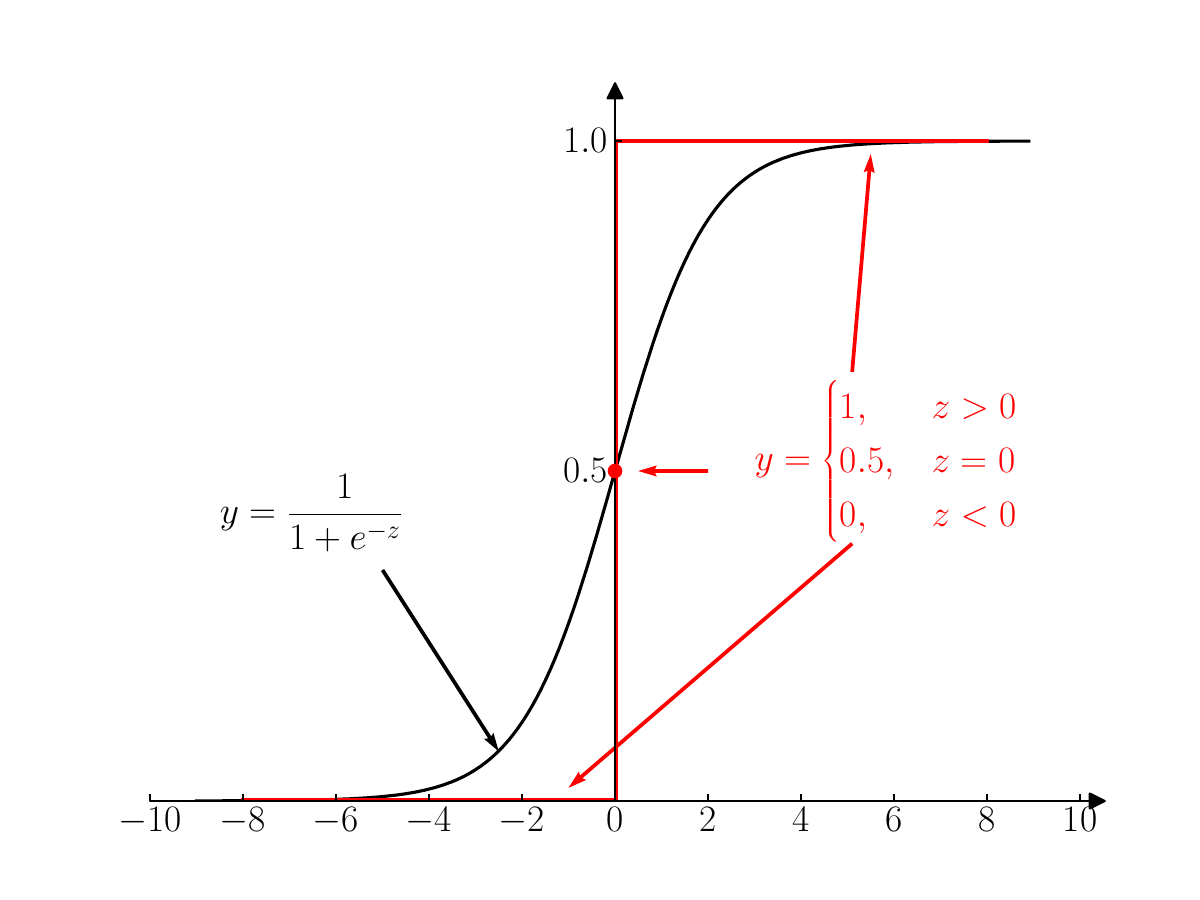}
    \caption{\small Comparison between the Sigmoid function and the unit step function. Figure source:~\cite{zhou2021machine}.
    }
    \label{fig:lr:sigmoid}
\end{figure}

In logistic regression, the output $\hat{y}$ always lies in the interval $(0,1)$, \ie $\hat{y}\in(0,1)$. Thus, $\hat{y}$ can be interpreted as the model's predicted probability that the label takes the value $y=1$. Given the model parameter vector $\bm{\theta}$ and an input feature vector $\bm{x}_m$, the probability that $y_m = 1$ is predicted as
\begin{equation}\label{eq:lr:logistic_4}
    \Pr(y_m = 1)
    = \hat{y}_m(\bm{x}_m, \bm{\theta})
    = \frac{1}{1 + \exp(-\bm{\theta}^{\top}{\bm{x}}_m)}.
\end{equation}
Correspondingly, the predicted probability that $y_m = 0$ is
\begin{equation}\label{eq:lr:logistic_5}
    \Pr(y_m = 0)
    = 1 - \hat{y}_m(\bm{x}_m, \bm{\theta})
    = \frac{\exp(-\bm{\theta}^{\top}{\bm{x}}_m)}
           {1 + \exp(-\bm{\theta}^{\top}{\bm{x}}_m)}.
\end{equation}
In practical prediction, a decision threshold is applied to $\hat{y}$. If $\hat{y}$ exceeds a chosen threshold (commonly set to $0.5$, but adjustable depending on the application), the model predicts $y=1$; otherwise, it predicts $y=0$.

\paratitle{Loss Function.} Logistic regression adopts the \textbf{cross-entropy loss} as its objective function~\footnote{In many references, this cross-entropy loss is also known as the \textbf{log loss}.}. Given a binary label $y_m$ and the corresponding logistic-regression prediction $\hat{y}_m(\bm{x}_m, \bm{\theta})$, the cross-entropy loss is defined as
\begin{equation}\label{eq:lr:logistic_6}
\mathcal{L}(\bm{\theta})
= -\frac{1}{M}\sum_{m=1}^M \Big(
    y_m \log \hat{y}_m(\bm{\theta}, \bm{x}_m)
    + (1-y_m)\log\!\big(1-\hat{y}_m(\bm{\theta}, \bm{x}_m)\big)
\Big).
\end{equation}

In Eq.~\eqref{eq:lr:logistic_6}, the loss term
$-\big(y_m \log \hat{y}_m + (1-y_m)\log(1-\hat{y}_m)\big)$
achieves its minimum value of $0$ when the prediction is correct
(\ie when $\hat{y}_m = y_m = 1$ or $\hat{y}_m = y_m = 0$).
When the model makes a completely wrong prediction (\eg $\hat{y}_m \to 1, y_m=0$ or $\hat{y}_m \to 0, y_m=1$), the loss term tends to $\infty$.
However, because the sigmoid function satisfies $0 < \hat{y}_m < 1$, the loss never actually diverges.

Substituting Eq.~\eqref{eq:lr:logistic_4} and Eq.~\eqref{eq:lr:logistic_5} into Eq.~\eqref{eq:lr:logistic_6}, we obtain the explicit form of the logistic regression cross-entropy loss:
\begin{equation}\label{eq:lr:logistic_7}
\mathcal{L}(\bm{\theta})
= -\frac{1}{M}\sum_{m=1}^M \left(
    y_m \log \frac{1}{1+\exp(-\bm{\theta}^{\top}{\bm{x}}_m)}
    + (1-y_m)\log \frac{\exp(-\bm{\theta}^{\top}{\bm{x}}_m)}
           {1+\exp(-\bm{\theta}^{\top}\tilde{\bm{x}}_m)}
\right).
\end{equation}

\paratitle{Parameter Estimation.} The optimal model parameters are obtained by minimizing the cross-entropy loss:
\[
\bm{\theta}^{*} = \arg\min_{\bm{\theta}} \mathcal{L}(\bm{\theta}).
\]
The parameters of logistic regression are typically estimated using iterative
optimization methods, most commonly gradient descent. The detailed procedures
and variants of gradient-based optimization will be introduced in the next
section.

\subsection{Parameter Estimation in Logistic Regression}

\paratitle{Gradient Descent Solution.}
The standard way to estimate the parameters of logistic regression is to compute the gradient of the cross-entropy loss in Eq.~\eqref{eq:lr:logistic_6} and then apply gradient descent. Since Section~\ref{know:lr:gd} has already introduced the mechanics of gradient descent, we focus here on the detailed derivation of the gradient of the logistic regression loss.

To begin, we simplify the per-sample cross-entropy loss term. Let $\mathcal{L}_m(\bm{\theta})$ be the contribution from the $m$-th sample. Expanding the sigmoid function and performing step-by-step algebra gives:
\begin{equation}
\begin{split}
\mathcal{L}_m(\bm{\theta})
&= -\Big( y_m \log \sigma(\bm{\theta}^{\top}{\bm{x}}_m)
    + (1-y_m)\log\!\big(1-\sigma(\bm{\theta}^{\top}{\bm{x}}_m)\big) \Big) \\[5pt]
&= -y_m \log\!\left( \frac{1}{1+\exp(-\bm{\theta}^{\top}\bm{x}_m)} \right)
   -(1-y_m)\log\!\left( 1 - \frac{1}{1+\exp(-\bm{\theta}^{\top}\bm{x}_m)} \right) \\[5pt]
&= -y_m \log\!\left( \frac{\exp(\bm{\theta}^{\top}\bm{x}_m)}
                          {1+\exp(\bm{\theta}^{\top}\bm{x}_m)} \right)
   -(1-y_m)\log\!\left( \frac{1}{1+\exp(\bm{\theta}^{\top}\bm{x}_m)} \right) \\[5pt]
&= -y_m\,\bm{\theta}^{\top}\bm{x}_m
    + \log\!\big( 1 + \exp(\bm{\theta}^{\top}\bm{x}_m) \big).
\end{split}
\end{equation}
The derivative with respect to $\bm{\theta}$ can now be computed directly:
\begin{equation}
\frac{\partial \mathcal{L}_m(\bm{\theta})}{\partial \bm{\theta}}
= - y_m \bm{x}_m
  + \frac{\exp(\bm{\theta}^{\top}\bm{x}_m)}
         {1+\exp(\bm{\theta}^{\top}\bm{x}_m)}\, \bm{x}_m
= \big( \sigma(\bm{\theta}^{\top}\bm{x}_m) - y_m \big)\bm{x}_m.
\end{equation}

Averaging over all samples yields the gradient of the full loss:
\begin{equation}\label{eq:gradient_Logistic}
\frac{\partial \mathcal{L}(\bm{\theta})}{\partial \bm{\theta}}
= \frac{1}{M} \sum_{m=1}^{M}
  \big( \sigma(\bm{\theta}^{\top}\bm{x}_m) - y_m \big)\bm{x}_m.
\end{equation}
Substituting this result into the gradient descent update rule gives the iterative parameter solution.

\paratitle{Closed-form Solution.}
In general, logistic regression does not admit a closed-form solution when parameters are learned by minimizing the cross-entropy loss. However, if we reinterpret logistic regression from a generative modeling viewpoint, assuming that the class-conditional densities follow multivariate Gaussian distributions with a shared covariance matrix, then a closed-form analytical solution can be obtained.

Let $\Pr(y=1\mid\bm{x})$ and $\Pr(y=0\mid\bm{x})$ denote the two class probabilities. Logistic regression assumes that the log-odds takes a linear form:
\begin{equation}\label{eq:lr:log_wx_b}
\log\frac{\Pr(y=1\mid\bm{x})}{\Pr(y=0\mid\bm{x})}
= \bm{\theta}^{\top}\bm{x}
= \bm{w}^{\top}\bm{x} + b.
\end{equation}
Applying Bayes' rule gives:
\begin{equation}\label{eq:lr:log_logreg}
\log\frac{\Pr(y=1\mid\bm{x})}{\Pr(y=0\mid\bm{x})}
= \log\frac{\Pr(\bm{x}\mid y=1)\Pr(y=1)}
         {\Pr(\bm{x}\mid y=0)\Pr(y=0)}.
\end{equation}
We now assume the following Gaussian generative model:
\[
\Pr(\bm{x}\mid y=1)=\mathcal{N}(\bm{\mu}_1,\bm{\Sigma}), \qquad
\Pr(\bm{x}\mid y=0)=\mathcal{N}(\bm{\mu}_0,\bm{\Sigma}),
\]
where $\bm{\mu}_1$ and $\bm{\mu}_0$ are the class means, and $\bm{\Sigma}$ is the shared covariance matrix. The density of $\mathcal{N}(\bm{\mu},\bm{\Sigma})$ is:
\[
\Pr(\bm{x})
= \frac{1}{\sqrt{(2\pi)^n|\bm{\Sigma}|}}
  \exp\!\left( -\frac{1}{2}
               (\bm{x}-\bm{\mu})^{\top}\bm{\Sigma}^{-1}(\bm{x}-\bm{\mu}) \right).
\]

Substituting the Gaussian densities into Eq.~\eqref{eq:lr:log_logreg} and expanding quadratic terms yields\footnote{
The derivation of Eq.~\eqref{eq:lr:logreg_ana} is as follows:
\begin{equation}
\begin{split}
&\log\frac{\Pr(y=1\mid\bm{x})}{\Pr(y=0\mid\bm{x})} \\[2pt]
&= \log\frac{
 \frac{1}{\sqrt{(2\pi)^n|\bm{\Sigma}|}}
 \exp\!\left( -\frac{1}{2}(\bm{x}-\bm{\mu}_1)^{\top}\bm{\Sigma}^{-1}(\bm{x}-\bm{\mu}_1) \right) \Pr(y=1)
}{
 \frac{1}{\sqrt{(2\pi)^n|\bm{\Sigma}|}}
 \exp\!\left( -\frac{1}{2}(\bm{x}-\bm{\mu}_0)^{\top}\bm{\Sigma}^{-1}(\bm{x}-\bm{\mu}_0) \right) \Pr(y=0)
} \\[5pt]
&= \log\frac{
 \exp\!\left( -\frac{1}{2}(\bm{x}-\bm{\mu}_1)^{\top}\bm{\Sigma}^{-1}(\bm{x}-\bm{\mu}_1) \right)
}{
 \exp\!\left( -\frac{1}{2}(\bm{x}-\bm{\mu}_0)^{\top}\bm{\Sigma}^{-1}(\bm{x}-\bm{\mu}_0) \right)
}
+ \log\frac{\Pr(y=1)}{\Pr(y=0)} \\[5pt]
&=
\bm{x}^{\top}\bm{\Sigma}^{-1}(\bm{\mu}_1 - \bm{\mu}_0)
+ \frac{1}{2}\!\left( \bm{\mu}_0^{\top}\bm{\Sigma}^{-1}\bm{\mu}_0
                   - \bm{\mu}_1^{\top}\bm{\Sigma}^{-1}\bm{\mu}_1 \right)
+ \log\frac{\Pr(y=1)}{\Pr(y=0)} .
\end{split}
\end{equation}
}
\begin{equation}\label{eq:lr:logreg_ana}
\begin{split}
\log \frac{\Pr(y=1\mid \bm{x})}{\Pr(y=0\mid \bm{x})}
&= \bm{x}^{\top}\bm{\Sigma}^{-1}(\bm{\mu}_1 - \bm{\mu}_0) \\[3pt]
&\quad + \frac{1}{2}\!\left(
    \bm{\mu}_0^\top \bm{\Sigma}^{-1}\bm{\mu}_0
  - \bm{\mu}_1^\top \bm{\Sigma}^{-1}\bm{\mu}_1
  \right)
  + \log\frac{\Pr(y=1)}{\Pr(y=0)}.
\end{split}
\end{equation}
Comparing Eq.~\eqref{eq:lr:logreg_ana} with the logistic form in Eq.~\eqref{eq:lr:log_wx_b}, and using the equality $\bm{w}^{\top}\bm{x}=\bm{x}^{\top}\bm{w}$, we obtain:
\begin{equation}
\begin{aligned}
\bm{w} &= \bm{\Sigma}^{-1}(\bm{\mu}_1 - \bm{\mu}_0), \\[3pt]
b &= \frac{1}{2}\!\left(
      \bm{\mu}_0^\top \bm{\Sigma}^{-1}\bm{\mu}_0
    - \bm{\mu}_1^\top \bm{\Sigma}^{-1}\bm{\mu}_1 \right)
    + \log\frac{\Pr(y=1)}{\Pr(y=0)} .
\end{aligned}
\end{equation}

Thus, under the Gaussian assumption, logistic regression admits a closed-form analytical solution. The class priors $\Pr(y=1)$ and $\Pr(y=0)$ can be estimated by empirical frequencies, and the shared covariance matrix $\bm{\Sigma}$ is typically obtained by pooling the class-specific covariance estimates.

\subsection{Extended Discussion on Cross-entropy Loss}~\label{know:lr:cross_entropy}

Perhaps readers may wonder why logistic regression does not simply adopt the
sum of squared residuals between $y_m$ and $\hat{y}_m$ as its loss function,
namely $\mathcal{L}(\boldsymbol{\theta}) = \frac{1}{M}\sum_{m=1}^{M}(\hat{y}_m - y_m)^2$. In the following, we provide several extended discussions to explain why the mean squared error is not an appropriate choice for logistic regression.

\paratitle{Comparison Between Cross-Entropy and Mean Squared Error.} In practice, it is widely observed that MSE is less sensitive than cross-entropy for logistic regression. We illustrate this with a simple example. Suppose a binary classification dataset contains a sample whose true label is $1$, but the logistic regression model predicts $\hat{y}=0$. The two losses are:
\begin{equation}
    \begin{cases}
        \mathrm{MSE}: & (1-0)^2 = 1,\\[4pt]
        \mathrm{Xent}: & -(1\cdot\log(0)+0\cdot\log(1)) \to \infty,
    \end{cases}
\end{equation}
where Xent denotes the cross-entropy loss. We see that when the prediction and
the true label differ completely, MSE assigns a relatively small penalty
(loss = 1), while cross-entropy assigns an extremely large penalty
(loss $\to \infty$). In linear regression, the prediction can take any real
value; in logistic regression, $\hat{y}$ is restricted to the interval
$(0,1)$. Under this constraint, cross-entropy effectively amplifies the penalty
for incorrect predictions, making the model more sensitive to classification
errors.

Moreover, for logistic regression, the MSE loss is \emph{non-convex}, which
creates significant difficulties for gradient-based optimization. A strictly
positive second derivative ensures convexity. The second derivative of the MSE
loss for logistic regression is
\begin{equation}\label{eq:lr:hw1}
    h_{\mathrm{MSE}}(\hat{y})
    = -2\!\left(3\hat{y}^2 - 2\hat{y}(y+1) + y\right)x^2,
\end{equation}
where $\hat{y} = \frac{1}{1+\exp(-wx-b)}$. The expression
$h_{\mathrm{MSE}}(\hat{y})$ can be negative under certain conditions, implying
non-convexity.

In contrast, the second derivative of the cross-entropy loss is
\begin{equation}\label{eq:lr:hw2}
    h_{\mathrm{Xent}}(\hat{y})
    = \frac{x^2 \exp(-wx-b)}
           {\left(1+\exp(-wx-b)\right)^2} > 0,
\end{equation}
which guarantees that the loss function is convex. As a result, gradient
descent can always converge to the global optimum.

\paratitle{Information-theoretic Interpretation of Cross-Entropy.} The cross-entropy loss is closely related to the concept of entropy in information theory. Here we explain this intrinsic connection.

Given a binary label $y_m$ and its prediction $\hat{y}_m$, we construct two
Bernoulli distributions $Q$ and $Q'$ using $y_m$ and $\hat{y}_m$ as their
parameters:
\begin{itemize}
    \item $Q$ is the Bernoulli distribution $\mathrm{Bernoulli}(y_m)$, where a
    random variable $z \sim Q$ satisfies
    $\Pr_Q(z=1)=y_m$ and $\Pr_Q(z=0)=1-y_m$;
    \item $Q'$ is the Bernoulli distribution $\mathrm{Bernoulli}(\hat{y}_m)$,
    where a random variable $z \sim Q'$ satisfies
    $\Pr_{Q'}(z=1)=\hat{y}_m$ and $\Pr_{Q'}(z=0)=1-\hat{y}_m$.
\end{itemize}

To measure how different the two distributions $Q$ and $Q'$ are, we use the
\textbf{relative entropy}, also known as the
\textbf{Kullback-Leibler (KL) divergence}. The KL divergence between $Q$ and
$Q'$ is:

\begin{equation}\label{eq:lr:k14_6}
\begin{aligned}
D_{\mathrm{KL}}(Q \| Q')
&= \sum_{z_i=0}^{1}
    \Pr_Q(z_i)
    \log\!\left(
        \frac{\Pr_Q(z_i)}{\Pr_{Q'}(z_i)}
    \right) \\
&= \sum_{z_i=0}^{1} \Pr_Q(z_i)\log \Pr_Q(z_i)
 - \sum_{z_i=0}^{1} \Pr_Q(z_i)\log \Pr_{Q'}(z_i) \\
&= -H(Q)
   -\Big(
        (1-y_m)\log(1-\hat{y}_m)
        + y_m \log(\hat{y}_m)
     \Big),
\end{aligned}
\end{equation}
where $H(Q)$ is the entropy of distribution $Q$, which is a constant independent
of $\hat{y}_m$. Importantly, the bracketed term in the last line of
Eq.~\eqref{eq:lr:k14_6} is exactly the per-sample cross-entropy loss used in
logistic regression.

Since KL divergence quantifies the discrepancy between two probability
distributions\footnote{
KL divergence is sometimes called ``KL distance,'' but strictly speaking it is
not a metric because $D_{\mathrm{KL}}(P\|Q)\neq D_{\mathrm{KL}}(Q\|P)$ in
general, so it does not satisfy the symmetry property required of a distance.
It measures only dissimilarity.
}, minimizing the cross-entropy loss is equivalent to making the predictive
distribution $Q'$ as close as possible to the true distribution $Q$.
Thus, from an information-theoretic perspective, logistic regression learns
parameters that minimize the divergence between the true label distribution and
the model's predicted distribution.

\subsection{The Origin and Naming of Logistic Regression}

\paratitle{Discrete Choice Models and the Logit Model.}
In addition to the name \emph{Logistic Regression}, the model is also known as
\emph{Logit Regression}. This terminology originates from an important class of
models in economics known as \textbf{Discrete Choice Models (DCMs)}.
Historically, the logit model was first proposed as a specific form of DCM,
rather than as a statistical classification model.

As the name suggests, discrete choice models are designed to capture decision
making in a discrete set of alternatives. Such decision problems arise widely
in everyday economic activities, including shopping, voting, and general
decision-making scenarios (e.g., choosing one or several products from a set of
available items).

To illustrate the idea, consider a shopping example. Suppose that whether a
consumer purchases a product depends on the product's perceived usefulness or
value. This value is characterized by a real-valued quantity $U$, called
\textbf{utility}. Consumers form their assessment of utility based on certain
observable product attributes. Assume the consumer has collected $N$ such
attributes, denoted by $x_1, x_2, \ldots, x_N$, and let
$\bm{x} = (1, x_1, x_2, \ldots, x_N)$ represent the feature vector. A DCM models
the relationship between utility and consumer information using a linear
functional form:
\begin{equation}
    U = \bm{\theta}^\top \bm{x} + \epsilon,
\end{equation}
where $\bm{\theta} = (\theta_0, \ldots, \theta_N)$ is the parameter vector and $\epsilon$ is a random disturbance term. Since utility is influenced by many unobserved factors, $\epsilon$ captures these unknown components.

Under the DCM framework, a consumer purchases the product if and only if the
utility is positive; otherwise, the product is not purchased. Thus,
\begin{equation}\label{eq:lr:r2_1}
\begin{aligned}
    \Pr(y = 1 \mid \bm{x})
    &= \Pr(U > 0 \mid \bm{x}) \\
    &= \Pr(\bm{\theta}^\top {\bm{x}} > -\epsilon \mid \bm{x}),
\end{aligned}
\end{equation}
where the purchase probability $\Pr(y=1 \mid \bm{x})$ is determined by the
cumulative distribution function (CDF) of the random variable $-\epsilon$.
Therefore, the choice probability depends on the distribution of $U$, which in
turn is governed by the distribution of $\epsilon$.

A natural assumption is to let $\epsilon$ follow a normal distribution. This
leads to the well-known \textbf{Probit Model}, an early and widely used DCM that
predates logistic regression. The probit model uses the Gaussian CDF to model
the choice probability and represents one of the earliest formal approaches to
binary choice modeling.

\begin{figure}[ht]
    \centering
    \includegraphics[width=0.5\columnwidth]{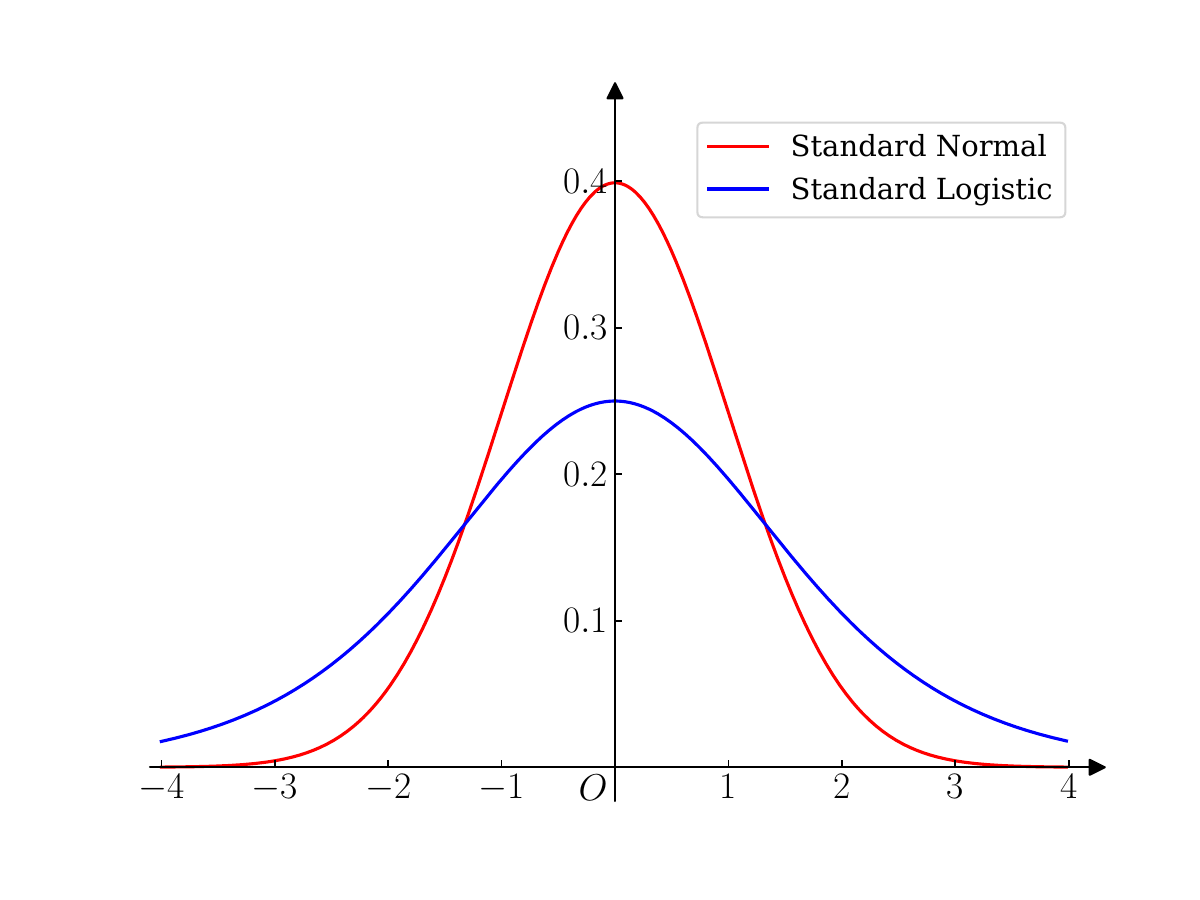}
    \caption{\small Probability density functions of the standard logistic distribution and the standard normal distribution.}
    \label{fig:lr:logit_dist}
\end{figure}

Because the cumulative distribution function (CDF) of the Gaussian distribution does not have a closed-form expression, its use in analytical derivations is inconvenient. To address this issue, researchers proposed replacing the normal CDF with the \textbf{standard logistic distribution}. The CDF of the standard logistic distribution is
\begin{equation}\label{eq:lr:r2_2}
    F(t) = \Pr(x > t) = \frac{1}{1+\exp(-t)}.
\end{equation}
As illustrated in Figure~\ref{fig:lr:logit_dist}, the logistic distribution closely resembles the standard normal distribution - both exhibiting the well-known ``bell-shaped'' form.

Substituting Eq.~\eqref{eq:lr:r2_2} into Eq.~\eqref{eq:lr:r2_1} yields the standard form of the logistic regression model:
\begin{equation}
\begin{aligned}
    \Pr(y=1 \mid \bm{x})
        &= \frac{\exp(\bm{\theta}^{\top} {\bm{x}})}
                {1 + \exp(\bm{\theta}^{\top} {\bm{x}})},\\[4pt]
    \Pr(y=0 \mid \bm{x})
        &= \frac{1}
                {1 + \exp(\bm{\theta}^{\top} {\bm{x}})}.
\end{aligned}
\end{equation}
Correspondingly, the log-odds of choosing the product are given by
\begin{equation}
    \log \frac{\Pr(y=1 \mid \bm{x})}{\Pr(y=0 \mid \bm{x})}
    = \bm{\theta}^{\top} {\bm{x}}.
\end{equation}

Thus, logistic regression can be viewed as a specific instance of a discrete choice model: it models utility as a linear function and models the randomness in utility using the logistic distribution.

Before the development of the logistic model, the Probit model was already widely accepted in economics as a standard DCM. As a result, the inventors of logistic regression adopted the name \emph{Logit} in analogy with \emph{Probit}. At the same time, the term ``Logit'' is also commonly interpreted as an abbreviation of ``the log of an odds.'' Therefore, the term Logit is a clever double entendre: it echoes the structure of the Probit model while also highlighting the connection to the logistic function that underlies the model.

\paratitle{Origin of the Term \emph{Logistic}.}
We know that the name ``logistic regression'' comes from the \emph{logistic
function}. But where does the word ``logistic'' itself originate? Although the
word resembles ``logistics'' (meaning transportation, supply, or military
operations), the two terms have no historical or semantic connection.

\begin{figure}[t]
    \centering
    \includegraphics[width=0.7\columnwidth]{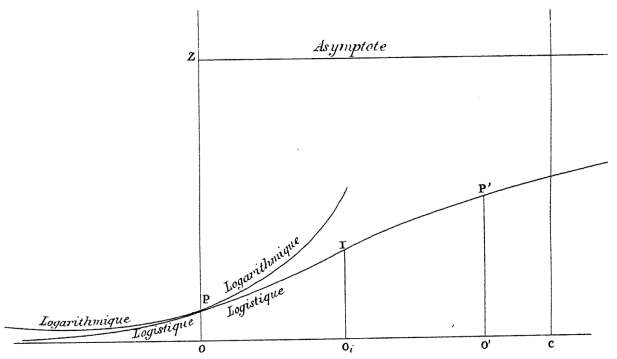}
    \caption{\small The ``Logarithm'' curve and ``Logarithmique'' curve in Verhulst's work~\cite{verhulst1845resherches}.}
    \label{fig:logarithm_and_logarithmique}
\end{figure}

The term \emph{logistic} was first introduced by the Belgian mathematician
Pierre Francois Verhulst (1804--1849). In his 1845 paper on population growth,
Verhulst plotted two curves, shown in Figure~\ref{fig:logarithm_and_logarithmique}~\cite{verhulst1845resherches}. He
argued that if left unchecked, population would grow exponentially -- corresponding to the curve labeled \emph{Logarithmique} in French, which at that time referred to an exponential curve (historically, the word ``logarithm'' was used in a way related to exponentiation). However, real-world population growth is constrained by environmental factors and more closely follows the curve labeled \emph{Logistique}, from which the English word \emph{logistic} is derived. Thus, the word \emph{logistic} originally meant ``having an exponential-like shape.'' Indeed, the logistic function can be rewritten as
\begin{equation}
    \sigma(z)
    = \frac{1}{1+\exp(-z)}
    = \frac{\exp(z)}{1+\exp(z)},
\end{equation}
which makes clear that it is essentially a transformation of the exponential
function.

Over time, the meaning of ``logarithm'' in English shifted from ``exponent'' to
its modern meaning of ``log function,'' but the meaning of \emph{logistic}
remained unchanged. Consequently, the two English names for the model,
\emph{logistic regression} and \emph{logit regression}, carry entirely
different meanings: \emph{logistic} refers to the exponential-like shape of the
logistic function, whereas \emph{logit} refers to the \emph{log-odds}
transformation, \ie the logarithm of the odds ratio.

\subsection{Elements of Softmax Regression}

Logistic regression provides an effective solution for binary classification
problems. However, many practical tasks involve more than two categories. To
handle such multi-class settings, we introduce a regression model specifically
designed for multi-class classification: \textbf{Multinomial Logistic
Regression}, more commonly known as \textbf{Softmax Regression}. Because the
term ``Softmax'' is far more widely used, we will adopt it throughout the
following discussion.

\paratitle{Data Preparation.}
Consider a multi-class classification dataset:
\[
    \mathcal{D}
    = \{(\bm{x}_1, y'_1), (\bm{x}_2, y'_2), \ldots, (\bm{x}_M, y'_M)\},
\]
where $\bm{x}_m \in \mathbb{R}^{N}$ and ${y'_m} \in \{1,2,\ldots,K\}$ for
$m=1,2,\ldots,M$. We assume a \emph{mutually exclusive} multi-class setting,
meaning that each sample $\bm{x}_m$ belongs to exactly one of the $K$ classes.

For convenience in later derivations, we transform the label $y'_m$ using
\textbf{one-hot encoding}. Given $y'_m = k$, one-hot encoding constructs a vector $\bm{y}_m \in \mathbb{R}^{K}$ whose $k$-th entry is $1$ and all others are $0$. For example, if $y'_m = 3$, then $\bm{y}_m = (0, 0, 1, 0, \ldots, 0)$. After this transformation, the dataset becomes:
\[
    \mathcal{D}
    = \{(\bm{x}_1, \bm{y}_1), (\bm{x}_2, \bm{y}_2), \ldots, (\bm{x}_M, \bm{y}_M)\},
\]
where each $\bm{y}_m$ is a $K$-dimensional one-hot vector representing the
class membership of sample $\bm{x}_m$.

\paratitle{Regression Function.} In the SoftMax regression model, each input sample $\bm{x}_m$ is transformed through $K$ linear combinations, producing $K$ real-valued scores:
\begin{equation}\label{eq:lr:k18_1}
\begin{aligned}
\left(
\begin{array}{c}
z_{m,1} \\
\vdots \\
z_{m,k} \\
\vdots \\
z_{m,K}
\end{array}
\right)
=
\left(
\begin{array}{c}
\bm{\theta}_1^{\top}{\bm{x}}_m \\
\vdots \\
\bm{\theta}_k^{\top}{\bm{x}}_m \\
\vdots \\
\bm{\theta}_K^{\top}{\bm{x}}_m
\end{array}
\right),
\end{aligned}
\end{equation}
where $\bm{\theta}_k$ is the parameter vector for class $k$, and $z_{m,k}$ is the
corresponding linear score. In compact form, Eq.~\eqref{eq:lr:k18_1} can be
written as
\[
    \bm{z}_m = \Theta^{\top}{\bm{x}}_m,
\]
where
\[
    \Theta = (\bm{\theta}_1, \ldots, \bm{\theta}_K),\qquad
    \bm{z}_m = (z_{m,1}, \ldots, z_{m,K})^{\top}.
\]
Each $z_{m,k}$ represents the score associated with class $k$. A natural way to
predict the class label for $\bm{x}_m$ is simply to select the class with the
largest score:
\begin{equation}\label{eq:lr:k18_2}
    \hat{y}_m'
    = \arg\max_{k}
      \big\{
        z_{m,1}, \ldots, z_{m,K}
      \big\}.
\end{equation}
Here, $\hat{y}_m'$ denotes the predicted class label.

Eq.~\eqref{eq:lr:k18_2} involves the $\arg\max$ operator, which is
non-differentiable and therefore cannot be optimized using gradient-based
methods. To address this issue, we apply the \textbf{SoftMax function} to the
score vector $\bm{z}_m$, producing a normalized, differentiable, and
``soft'' approximation to the one-hot encoded label vector $\bm{y}_m$.
Under the SoftMax function, the predicted value of the $k$-th component of
$\bm{y}_m$ is computed as:
\begin{equation}\label{eq:lr:k18_3}
    \hat{y}_{m,k}
    = \frac{\exp(z_{m,k})}
           {\sum_{j=1}^K \exp(z_{m,j})}
    = \frac{\exp(\bm{\theta}_k^{\top}{\bm{x}}_m)}
           {\sum_{j=1}^K \exp(\bm{\theta}_j^{\top}{\bm{x}}_m)}
    \in (0,1).
\end{equation}

Here, the exponential term $\exp(z_{m,k})=\exp(\bm{\theta}_k^{\top}{\bm{x}}_m)$ performs a monotonic scaling of the raw score $z_{m,k}$, while the denominator
$\sum_{j=1}^K \exp(z_{m,j})$ normalizes the transformed scores across all
classes so that the resulting probabilities sum to one. The vector
\[
    \hat{\bm{y}}_m
    = (\hat{y}_{m,1}, \ldots, \hat{y}_{m,K})^{\top}
\]
thus provides a smooth and differentiable prediction of the one-hot encoded
label vector $\bm{y}_m$.

Combining Eqs.~\eqref{eq:lr:k18_1} and \eqref{eq:lr:k18_3}, the full prediction
function of the SoftMax regression model can be written as
\begin{equation}
    \label{eq:lr:k18_4}
    \hat{\boldsymbol{y}}_m
    =
    \left(
    \begin{array}{c}
        \hat{y}_{m,1} \\
        \vdots \\
        \hat{y}_{m,k} \\
        \vdots \\
        \hat{y}_{m,K}
    \end{array}
    \right)
    =
    \left(
    \begin{array}{c}
        \dfrac{\exp(\bm{\theta}_{1}^{\top}{\bm{x}}_m)}
              {\sum_{j=1}^{K}\exp(\bm{\theta}_{j}^{\top}{\bm{x}}_m)}
        \\
        \vdots \\
        \dfrac{\exp(\bm{\theta}_{k}^{\top}{\bm{x}}_m)}
              {\sum_{j=1}^{K}\exp(\bm{\theta}_{j}^{\top}{\bm{x}}_m)}
        \\
        \vdots \\
        \dfrac{\exp(\bm{\theta}_{K}^{\top}{\bm{x}}_m)}
              {\sum_{j=1}^{K}\exp(\bm{\theta}_{j}^{\top}{\bm{x}}_m)}
    \end{array}
    \right).
\end{equation}
Since each $\hat{y}_{m,k}$ lies in $(0,1)$ and the vector satisfies $\sum_{k=1}^{K}\hat{y}_{m,k}=1$, each $\hat{y}_{m,k}$ can be naturally interpreted as the predicted probability that the $m$-th sample belongs to class $k$:
\[
\begin{aligned}
p(y_m' = 1 \mid \bm{x}_m ; \bm{\Theta}) &= \hat{y}_{m,1},\\
p(y_m' = 2 \mid \bm{x}_m ; \bm{\Theta}) &= \hat{y}_{m,2},\\
&\;\vdots\\
p(y_m' = K \mid \bm{x}_m ; \bm{\Theta}) &= \hat{y}_{m,K}.
\end{aligned}
\]

\begin{figure}[t]
    \centering
    \includegraphics[width=0.75\columnwidth]{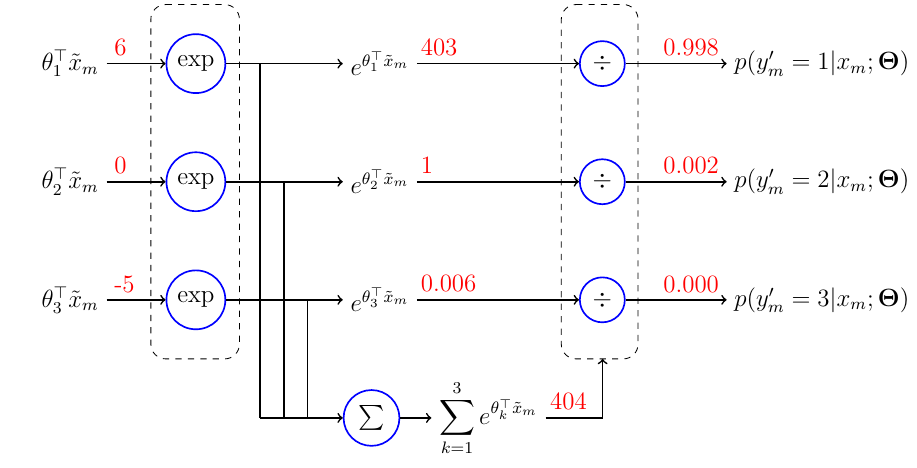}
    \caption{\small Illustration of the Softmax regression model.}
    \label{fig:lr:Softmax}
\end{figure}

Figure~\ref{fig:lr:Softmax} provides an illustrative example of how the Softmax regression model processes input data. As shown in Figure~\ref{fig:lr:Softmax}, the Softmax function applies two transformations to the linear prediction $\boldsymbol{\theta}_k^{\top}\boldsymbol{x}_m$. First, the exponential operation maps the original values to the interval $(0,\infty)$ and amplifies their differences, turning variations within the same order of magnitude into differences across multiple orders of magnitude. Second, the normalization step converts the transformed values into a valid probability distribution. Compared with directly normalizing the linear outputs, the exponential operation further enlarges the gap between the maximum value and the others, making the probability of the most likely class approach 1 after normalization.

In addition, Softmax regression can be regarded as a general form of logistic regression, while logistic regression is a special case of Softmax regression when $K = 2$. This is also why Softmax regression is often referred to as multinomial logistic regression. Interested readers may attempt to derive the Softmax formulation for $K = 2$ to observe how it degenerates into the standard logistic regression model.

\paratitle{Loss Function}. In multi-class classification tasks, we still use the cross-entropy loss to measure the discrepancy between $\bm{y}_m$ and $\hat{\bm{y}}_m$. The loss is defined as
\begin{equation}
    \label{eq:lr:k18_5}
    \mathcal{L}(\boldsymbol{\Theta})
    = -\frac{1}{M} \sum_{m=1}^M
    \left( \sum_{k=1}^K y_{m,k} \log \hat{y}_{m,k}\!\left(\boldsymbol{\theta}_k, \boldsymbol{x}_m\right) \right),
\end{equation}
where $\hat{y}_{m,k}\!\left(\boldsymbol{\theta}_k, \boldsymbol{x}_m\right)$ indicates that $\hat{y}_{m,k}$ is a function of $\boldsymbol{\theta}_k$ and $\bm{x}_m$.

In Eq.~\eqref{eq:lr:k18_5}, each ${y}_{m,k}$ takes only the value 0 or 1. For entries where ${y}_{m,k}=0$, the Softmax prediction does not affect the loss. In other words, the cross-entropy loss cares only about whether the model correctly predicts the single component of the one-hot vector that equals~1. This property is crucial for \textbf{Asymmetric Binary Features}, where only the value~1 carries meaningful information, and correctly predicting zeros provides no practical benefit. For example, for the vectors (0,0,0,0,1) and (1,0,0,0,0), if we focus solely on the position of the value~1, the prediction is entirely incorrect; yet if we also count the zeros, one might mistakenly conclude that 60\% of the elements are predicted correctly.

\paratitle{Origin of the Term ``Softmax''}. The name ``Softmax'' comes from the fact that the function can be viewed as a ``soft'' version of the $\arg\max$ function whose output is represented in one-hot form. If we require the $\arg\max$ function to output a one-hot vector, then for a set of inputs $z_1,\ldots,z_n$ with a unique maximum, its output is
\begin{equation}
    \label{eq:lr:k18_6}
    \arg\max(z_1,\ldots,z_n)
    = (y_1,\ldots,y_n)
    = (0,\ldots,0,1,0,\ldots,0),
\end{equation}
where $y_i = 1$ only when $z_i$ is the largest value, and $y_i = 0$ otherwise. If we regard Eq.~\eqref{eq:lr:k18_6} as the ``hard'' $\arg\max$ function, then the Softmax function can be viewed as its ``soft'' counterpart.

If we further interpret $\bm{y}_{m} = \{ y_{m,1},\ldots,y_{m,K} \}$ as a probability distribution - where exactly one class has probability 1 and all others have probability 0 - then Eq.~\eqref{eq:lr:k18_4} becomes equivalent to computing the KL divergence between the distributions represented by $\bm{y}_{m}$ and $\hat{\bm{y}}_{m}$. Interested readers may consult Section~\ref{know:lr:cross_entropy} and try deriving this equivalence relationship on their own.

\subsection{Parameter Estimation in Softmax Regression}

In this subsection, we present two approaches for solving Softmax regression, namely a gradient-based solution and a closed-form solution.

\paratitle{Gradient-based Solution.} Similar to logistic regression, Softmax regression is typically optimized by computing the gradient of the multi-class cross-entropy loss (Eq.~\eqref{eq:lr:k18_5}) and applying gradient descent. Below, we derive the gradient with respect to each parameter vector~$\bm{\theta}_i$ in a clear and detailed manner.

The loss function of the Softmax regression is
\begin{equation}
    \mathcal{L}(\bm{\Theta}) = -\frac{1}{M}\sum_{m=1}^{M}\sum_{k=1}^{K}
    y_{m,k}\log\frac{\exp(\bm{\theta}_k^{\top}{\bm{x}}_m)}{\sum_{j=1}^{K}\exp(\bm{\theta}_j^{\top}{\bm{x}}_m)}.
\end{equation}
To compute the gradient with respect to $\bm{\theta}_i$, observe that only the terms involving class $i$ depend on $\bm{\theta}_i$. Thus,
\begin{equation}
    \frac{\partial \mathcal{L}(\bm{\Theta})}{\partial \bm{\theta}_i}
    = -\frac{1}{M} \sum_{m=1}^{M}
    \frac{\partial}{\partial \bm{\theta}_i}
    \left(
        y_{m,i}
        \left(
            \bm{\theta}_i^{\top}{\bm{x}}_m
            -
            \log \sum_{j=1}^{K}\exp(\bm{\theta}_j^{\top}{\bm{x}}_m)
        \right)
    \right).
\end{equation}
We now differentiate each part carefully for
\begin{itemize}
  \item {\em Derivative of the linear term:}
\[
\frac{\partial}{\partial \bm{\theta}_i}
\left( y_{m,i}\,\bm{\theta}_i^{\top}{\bm{x}}_m \right)
= y_{m,i}{\bm{x}}_m.
\]
  \item {\em Derivative of the log-sum-exp term:} Let
\[
Z_m = \sum_{j=1}^{K}\exp(\bm{\theta}_j^{\top}{\bm{x}}_m).
\]
Then
\[
\frac{\partial}{\partial \bm{\theta}_i} \log Z_m
= \frac{1}{Z_m}
\cdot
\exp(\bm{\theta}_i^{\top}{\bm{x}}_m) {\bm{x}}_m
= \hat{y}_{m,i}{\bm{x}}_m,
\]
because Softmax predicts
\[
\hat{y}_{m,i}=\frac{\exp(\bm{\theta}_i^{\top}\tilde{\bm{x}}_m)}{\sum_{j=1}^{K}\exp(\bm{\theta}_j^{\top}\tilde{\bm{x}}_m)}.
\]
\end{itemize}
Putting the pieces together:
\begin{equation}
\frac{\partial}{\partial \bm{\theta}_i}
\Big(
    y_{m,i}
    \left(
        \bm{\theta}_i^{\top}{\bm{x}}_m
        -
        \log Z_m
    \right)
\Big)
= y_{m,i}{\bm{x}}_m - y_{m,i}\hat{y}_{m,i}{\bm{x}}_m.
\end{equation}

Since $y_{m,i}$ is 1 only when the true class is $i$, the expression simplifies naturally into the canonical Softmax gradient form:
\begin{equation}~\label{eq:gradient_Softmax}
    \frac{\partial \mathcal{L}(\bm{\Theta})}{\partial \bm{\theta}_i}
    = \frac{1}{M}\sum_{m=1}^{M}
    \left(
        \hat{y}_{m,i} - y_{m,i}
    \right){\bm{x}}_m.
\end{equation}
Thus, the gradient takes the elegant form of {\em prediction minus label}, multiplied by the input feature. Substituting this gradient into gradient descent (or its variants) yields the solution to the Softmax regression model.

\paratitle{A Unified View of Gradients: ``Prediction Minus Label'' Times Input.}
If we carefully compare linear regression (in Eq.~\eqref{eq:lr:bgd}), logistic regression (in Eq.~\eqref{eq:gradient_Logistic}), and Softmax (multinomial) regression (in Eq.~\eqref{eq:lr:bgd}), we will find that their gradient expressions under gradient descent share a common and very simple structure: the gradient is always of the form
\begin{equation}\label{eq:uni_gradient}
  \text{gradient}
\;=\;
\frac{1}{M}\sum_{m}
\bigl(\text{prediction}_m - \text{label}_m\bigr)\times \text{input}_m.
\end{equation}
Table~\ref{tab:unified_gradient} summarizes their prediction forms.

\begin{table}[t]\small
\centering
\caption{\small Unified Gradient Form of Linear Regression, Logistic Regression, and Softmax Regression.}\label{tab:unified_gradient}
\vspace{+0.2cm}
\begin{tabular}{ccc}
\toprule
\textbf{Model} & \textbf{Prediction} $\hat{y}$ or $\hat{y}_{k}$ & \textbf{Gradient} $\frac{\partial \mathcal{L}}{\partial \theta}$ \\
\midrule

{Linear Regression} &
$\displaystyle \hat{y}_m = \bm{\theta}^{\top}\bm{x}_m$ &
$\displaystyle \frac{1}{M}\sum_{m}(\hat{y}_m - y_m)\bm{x}_m$ \\

\midrule

{Logistic Regression} &
$\displaystyle \hat{y}_m = \frac{1}{1+\exp\left(-\bm{\theta}^{\top}\bm{x}_m\right)}$ &
$\displaystyle \frac{1}{M}\sum_{m}(\hat{y}_m - y_m)\bm{x}_m$ \\

\midrule

{Softmax Regression} &
$\displaystyle
\hat{y}_{m,k}
= \frac{\exp(\bm{\theta}_k^{\top}{\bm{x}}_m)}
{\sum_{j=1}^K \exp(\bm{\theta}_j^{\top}{\bm{x}}_m)}$ &
$\displaystyle
\frac{1}{M}\sum_{m}\left(\hat{y}_{m,k} - y_{m,k}\right){\bm{x}}_m$ \\

\bottomrule
\end{tabular}
\end{table}

\textbf{Intuitive interpretation.} This shared ``prediction minus label times input'' pattern is not accidental; it reflects a common principle underlying gradient descent in regression models:
\begin{itemize}
  \item The term $(\text{prediction}_m - \text{label}_m)$ represents the \emph{error} made on sample $m$.
  \item Multiplying by the input $\text{input}_m$ scales the update for each parameter by the value of its corresponding feature: if a feature is zero, that sample contributes no update to that parameter; if a feature has a large magnitude, the update is amplified because changes to that parameter have a stronger influence on the model's output for that sample.
  \item When the prediction is correct and confident (prediction $\approx$ label), the error term is small, producing a near-zero gradient and only minimal parameter updates.
  \item When the prediction deviates significantly from the label, the error term becomes large, leading to a stronger corrective update. Features with larger values induce proportionally larger adjustments, since their associated parameters play a greater role in generating the incorrect output.
\end{itemize}

This ``error times input'' structure originates from the linear (affine) transformation at the core of these models. Although more complex architectures such as deep neural networks incorporate nonlinear activations, each layer still performs a linear transformation whose weights follow the same gradient pattern. Consequently, the principle observed here extends naturally to deeper models: during backpropagation, every layer updates its parameters according to an error signal multiplied by the input to that layer.

\paratitle{Closed-form Solution.}
From the perspective of minimizing the cross-entropy loss, Softmax regression does not admit a direct closed-form solution. However, logistic regression does have a closed-form solution under a probabilistic generative model (Gaussian class-conditional densities with shared covariance). By extending the same generative assumptions to the multi-class setting, we can derive an analogous form for Softmax regression. The key idea is to work with \emph{log-odds} between classes, which removes the normalization term in the Softmax function.

\medskip
\textbf{Step 1: Express Softmax probabilities in log-odds form.}
For the Softmax regression model,
\[
\mathrm{Pr}(y=k\mid \bm{x})
= \frac{\exp(\bm{\theta}_k^\top {\bm{x}})}
{\sum_{i=1}^K \exp(\bm{\theta}_i^\top {\bm{x}})},
\]
consider the ratio between two class probabilities:
\[
\log\frac{\mathrm{Pr}(y=k\mid\bm{x})}{\mathrm{Pr}(y=j\mid\bm{x})}.
\]
Substituting the Softmax formula eliminates the common normalization term:
\begin{equation}\label{eq:lr:Softmax_wx_b}
\begin{split}
\log\frac{\mathrm{Pr}(y=k\mid\bm{x})}{\mathrm{Pr}(y=j\mid\bm{x})}
&= \log \frac{\exp(\bm{\theta}_k^\top{\bm{x}})}
{\exp(\bm{\theta}_j^\top{\bm{x}})} \\
&= (\bm{\theta}_k - \bm{\theta}_j)^\top {\bm{x}} \\
&= \bm{x}^\top(\bm{w}_k - \bm{w}_j) + (b_k - b_j).
\end{split}
\end{equation}
Thus, Softmax regression always expresses \emph{differences in log-probabilities} as linear functions of $\bm{x}$. This also shows why Softmax parameters are identifiable only up to a constant shift: only the differences $(\bm{\theta}_k - \bm{\theta}_j)$ matter.

\medskip
\textbf{Step 2: Derive the same log-odds expression under a generative model.}
Now assume the same probabilistic model used to derive the closed-form solution for logistic regression: 1) each class follows a Gaussian distribution
  $\bm{x} \mid y=k \sim \mathcal{N}(\bm{\mu}_k,\bm{\Sigma})$; 2) all classes share the same covariance $\bm{\Sigma}$; 3) priors are $\mathrm{Pr}(y=k)$.
Under these assumptions, Bayes' rule gives:
\[
\mathrm{Pr}(y=k\mid\bm{x})
\propto \exp\!\left(
 -\frac{1}{2}(\bm{x}-\bm{\mu}_k)^\top\bm{\Sigma}^{-1}(\bm{x}-\bm{\mu}_k)
\right)\mathrm{Pr}(y=k).
\]
Taking the log-odds between two classes $k$ and $j$ yields:
\begin{equation}\label{eq:lr:Softmax_ana}
\log\frac{\mathrm{Pr}(y=k\mid\bm{x})}{\mathrm{Pr}(y=j\mid\bm{x})}
= \bm{x}^\top\bm{\Sigma}^{-1}(\bm{\mu}_k - \bm{\mu}_j)
+ \frac{1}{2}\left(\bm{\mu}_j^\top\bm{\Sigma}^{-1}\bm{\mu}_j
- \bm{\mu}_k^\top\bm{\Sigma}^{-1}\bm{\mu}_k \right)
+ \log\frac{\mathrm{Pr}(y=k)}{\mathrm{Pr}(y=j)}.
\end{equation}
This expression is linear in $\bm{x}$, matching the structural form obtained from Softmax regression in Eq.~\eqref{eq:lr:Softmax_wx_b}.

\medskip
\textbf{Step 3: Match coefficients to obtain a valid parameter solution.}
By comparing Eq.~\eqref{eq:lr:Softmax_ana} and Eq.~\eqref{eq:lr:Softmax_wx_b}, we can directly identify one set of parameters that satisfies the equality:
\begin{equation}
\begin{cases}
\bm{w}_k = \bm{\Sigma}^{-1} \bm{\mu}_k, \\[4pt]
b_k = \log \mathrm{Pr}(y=k)
      - \dfrac{1}{2}\bm{\mu}_k^\top\bm{\Sigma}^{-1}\bm{\mu}_k.
\end{cases}
\end{equation}

\medskip
\textbf{Interpretation.}
This solution is not unique because Softmax parameters are defined only up to an additive constant (i.e., adding the same vector to all $\bm{\theta}_k$ leaves the model unchanged). Nevertheless, the above set provides a consistent closed-form solution under the Gaussian generative model, directly extending the logistic regression case to the multi-class setting.

\newpage

\section{Nonlinear Regression and Neural Networks}\label{sec:lr:nonlinear}

In the regression methods introduced in the preceding sections (such as linear regression and logistic regression), we implicitly assumed that the relationship between the input variables and the output variable is linear. For example, linear regression models the output as a weighted sum of the inputs, and logistic regression relies on a linear decision boundary. However, linearity is a very strong assumption. In real-world data generation processes, the relationship between variables is often highly nonlinear. When such nonlinear structures exist in the data, linear regression models fail to capture the underlying patterns and therefore perform poorly.

To address this limitation, this chapter introduces a natural extension of linear regression: nonlinear regression via \emph{linear basis function models}. The key idea is to apply nonlinear transformations to the input variables through a set of basis functions, enabling the model to capture complex nonlinear relationships while retaining the simplicity and interpretability of linear parametric forms. This approach also serves as an essential stepping stone toward more expressive models such as neural networks.

\subsection{Polynomial Regression}\label{know:lr:poly_regress}

We begin by introducing one of the simplest forms of nonlinear regression: \emph{polynomial regression}. The key idea is to project the original input variable $x$ into a higher-dimensional polynomial feature space, thereby enabling the model to capture nonlinear relationships between $x$ and $y$.

\paratitle{Regression Function.} Given a dataset $\mathcal{D}=\{(x_1, y_1), (x_2, y_2), \ldots, (x_M, y_M)\}$, where both $x_m \in \mathbb{R}$ and $y_m \in \mathbb{R}$ are real-valued, polynomial regression models the output as a linear combination of polynomial terms of the input. For a sample $(x_m, y_m)$, the regression function is defined as
\begin{equation}\label{eq:lr:k23_1}
    \hat{y}_m(x_m, \boldsymbol{\theta})
    = \theta_0 + \theta_1 x_m + \theta_2 x_m^2 + \cdots + \theta_K x_m^K
    = \sum_{k=0}^{K} \theta_k x_m^k.
\end{equation}
Here, $\boldsymbol{\theta} = (\theta_0, \theta_1, \ldots, \theta_K)$ denotes the parameters of the regression function.

The polynomial regression model can also be extended to classification tasks, leading to the \emph{polynomial logistic regression} model. Specifically, given a dataset $\mathcal{D}=\{(x_1, y_1), (x_2, y_2), \ldots, (x_M, y_M)\}$, where $x_m \in \mathbb{R}$ is real-valued and $y_m \in \{0,1\}$ is a binary classification label, the polynomial logistic regression model is defined as
\begin{equation}\label{eq:lr:k23_1_2}
    \hat{y}_m(x_m, \boldsymbol{\theta})
    = \sigma\!\left( \sum_{k=0}^{K} \theta_k x_m^k \right),
\end{equation}
where $\sigma(\cdot)$ is the sigmoid function and $\boldsymbol{\theta} = (\theta_0, \ldots, \theta_K)$ is the parameter vector associated with the polynomial features.

Similarly, polynomial regression can be further extended to multi-class classification tasks, giving rise to the \emph{polynomial Softmax regression} model, whose formulation follows directly from applying the Softmax function to polynomial features. We do not elaborate on this extension here.

\paratitle{Loss Function.} In polynomial regression, we continue to use the sum of squared residuals as the loss function, namely
\[
\mathcal{L}(\boldsymbol{\theta})
= \sum_{m=1}^{M} \left( y_m - \hat{y}_m \right)^{2}.
\]
The model parameters can be estimated in the same way as in ordinary linear regression, either by applying the least squares method or by using gradient descent. For polynomial logistic regression, the model is trained by minimizing the binary cross-entropy loss. the loss function over the dataset is
\[
\mathcal{L}(\boldsymbol{\theta})
= - \sum_{m=1}^{M} \Big(
      y_m \log \hat{y}_m
    + (1 - y_m) \log (1 - \hat{y}_m)
    \Big).
\]
The parameters $\boldsymbol{\theta}$ can be optimized using gradient descent, as in standard logistic regression.

\paratitle{Parameter Estimation.} For parameter estimation, we may regard the polynomial features as a vector
\[
\boldsymbol{x}_m = (x_m^{0}, x_m^{1}, \ldots, x_m^{K})^{\top},
\]
such that the polynomial regression and polynomial logistic regression models in
Eq.~\eqref{eq:lr:k23_1} and Eq.~\eqref{eq:lr:k23_1_2} can both be viewed as standard linear models applied to the transformed input $\boldsymbol{x}_m$.
Consequently, the model parameters can be estimated using either the traditional least squares method (for polynomial regression) or gradient descent (for both regression and logistic regression). It is straightforward to see that the gradient in gradient-based optimization still satisfies the general form $\text{gradient} = \bigl( \text{prediction}_m - \text{label}_m \bigr) \times \text{input}_m$, that is,
\[
\frac{\partial \mathcal{L}}{\partial \theta_k}
= \left( \sum_{k=0}^{K} \theta_k x_m^k - y_m \right) \, x_m^{k},
\quad k = 0,\ldots,K,
\]
for polynomial linear regression, and
\[
\frac{\partial \mathcal{L}}{\partial \theta_k}
= \left[ \sigma\left(\sum_{k=0}^{K} \theta_k x_m^k\right) - y_m \right] \, x_m^{k},
\quad k = 0,\ldots,K,
\]
for polynomial logistic regression.

\paratitle{Theoretical Foundation of Polynomial Regression.}
Although polynomial regression appears simple, its approximation capability is remarkably powerful due to its solid theoretical foundation. In theory, polynomial regression can approximate any continuous functional relationship $y = f(x)$ to an arbitrary degree of accuracy.

Using the \textbf{Maclaurin series} expansion, any sufficiently smooth function $f(x)$ can be written as
\begin{equation}
    f(x)
    = f(0) + \frac{f'(0)}{1!}x + \frac{f''(0)}{2!}x^{2}
      + \ldots + \frac{f^{(n)}(0)}{n!}x^{n} + \cdots
    = \sum_{n=0}^{\infty} \frac{f^{(n)}(0)}{n!} x^{n}.
\end{equation}
From this perspective, polynomial regression can be interpreted as a data-driven variant of the Maclaurin expansion, where the model attempts to infer parameters
\[
\theta_n \approx \frac{f^{(n)}(0)}{n!}
\]
based on the empirical dataset $\mathcal{D}$. Therefore, polynomial regression is, in principle, capable of approximating any continuous nonlinear function, provided that a sufficiently high polynomial order is used.

The strong approximation ability of polynomial models is further supported by a fundamental result in mathematical analysis known as the {\em Weierstrass Approximation Theorem}. This theorem provides a rigorous guarantee that polynomial functions are sufficiently expressive to approximate a broad class of continuous functions.
\begin{theorem}[Weierstrass Approximation Theorem]
Let $f(x)$ be a continuous function defined on a closed and bounded interval $[a,b]$.
Then, for any $\varepsilon > 0$, there exists a polynomial function $p(x)$ such that
\[
\sup_{x \in [a,b]} | f(x) - p(x) | < \varepsilon.
\]
In other words, polynomial functions are dense in the space of continuous functions on $[a,b]$.
\end{theorem}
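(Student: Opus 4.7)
The plan is to prove the theorem constructively by exhibiting an explicit family of polynomials converging uniformly to $f$ on $[a,b]$. I would first reduce to the canonical interval $[0,1]$ by an affine change of variables: if $f \in C[a,b]$, then $g(t) = f(a + (b-a)t)$ lies in $C[0,1]$, and any polynomial $p(t)$ that uniformly approximates $g$ on $[0,1]$ yields the polynomial $p((x-a)/(b-a))$ uniformly approximating $f$ on $[a,b]$. With this reduction the entire analytic content concerns functions on $[0,1]$.

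For each $n \geq 1$ I would introduce the \emph{Bernstein polynomial}
\[
B_n(f)(x) = \sum_{k=0}^{n} f\!\left(\tfrac{k}{n}\right) \binom{n}{k} x^{k}(1-x)^{n-k},
\]
which is manifestly a polynomial of degree at most $n$. The intuition (though not strictly needed for the proof) is probabilistic: if $S_n$ is a Binomial$(n,x)$ random variable, then $B_n(f)(x) = \mathbb{E}[f(S_n/n)]$, and $S_n/n$ concentrates at $x$ as $n \to \infty$. To make this rigorous I would first establish the three moment identities
\[
\sum_{k=0}^{n}\binom{n}{k}x^{k}(1-x)^{n-k} = 1, \qquad \sum_{k=0}^{n} k\binom{n}{k}x^{k}(1-x)^{n-k} = nx,
\]
\[
\sum_{k=0}^{n} k^{2}\binom{n}{k}x^{k}(1-x)^{n-k} = n(n-1)x^{2} + nx,
\]
from which the crucial \emph{variance bound} follows:
\[
\sum_{k=0}^{n}\Bigl(\tfrac{k}{n} - x\Bigr)^{2}\binom{n}{k}x^{k}(1-x)^{n-k} = \frac{x(1-x)}{n} \leq \frac{1}{4n}.
\]
The essential feature is that this bound is independent of $x$, which is precisely what promotes pointwise control to uniform control.

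Given $\varepsilon > 0$, uniform continuity of $f$ on the compact interval $[0,1]$ yields $\delta > 0$ with $|f(s)-f(t)| < \varepsilon/2$ whenever $|s-t| < \delta$. Writing $M = \sup_{x \in [0,1]}|f(x)|$ (finite by continuity on a compact set), I would use the first moment identity to express $f(x) - B_n(f)(x)$ as a single sum and split the index set into $\{k : |k/n - x| < \delta\}$ and $\{k : |k/n - x| \geq \delta\}$. The first group contributes at most $\varepsilon/2$ by uniform continuity and the fact that the binomial weights sum to one; the second group is controlled by $2M$ times the binomial weight there, which by a Chebyshev-type argument using the variance identity is at most $2M/(4n\delta^{2})$. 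Choosing $n$ so large that $M/(2n\delta^{2}) < \varepsilon/2$ gives $\sup_{x \in [0,1]}|B_n(f)(x) - f(x)| < \varepsilon$, establishing uniform convergence.

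The main obstacle is not conceptual but mechanical: the derivation of the second-moment identity and the clean packaging of the Chebyshev split require careful bookkeeping with binomial coefficients. The genuinely substantive step, hidden in the routine algebra, is recognizing that the $x$-independent variance bound $x(1-x)/n \leq 1/(4n)$ is what upgrades pointwise to uniform approximation. An alternative route would be Landau's convolution approach using the polynomial kernel $c_n(1-t^{2})^{n}$ after extending $f$ suitably, but this requires additional analytic overhead (compactly supported extensions and kernel mass estimates), so the Bernstein construction seems the natural choice for a pedagogical presentation.
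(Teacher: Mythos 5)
Your proposal is correct. The paper itself states the Weierstrass Approximation Theorem without proof (it is invoked only to justify the approximation power of polynomial regression), so there is no in-paper argument to compare against. Your Bernstein-polynomial construction is the standard constructive proof, and you have identified all the load-bearing steps accurately: the affine reduction to $[0,1]$, the three moment identities, the $x$-independent variance bound $x(1-x)/n \le 1/(4n)$, and the Chebyshev-type split of the index set into $\{k : |k/n - x| < \delta\}$ and its complement, with the near indices controlled by uniform continuity and the far indices by $2M/(4n\delta^2)$. Your closing observation is also the right one to emphasize in a pedagogical setting: it is precisely the uniformity of the variance bound in $x$ that converts pointwise concentration into uniform approximation. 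The only work remaining is the routine verification of the second-moment identity, which you correctly flag as mechanical.
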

This theorem implies that no matter how complex or nonlinear the true functional relationship $y = f(x)$ is, it can always be approximated arbitrarily well by a polynomial of sufficiently high degree.

From the perspective of machine learning, this result provides a theoretical justification for using polynomial regression: polynomials form a universal approximator for continuous functions, and hence, with enough data and a high-enough polynomial degree, polynomial regression has the capacity to approximate any continuous nonlinear relationship.

\subsection{Linear Basis Function Models}

Although polynomial regression has strong approximation capabilities, it is fundamentally limited to modeling mappings from $\mathbb{R} \rightarrow \mathbb{R}$. To model more general mappings from $\mathbb{R}^N \rightarrow \mathbb{R}$, we require more expressive modeling tools.

From the theoretical analysis in polynomial regression, we have seen that polynomial regression works by identifying a set of ``basis functions'' (namely $x^k,\; k=1,\ldots,K$) and expressing the target function as a linear combination of these basis functions. Under this perspective, linear regression, logistic regression, and Softmax regression can all be viewed within a unified framework in which the model computes a linear combination of basis functions and then maps it to a regression output, a binary classification output, or a multi-class output. In mathematics, representing a complex function as a linear combination of a set of simpler basis functions is a well-established idea known as \emph{functional approximation}. A large body of classical results -- such as Fourier series, wavelet bases, and orthogonal polynomial systems (e.g., Legendre polynomials~\cite{legendre1785recherches}, Chebyshev polynomials~\cite{chebyshev1853theorie}) -- provide different families of basis functions with powerful approximation properties. These theoretical foundations offer rich tools for constructing diverse regression models capable of capturing various nonlinear processes.

Following this line of thought, researchers have proposed the general framework of \emph{Linear Basis Function Models}, which extend the idea of polynomial regression by allowing arbitrary basis functions to be used in place of polynomial terms.

\paratitle{Regression Function.} Given a dataset $\mathcal{D} = \{(\boldsymbol{x}_1, y_1), (\boldsymbol{x}_2, y_2), \ldots, (\boldsymbol{x}_M, y_M)\}$, $\bm{x}_m \in \mathbb{R}^{N}$ and $y_m \in \mathbb{R}$, the regression function of a linear basis function model is defined as
\begin{equation}
    \label{eq:lr:k24_1}
    \hat{y}_m
    = \theta_0 + \sum_{k=1}^{N} \theta_k \, \phi_k(\boldsymbol{x}_m)
    = \boldsymbol{\theta}^{\top} \boldsymbol{\phi}(\boldsymbol{x}_m).
\end{equation}
Here, each $\phi_k(\cdot)$ is a nonlinear transformation of the input, referred to as a \textbf{basis function}. Its role is to project the original input $\boldsymbol{x}_m$ into a nonlinear feature space. After this transformation, the original input vector becomes
\[
\boldsymbol{\phi}(\boldsymbol{x}_m)
= \Bigl( \phi_0, \phi_1(\boldsymbol{x}_md), \ldots, \phi_K(\boldsymbol{x}_m) \Bigr)^{\top},
\qquad \text{with } \phi_0 = 1,
\]
and the model output is obtained by forming a linear combination of these basis functions using the parameter vector
\[
\boldsymbol{\theta} = (\theta_0, \theta_1, \ldots, \theta_K)^{\top}.
\]

\begin{figure}[t]
    \centering
    \includegraphics[width=0.7\columnwidth]{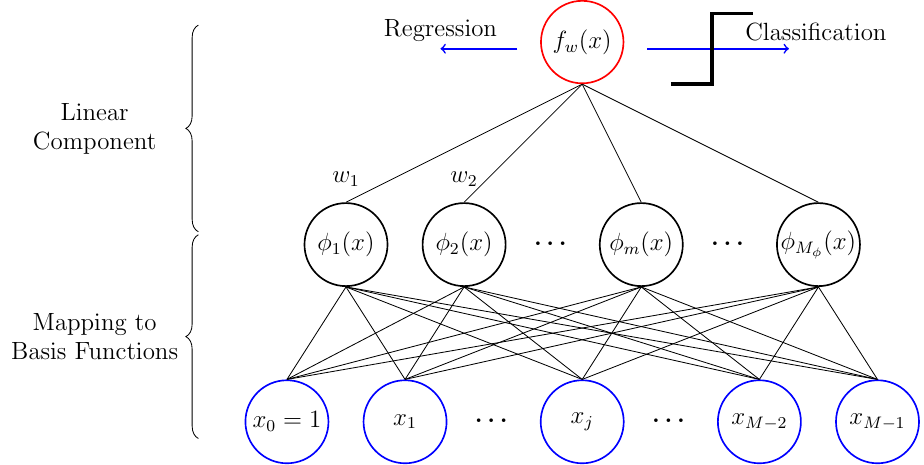}
    \caption{\small Linear basis function model.}
    \label{fig:lr:linear_basis}
\end{figure}

For classification problems, where the labels satisfy $y_m \in \{0,1\}$ in the dataset $\mathcal{D}$, the linear basis function model is combined with the logistic (sigmoid) function to produce probabilistic outputs. In this case, the model becomes
\begin{equation}
    \hat{y}_m = \sigma\!\left( \boldsymbol{\theta}^{\top} \boldsymbol{\phi}(\boldsymbol{x}_m) \right),
\end{equation}
where $\sigma(\cdot)$ denotes the sigmoid function and
$\boldsymbol{\phi}(\boldsymbol{x}_m)$ is the vector of basis functions applied to the input.

Figure~\ref{fig:lr:linear_basis} illustrates how the variables combine within a linear basis function model. Moreover, it is evident that polynomial regression is a special case of the linear basis function model, where the basis functions are given by $x_m \in \mathbb{R}$ $\phi_n({x}) = x^n$.

\paratitle{Loss Function.}
For regression tasks, the linear basis function model uses the sum of squared residuals as its loss function:
\begin{equation}\label{eq:loss_lbfm}
    \mathcal{L}(\boldsymbol{\theta})
    = \sum_{m=1}^{M}
      \left( \boldsymbol{\theta}^{\top}\boldsymbol{\phi}(\boldsymbol{x}_m) - y_m \right)^{2}.
\end{equation}
For classification tasks, the corresponding cross-entropy loss is
\begin{equation}
\mathcal{L}(\boldsymbol{\theta})
= - \sum_{m=1}^{M} \Big(
      y_m \log \sigma\!\left(\boldsymbol{\theta}^{\top}\boldsymbol{\phi}(\boldsymbol{x}_m)\right)
    + (1 - y_m)\log\!\left( 1 - \sigma\!\left(\boldsymbol{\theta}^{\top}\boldsymbol{\phi}(\boldsymbol{x}_m)\right) \right)
    \Big).
\end{equation}
These formulations are completely consistent with the loss functions used in other regression and classification models. They are restated here so that this section remains self-contained and can be understood without consulting previous sections.

\paratitle{Parameter Estimation: Gradient Descent.} The most general approach for estimating the parameters of a linear basis function model is still gradient descent. The procedure is exactly the same as in linear regression or logistic regression. The only modification is to replace the original input vector $\boldsymbol{x}_m$ with the basis-function-expanded input $\boldsymbol{\phi}(\boldsymbol{x}_m)
= \bigl( \phi_0, \phi_1(\boldsymbol{x}_m), \ldots, \phi_K(\boldsymbol{x}_m) \bigr)^{\top}$, For both regression and classification, the gradient of the loss function with respect to each parameter $\theta_k$ takes the general form
\[
\frac{\partial \mathcal{L}}{\partial \theta_k}
= \sum_{m=1}^{M} \bigl( \hat{y}_m - y_m \bigr)\, \phi_k(\boldsymbol{x}_m),
\]
which matches the familiar pattern ``$\text{gradient} = (\text{prediction} - \text{label}) \times \text{input}$''. This unified gradient expression shows that linear basis function models preserve the simplicity of linear models while gaining much stronger expressive power through nonlinear basis functions.

\paratitle{Parameter Estimation: Closed-form Solution.}
In practice, the closed-form solution is typically used only for regression tasks within the linear basis function model framework. Given the loss function in Eq.~\eqref{eq:loss_lbfm}, we minimize it by setting the gradient with respect to $\boldsymbol{\theta}$ equal to zero, \ie
\[
\frac{\partial \mathcal{L}(\boldsymbol{\theta})}{\partial \boldsymbol{\theta}} = 0.
\]
This yields the closed-form solution for the optimal parameter vector (see Eq.~\eqref{eq:LR_close} for the derivation):
\begin{equation}\label{eq:lr:k24_3_2}
    \boldsymbol{\theta}^{*}
    = (\boldsymbol{\Phi}^{\top}\boldsymbol{\Phi})^{-1}
      \boldsymbol{\Phi}^{\top}\boldsymbol{y},
\end{equation}
which is derived using the same steps as those shown in Eq.~\eqref{eq:lr:theta_star}. Here, $\boldsymbol{\Phi}$ is the design matrix obtained by applying the basis functions to all input samples:
\begin{equation*}
\boldsymbol{\Phi}
=
\begin{pmatrix}
\phi_0, \phi_1(\boldsymbol{x}_1), \ldots, \phi_K(\boldsymbol{x}_1) \\
\vdots \\
\phi_0, \phi_1(\boldsymbol{x}_m), \ldots, \phi_K(\boldsymbol{x}_m) \\
\vdots \\
\phi_0, \phi_1(\boldsymbol{x}_M), \ldots, \phi_K(\boldsymbol{x}_M)
\end{pmatrix},
\end{equation*}
and $\boldsymbol{y} = (y_1, y_2, \ldots, y_M)^{\top}$ is the vector of target outputs.

\subsection{Common Basis Functions}

In linear basis function models, different basis functions correspond to different types of nonlinear feature transformations. By choosing appropriate basis functions, we can significantly enhance the model's ability to capture nonlinear patterns in the data. This section introduces several widely used and important classes of basis functions.

\paratitle{Gaussian Radial Basis Functions (RBF).}
Radial basis functions use the probability density function of a Gaussian distribution as their basis function, defined as
\begin{equation}
    \label{eq:lr:k24_3}
    \phi_k(\boldsymbol{x})
    = \exp\!\left(
        -\frac{\|\boldsymbol{x} - \boldsymbol{\mu}_k\|^2}{2\sigma^2}
      \right).
\end{equation}
Here, $\boldsymbol{\mu}_k$ is the center of the $k$-th basis function in the input space, and $\sigma$ is the standard deviation.
The notation $\|\cdot\|_{2}$ denotes the Euclidean (L2) norm.

The strong approximation capability of RBF models is supported by a classical result known as the \emph{RBF universal approximation theorem}~\cite{park1991universal}.
It states that, for any continuous target function $f(\boldsymbol{x})$ defined on a compact domain and any $\varepsilon > 0$, there exists a finite number of Gaussian radial basis functions and corresponding coefficients $\{\theta_n\}$ such that
\[
\bigl| f(\boldsymbol{x}) - \sum_{n=1}^{N} \theta_n \,\phi_n(\boldsymbol{x}) \bigr| < \varepsilon
\quad \text{for all } \boldsymbol{x}.
\]
In other words, linear combinations of Gaussian RBFs are dense in the space of continuous functions.
This guarantees that RBF-based linear models can approximate arbitrarily complex nonlinear relationships when sufficient basis functions are provided.

As illustrated in Fig.~\ref{fig:lr:linear_basis}, the centers $\boldsymbol{\mu}_n$ can be interpreted as coordinate points in a $K$-dimensional space.
A common practical choice is to place these centers uniformly across the input domain, as shown in Fig.~\ref{fig:lr:rbf_distrib_compose}, enabling the model to approximate the target function through linear combinations of these Gaussian ``bumps''.

\begin{figure}[t]
    \centering
    \subfigure{\includegraphics[width=0.35\linewidth]{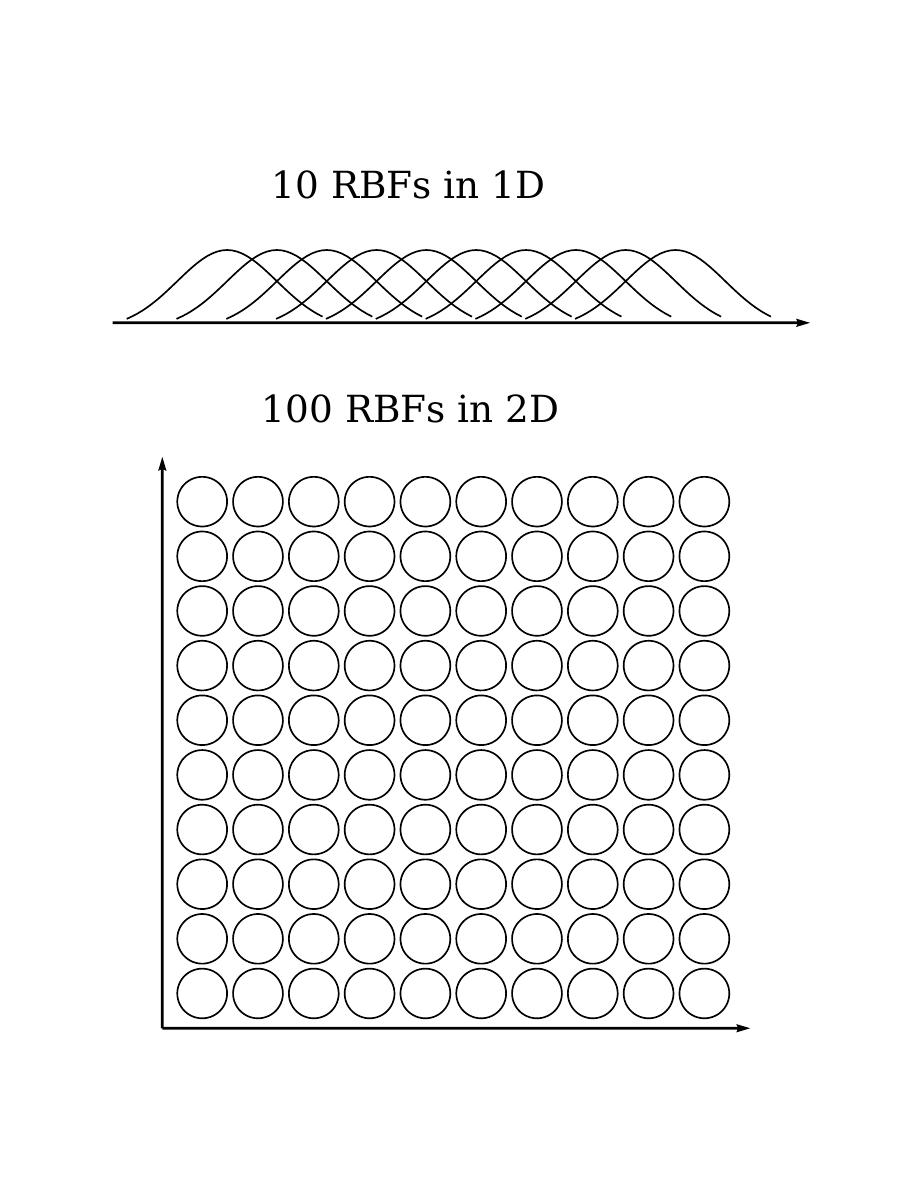}\label{fig:lr:rbf_distribution}}~~
    \subfigure{\includegraphics[width=0.35\linewidth]{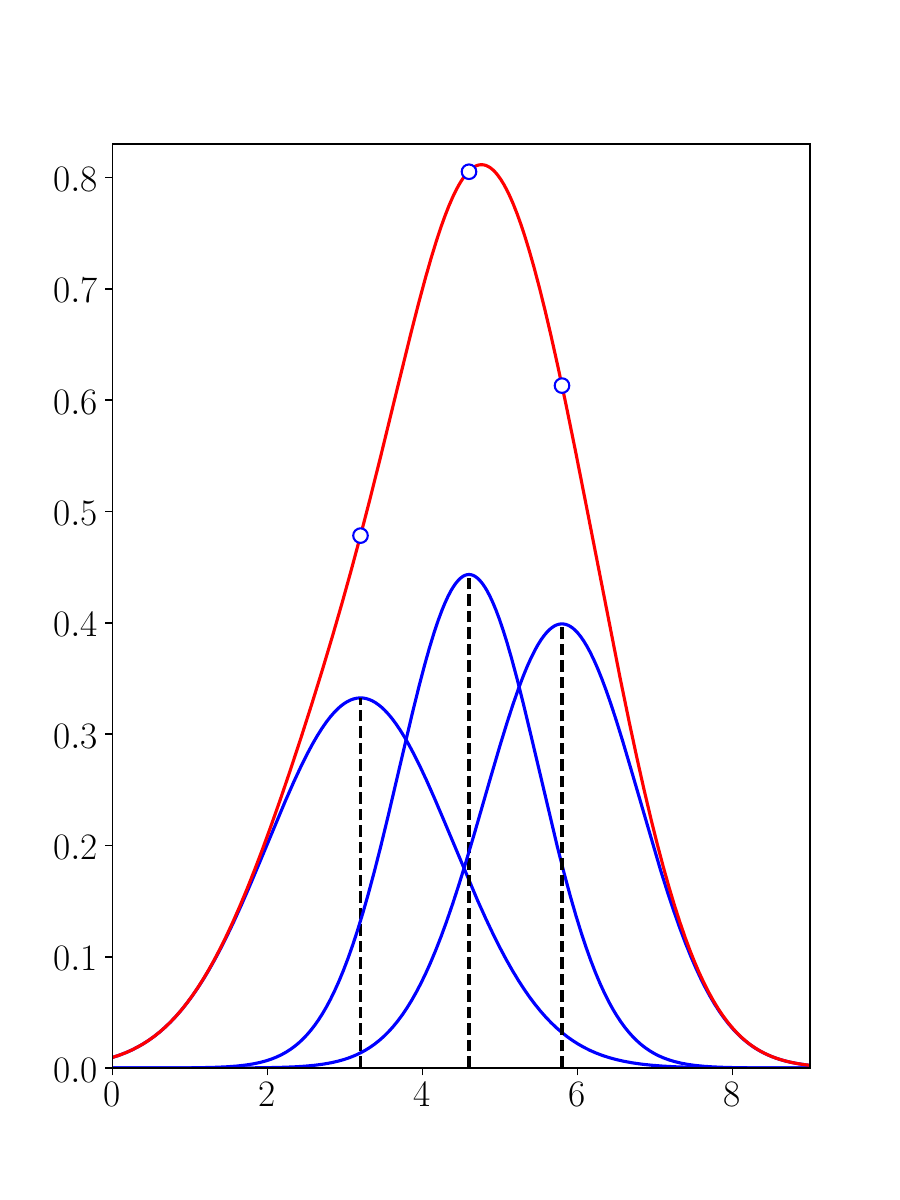}\label{fig:lr:rbf_composition}}
    \caption{\small Distribution of RBF centers and their linear composition.}
    \label{fig:lr:rbf_distrib_compose}
\end{figure}

\paratitle{Sigmoid Basis Functions.}
Another commonly used class of basis functions is the \emph{sigmoid basis function}.
A general sigmoid basis function is defined as
\begin{equation} \label{eq:lr:k24_2_general}
    \phi_n(\boldsymbol{x})
    = \sigma\!\left( \boldsymbol{w}_k^{\top}\boldsymbol{x} + b_k \right),
\end{equation}
where $\sigma(\cdot)$ is the sigmoid activation function,
\[
\sigma(z) = \frac{1}{1 + \exp(-z)}.
\]
At this point, one may notice that each sigmoid basis function is mathematically identical to the output of a logistic regression model applied to the input $\boldsymbol{x}$. The key difference is conceptual: in a linear basis function model, the parameters $\boldsymbol{w}_k$ and $b_k$ are not learned but must be predefined. In practice, the parameters in Eq.~\eqref{eq:lr:k24_2_general} can be determined using several heuristic strategies:
\begin{itemize}
    \item \textbf{Random initialization.}
    The parameters $\boldsymbol{w}_k$ and $b_k$ may be sampled from simple distributions (e.g., Gaussian or uniform).
    When a sufficient number of sigmoid basis functions is included, random features can provide effective approximations.

    \item \textbf{Data-driven initialization.}
    The offsets $b_k$ can be chosen so that the transition points of the sigmoids cover important regions of the input distribution.
    Meanwhile, $\boldsymbol{w}_k$ may be aligned with major directions of variation (e.g., principal components).

    \item \textbf{Clustering-based initialization.}
    By applying $k$-means clustering to the input data, one may place the sigmoid transition points near the cluster centers and adjust $\boldsymbol{w}_k$ to control the steepness of activation in each region.

    \item \textbf{Handcrafted structure.}
    In low-dimensional or pedagogical settings, $\boldsymbol{w}_k$ may be fixed (e.g., all pointing in the same direction), while $b_k$ is arranged to uniformly cover the domain-similar to the layout used for RBF centers.
\end{itemize}

Once $\boldsymbol{w}_k$ and $b_k$ are fixed, a linear basis function model remains linear in its final parameters and can be trained using closed-form regression or gradient descent.
If the parameters $\boldsymbol{w}_k$ and $b_k$ are also learned, the model becomes a neural network with one hidden layer.

\paratitle{Fourier Basis Functions.} Fourier basis functions form another important family of basis functions, especially suited for representing periodic or oscillatory structure in data. A general Fourier basis consists of sinusoidal functions of different frequencies:
\begin{equation}
    \phi_{k}^{\sin}(\boldsymbol{x}) = \sin(\boldsymbol{w}_k^{\top}\boldsymbol{x}),
    \qquad
    \phi_{k}^{\cos}(\boldsymbol{x}) = \cos(\boldsymbol{w}_k^{\top}\boldsymbol{x}),
\end{equation}
where $\boldsymbol{w}_k$ controls the frequency and direction of oscillation in the input space.

The theoretical foundation of Fourier basis functions comes from the classical \emph{Fourier series expansion}. For any continuous periodic function $f(x)$ defined on the interval $[-L, L]$, the Fourier series expresses $f(x)$ as an infinite linear combination of sine and cosine functions:
\begin{equation}~\label{eq:Fourier_series}
    f(x)
    = a_0
    + \sum_{k=1}^{\infty}
      \left(
          a_k \cos\!\left(\frac{k\pi x}{L}\right)
        + b_k \sin\!\left(\frac{k\pi x}{L}\right)
      \right),
\end{equation}
where the coefficients $a_k$ and $b_k$ are determined by the inner products of $f(x)$ with the corresponding basis functions.

Eq.~\eqref{eq:Fourier_series} shows that sine and cosine functions form a complete basis for continuous periodic functions, and finite truncations of the Fourier series can approximate such functions arbitrarily well. When applied in a linear basis function model, the parameters (\ie the coefficients $a_k$ and $b_k$) are simply learned from data, while the frequencies are either predefined or chosen according to certain strategies (\eg uniformly increasing frequencies or data-driven spectral analysis).

Fourier basis functions therefore provide a theoretically sound and practically effective method for modeling periodic or oscillatory patterns, complementing polynomial, Gaussian RBF, and sigmoid basis functions within the framework of linear basis function models.

\paratitle{Polynomial Basis Functions.} Polynomial basis functions are one of the most classical and widely used families of basis functions.
When the input is a vector $\boldsymbol{x} = (x_1, x_2, \ldots, x_N)^{\top}$, a polynomial basis of degree $K$ consists of all monomials of the components of $\boldsymbol{x}$ up to total degree $K$.
More formally, each basis function has the form
\begin{equation}
    \phi_{\boldsymbol{\alpha}}(\boldsymbol{x})
    = x_1^{\alpha_1} x_2^{\alpha_2} \cdots x_N^{\alpha_N},
\end{equation}
where the multi-index $\boldsymbol{\alpha} = (\alpha_1, \ldots, \alpha_N)$ satisfies
\[
\alpha_1 + \alpha_2 + \cdots + \alpha_N \le K,
\qquad
\alpha_i \in \mathbb{N}.
\]
The complete polynomial basis contains all such monomials, including the constant term $\phi_0(\boldsymbol{x}) = 1$. In the special case where the input is one-dimensional ($N=1$), this formulation reduces exactly to the standard polynomial regression model.

The theoretical foundation of polynomial basis functions stems from two fundamental results in approximation theory.
First, the multivariate Taylor series shows that a sufficiently smooth function $f(\boldsymbol{x})$ can be expanded as an infinite sum of multivariate monomials:
\begin{equation}
    f(\boldsymbol{x})
    = \sum_{|\boldsymbol{\alpha}| = 0}^{\infty}
      \frac{1}{\boldsymbol{\alpha}!}
      \left.\frac{\partial^{|\boldsymbol{\alpha}|} f}{\partial \boldsymbol{x}^{\boldsymbol{\alpha}}}\right|_{\boldsymbol{x}=\boldsymbol{0}}
      \boldsymbol{x}^{\boldsymbol{\alpha}}.
\end{equation}
More generally, the \emph{Weierstrass approximation theorem} (and its multivariate extensions) states that polynomial functions are dense in the space of continuous functions on compact domains.
Thus, for any continuous function $f(\boldsymbol{x})$ and any $\varepsilon > 0$, there exists a polynomial $p(\boldsymbol{x})$ such that
\[
\sup_{\boldsymbol{x}} \, | f(\boldsymbol{x}) - p(\boldsymbol{x}) | < \varepsilon.
\]
Polynomial basis functions therefore provide a universal approximation capability for continuous nonlinear mappings. They also form the theoretical foundation of polynomial regression: applying a linear regression model on polynomial basis functions is precisely what yields the classical polynomial regression model.

Polynomial basis functions are simple and expressive, but the number of monomials grows combinatorially with the input dimension $D$ and polynomial degree $K$, which may lead to computational challenges and overfitting.
For high-dimensional or highly nonlinear data, alternative basis functions such as Gaussian RBFs, sigmoids, or Fourier bases are often preferred.

\begin{table}[t]\footnotesize
\centering
\caption{Comparison of Common Basis Functions (Simplified).}
\renewcommand{\arraystretch}{1.25}
\begin{tabular}{lcl}
\hline
\textbf{Basis Function}
& \textbf{Form }
& \textbf{Key Property} \\
\hline

Polynomial
& $\phi_{\boldsymbol{\alpha}}(\boldsymbol{x})
   = x_1^{\alpha_1}\cdots x_N^{\alpha_N}$
& Global approximation; smooth trends \\

RBF (Gaussian)
& $\phi_k(\boldsymbol{x})
   = \exp\!\left(-\frac{\|\boldsymbol{x}-\boldsymbol{\mu}_k\|^2}
                        {2\sigma^{2}}\right)$
& Localized response; flexible shape modeling \\

Sigmoid
& $\phi_k(\boldsymbol{x})
   = \sigma(\boldsymbol{w}_k^{\top}\boldsymbol{x} + b_k)$
& Directional nonlinearity; NN-like activation \\

Fourier
& $\phi^{\sin}_k=\sin(\boldsymbol{w}_k^{\top}\boldsymbol{x}),\;
   \phi^{\cos}_k=\cos(\boldsymbol{w}_k^{\top}\boldsymbol{x})$
& Excellent for periodic or oscillatory patterns \\

\hline
\end{tabular}
\end{table}

\paratitle{Other Common Basis Functions.} In addition to polynomial, radial basis, sigmoid, and Fourier basis functions, many other families of basis functions have been developed for modeling diverse forms of nonlinear relationships.
\begin{itemize}
  \item \textbf{Spline basis functions}, such as B-splines and cubic splines, provide flexible piecewise-polynomial representations with excellent smoothness and numerical stability, and are widely used in curve fitting and smoothing applications.
  \item \textbf{Wavelet basis functions} (e.g., Haar, Daubechies, and Morlet wavelets) offer simultaneous localization in both time and frequency domains, making them particularly effective for representing signals with local discontinuities or multi-scale structure.
  \item \textbf{Orthogonal polynomial families} -- including Chebyshev, Legendre, Laguerre, and Hermite polynomials -- serve as numerically stable alternatives to standard monomials and enjoy desirable properties such as orthogonality and completeness.
  \item \textbf{Kernel-induced basis functions}, arising from kernel methods such as Gaussian, Laplacian, or Mat\'{e}rn kernels, implicitly generate high- or even infinite-dimensional feature spaces that enable powerful nonlinear modeling capabilities.
  \item \textbf{Piecewise and indicator-type basis functions}, including step functions, histogram bins, and partition-based features, offer simple yet effective representations for nonparametric regression and decision tree methods.
\end{itemize}
These diverse basis function families greatly expand the expressive power of linear basis function models and allow practitioners to tailor the model to the structure and characteristics of the data at hand.

\subsection{A First Look at Kernel Methods}

The closed-form solution and prediction formula of the linear basis function model naturally involve a number of inner-product computations over basis function evaluations. This makes the model closely related to the idea of kernel methods. To prepare the reader for later chapters, we provide a brief introduction to kernels here.

\paratitle{Kernel Function.} For regression tasks using a linear basis function model, the optimal parameters are given by the closed-form solution (see Eq.~\eqref{eq:lr:k24_3_2}):
\[
\boldsymbol{\theta}^{*}
= (\boldsymbol{\Phi}^{\top}\boldsymbol{\Phi})^{-1}
  \boldsymbol{\Phi}^{\top}\boldsymbol{y}.
\]
For a new input sample $\boldsymbol{x}_i$, substituting this solution into the regression function in Eq.~\eqref{eq:lr:k24_1} yields the prediction
\begin{equation}
    \label{eq:lr:k24_4}
    \hat{y}_i
    = \boldsymbol{\theta}^{* \top} \boldsymbol{\phi}(\boldsymbol{x}_i)
    = \boldsymbol{y}^{\top}
      (\boldsymbol{\Phi}\boldsymbol{\Phi}^{\top})^{-1}
      \boldsymbol{\Phi}\,
      \boldsymbol{\phi}(\boldsymbol{x}_i).
\end{equation}
From Eq.~\eqref{eq:lr:k24_4}, we observe that the prediction relies on a series of inner products involving basis function evaluations, such as
$\boldsymbol{\phi}(\boldsymbol{x}_m)^{\top}\boldsymbol{\phi}(\boldsymbol{x}_i)$
and the matrix product $\boldsymbol{\Phi}\boldsymbol{\Phi}^{\top}$.
When the number of basis functions is large--or when each basis function maps the input into a high-dimensional feature space--these inner products become computationally expensive or even infeasible to compute explicitly.

To address this issue, we introduce the concept of \emph{kernel functions}, which provide an implicit way to compute inner products in a feature space without explicitly constructing the feature mapping.

\begin{mydef}[Kernel Function]
Let $\phi(\boldsymbol{x})$ be a mapping from the input space to a (possibly high-dimensional) feature space. A function
\[
\kappa : \mathcal{X} \times \mathcal{X} \rightarrow \mathbb{R}
\]
is called a {\em kernel function} if, for any two samples
$\boldsymbol{x}_i$ and $\boldsymbol{x}_j$, it satisfies
\begin{equation}
    \label{eq:lr:k24_5}
    \kappa(\boldsymbol{x}_i, \boldsymbol{x}_j)
    = \left\langle \phi(\boldsymbol{x}_i),\, \phi(\boldsymbol{x}_j) \right\rangle,
\end{equation}
where $\langle \cdot, \cdot \rangle$ denotes the inner product in the feature space induced by $\phi(\cdot)$. In this sense, a kernel function computes the inner product of two feature vectors implicitly, without explicitly forming $\phi(\boldsymbol{x})$.
\end{mydef}

\begin{table}[t]\footnotesize
    \centering
    \caption{Common Kernel Functions Corresponding to Typical Basis Functions.}
    \label{tab:lr:kernal_function}
    \begin{tabular}{lll}
    \toprule
        \textbf{Kernel Name} & \textbf{Form} & \textbf{Parameters} \\
    \midrule
        Linear Kernel
        & $\kappa(\boldsymbol{x}_i,\boldsymbol{x}_j)
           =\boldsymbol{x}_i^{\top}\boldsymbol{x}_j$
        & --- \\

        Polynomial Kernel
        & $\kappa(\boldsymbol{x}_i,\boldsymbol{x}_j)
           = (\boldsymbol{x}_i^{\top}\boldsymbol{x}_j + c)^d$
        & $d \ge 1$ degree,\; $c\ge 0$ bias term \\

        Gaussian RBF Kernel
        & $\kappa(\boldsymbol{x}_i,\boldsymbol{x}_j)
           = \exp\!\left(-\frac{\|\boldsymbol{x}_i-\boldsymbol{x}_j\|^2}{2\sigma^2}\right)$
        & $\sigma>0$ bandwidth \\

        Laplacian RBF Kernel
        & $\kappa(\boldsymbol{x}_i,\boldsymbol{x}_j)
           = \exp\!\left(-\frac{\|\boldsymbol{x}_i-\boldsymbol{x}_j\|}{\sigma}\right)$
        & $\sigma>0$ \\

        Sigmoid Kernel
        & $\kappa(\boldsymbol{x}_i,\boldsymbol{x}_j)
           = \tanh(\beta\,\boldsymbol{x}_i^{\top}\boldsymbol{x}_j + \theta)$
        & $\beta>0,\ \theta<0$ \\

        Fourier Kernel
        & $\kappa(\boldsymbol{x}_i,\boldsymbol{x}_j)
           = \cos(\boldsymbol{w}^{\top}(\boldsymbol{x}_i-\boldsymbol{x}_j))$
        & frequency vector $\boldsymbol{w}$ \\

    \bottomrule
    \end{tabular}
\end{table}

All basis functions introduced earlier --- including polynomial basis functions,
Gaussian radial basis functions, sigmoid basis functions, and Fourier basis
functions -- admit corresponding kernel functions that implicitly compute the
inner products in their associated feature spaces. Table~\ref{tab:lr:kernal_function} summarizes several commonly used kernels and
their connections to these basis functions.

\paratitle{Kernel Trick.} Using such kernel functions allows us to replace explicit inner-product computations in the prediction formula of Eq.~\eqref{eq:lr:k24_4}, thereby improving computational efficiency. Specifically, every inner product of the form $\langle \phi(\boldsymbol{x}_m),\, \phi(\boldsymbol{x}_i) \rangle$ can be replaced by the kernel evaluation $\kappa(\boldsymbol{x}_m, \boldsymbol{x}_i)$. This yields the kernelized prediction function
\begin{equation}\label{eq:lr:k24_5_2}
    \hat{y}_i
    = \boldsymbol{y}^{\top}
      \boldsymbol{K}^{-1}\,
      \boldsymbol{\kappa}(\boldsymbol{X}, \boldsymbol{x}_i).
\end{equation}
Here, $\boldsymbol{K}$ is the \emph{kernel matrix}, whose $(m_1,m_2)$-th entry
is $\kappa(\boldsymbol{x}_{m_1}, \boldsymbol{x}_{m_2})$, and
$\boldsymbol{\kappa}(\boldsymbol{X}, \boldsymbol{x}_i)$ is the kernel vector
defined by
\begin{equation}\label{eq:kernel_matrix}
    \boldsymbol{K}=
    \begin{pmatrix}
    \kappa(\boldsymbol{x}_1,\boldsymbol{x}_1) & \cdots & \kappa(\boldsymbol{x}_1,\boldsymbol{x}_M) \\
    \vdots & \ddots & \vdots \\
    \kappa(\boldsymbol{x}_M,\boldsymbol{x}_1) & \cdots & \kappa(\boldsymbol{x}_M,\boldsymbol{x}_M)
    \end{pmatrix},
    \qquad
    \boldsymbol{\kappa}(\boldsymbol{X},\boldsymbol{x}_i)
    =
    \begin{pmatrix}
    \kappa(\boldsymbol{x}_1,\boldsymbol{x}_i) \\
    \vdots \\
    \kappa(\boldsymbol{x}_M,\boldsymbol{x}_i)
    \end{pmatrix}.
\end{equation}
The kernel-based prediction function in Eq.~\eqref{eq:lr:k24_5_2} is algebraically equivalent to the original prediction function in Eq.~\eqref{eq:lr:k24_4} of linear basis function models but avoids explicitly constructing the potentially high-dimensional feature mappings $\phi(\boldsymbol{x})$.

The rewriting in Eq.~\eqref{eq:lr:k24_5_2} is known as the \textbf{kernel trick}. The key idea is simple: instead of explicitly mapping inputs to a high-dimensional feature space and computing inner products of the form $\langle \phi(\boldsymbol{x}), \phi(\boldsymbol{x}') \rangle$, we use a kernel function to obtain the same inner product directly. This allows us to work with very high-dimensional feature spaces efficiently and without ever computing the feature vectors themselves.

\paratitle{Discussion: A Non-parametric Perspective on Linear Basis Function Models.} The kernelized prediction formula in Eq.~\eqref{eq:lr:k24_5_2},
\[
\hat{y}_i
= \boldsymbol{y}^{\top}
  \boldsymbol{K}^{-1}\,
  \boldsymbol{\kappa}(\boldsymbol{X}, \boldsymbol{x}_i),
\]
offers an alternative interpretation of linear regression from a
\emph{non-parametric} viewpoint. To see this, observe that the kernel vector
$\boldsymbol{\kappa}(\boldsymbol{X}, \boldsymbol{x}_i)$---that is,
\[
\boldsymbol{\kappa}(\boldsymbol{X}, \boldsymbol{x}_i)
= \bigl(
    \kappa(\boldsymbol{x}_1,\boldsymbol{x}_i),\,
    \kappa(\boldsymbol{x}_2,\boldsymbol{x}_i),\,
    \ldots,\,
    \kappa(\boldsymbol{x}_M,\boldsymbol{x}_i)
  \bigr)^{\top},
\]
measures the similarity (or equivalently, inverse distance) between the new
input $\boldsymbol{x}_i$ and every training sample
$\boldsymbol{x}_1,\boldsymbol{x}_2,\ldots,\boldsymbol{x}_M$. Meanwhile, the matrix $\boldsymbol{K}^{-1}$ acts as a normalization or
correction factor determined entirely by the training data.
Under this interpretation, the prediction for $\boldsymbol{x}_i$ may be viewed
as a weighted average of all training labels:
\[
\hat{y}_i
= \sum_{m=1}^{M} \alpha_m(\boldsymbol{x}_i)\, y_m,
\]
where the weight $\alpha_m(\boldsymbol{x}_i)$ depends on the kernel
similarity between $\boldsymbol{x}_i$ and $\boldsymbol{x}_m$.

This formulation closely resembles classical \emph{non-parametric} regression
methods such as kernel regression and $k$-nearest neighbors (kNN).
In those approaches, the prediction at $\boldsymbol{x}_i$ is obtained by
computing a weighted average of the labels of nearby data points, where the
weights are determined by a kernel function applied to the distance between
samples. The kernelized linear regression prediction in Eq.~\eqref{eq:lr:k24_5_2} follows exactly the same principle: training samples that are more similar to $\boldsymbol{x}_i$ (as quantified by the kernel\footnote{Ordinary linear regression can be interpreted as using the linear kernel.}) contribute more strongly to its predicted value.
\emph{This perspective highlights an important conceptual connection:
although linear regression is typically regarded as a \emph{parametric} model,
its kernelized form behaves like a non-parametric estimator whose predictions
depend directly on the entire training set.
In this sense, kernelization transforms a classical parametric model into a
flexible non-parametric method capable of capturing complex nonlinear patterns.}

From the discussion above, we can see that this non-parametric formulation is
highly inefficient in large-scale data settings. For a linear basis function model, kernelizing the closed-form solution requires computing the kernel similarity between the input sample and \emph{every} training sample. When the training set is large, this results in a prohibitive computational cost, making the closed-form--plus--kernel approach significantly less efficient than
gradient-based optimization methods. In contrast, kernel methods become extremely effective in a different class of
models---most notably, the Support Vector Machine (SVM). The key reason is
that, unlike kernelized linear regression, SVMs identify a small subset of
training samples, known as \emph{support vectors}, that alone determine the
final decision function. All other training samples receive zero weights and do
not contribute to the prediction. As a result, kernel computations are needed
only between the input sample and a much smaller set of support vectors,
greatly reducing the computational burden and enabling kernel methods to scale
to large datasets. We will introduce the full theoretical foundations of kernel methods in the dedicated SVM course.

A detailed treatment of kernel functions and kernelized learning will be provided in the SVM chapter. Here, our goal is simply to illustrate that the prediction formula of the linear basis function model already contains the structural form of a kernel method, and that kernelization is a powerful technique for simplifying and accelerating these implicit inner-product computations.

\subsection{(Deep) Neural Networks}

In our earlier discussion of sigmoid basis functions, we noted that a linear
basis function model equipped with learnable sigmoid parameters is precisely a
single-hidden-layer neural network. Given the central role of neural networks not only in modern machine learning but also in the broader development of artificial intelligence, it is essential for a regression analysis textbook to devote a dedicated section to this topic. In this subsection, we introduce neural networks from the perspective of regression modeling.

\paratitle{Regression Function of Neural Networks with a Single Hidden Layer.}
Given a dataset $\mathcal{D} = \{(\boldsymbol{x}_1, y_1),$ $(\boldsymbol{x}_2, y_2),$ $\ldots,$ $(\boldsymbol{x}_M, y_M)\}$, where $\boldsymbol{x}_m \in \mathbb{R}^{N}$ and $y_m \in \mathbb{R}$, a neural network constructs a regression function by stacking multiple layers of computation, typically organized into a sequence of \emph{hidden layers} followed by an \emph{output layer}. Each hidden layer transforms its input through a learnable affine mapping and a nonlinear activation function, while the output layer aggregates the final hidden representation to produce the model's prediction.

We begin with a neural network containing only one hidden layer as an illustrative example. A neural network consists of three key elements: \emph{neurons}, \emph{hidden layers}, and the \emph{output layer}.
\begin{itemize}
  \item \textbf{Neurons:} Given an input sample $\boldsymbol{x}_m$, a neuron applies a learnable affine transformation followed by a nonlinear activation function to extract a nonlinear representation from the input. Formally, the $k$-th neuron in the hidden layer computes
      \begin{equation}\label{eq:nn_neuron_single}
        h_k(\boldsymbol{x}_m)
          = \sigma\!\left( \boldsymbol{\theta}_{k}^{(1)\top} \boldsymbol{x}_m \right),
      \end{equation}
      where $\boldsymbol{\theta}_{k}^{(1)}$ is the parameter vector of the neuron and   $\sigma(\cdot)$ is an activation function such as sigmoid, tanh, or ReLU.

\item \textbf{Hidden layers:} A hidden layer is formed by stacking multiple neurons, each producing a scalar output. By concatenating these output, the layer outputs a vector known as the \emph{hidden representation vector} for the input sample $\boldsymbol{x}_m$. For a hidden layer with $K$ neurons, the combined output is
    \begin{equation}\label{eq:nn_hidden_single}
      \boldsymbol{h}^{(1)}(\boldsymbol{x}_m)
        = \bigl(h_1(\boldsymbol{x}_m),\, h_2(\boldsymbol{x}_m),\, \ldots,\, h_K(\boldsymbol{x}_m)\bigr)^{\top}.
    \end{equation}
    Hidden layers capture increasingly complex patterns by transforming the input into higher-level and more expressive representations.

  \item \textbf{Output layer:} The output layer takes the hidden representation as its input and converts it into the final prediction of the network. For regression problems, the output   layer applies a linear regression function:
      \begin{equation}\label{eq:nn_output_single}
        \hat{y}_m
          = \theta^{(2)}_0
          + \sum_{k=1}^{K} \theta^{(2)}_k \, h_k(\boldsymbol{x}_m),
      \end{equation}
      where $\theta^{(2)}_0$ and $\theta^{(2)}_k$ are the learnable parameters of the output layer. For binary or multi-class classification, the linear output may instead be passed through a logistic or Softmax regression function.
\end{itemize}

A neural network with a single hidden layer can be viewed as a data-dependent
extension of a linear basis function model. In this correspondence, each neuron
in the hidden layer plays the role of a basis function. More precisely, a linear
basis function model represents the input through fixed functions
$\{\phi_k(\boldsymbol{x})\}_{k=1}^{K}$, whereas a neural network replaces them with
learnable nonlinear transformations
\[
    \phi_k(\boldsymbol{x})
    \;\longleftrightarrow\;
    h_k(\boldsymbol{x})
    = \sigma\!\left( \boldsymbol{\theta}_k^{(1)\top} \boldsymbol{x} \right),
\]
where the parameters $\boldsymbol{\theta}_k^{(1)}$ of each neuron are optimized
from data. The output layer then forms a linear combination of these learned
``basis functions,'' just as in a traditional linear basis function model, but with
the crucial advantage that the basis functions are automatically adapted to the
regression problem.

\paratitle{Regression Function of Multiple-Hidden-Layer (Deep) Neural Networks.} A deep neural network extends the single-hidden-layer architecture by stacking
multiple hidden layers sequentially. Let $\boldsymbol{h}^{(0)} = \boldsymbol{x}_m$ denote the network input. For a network with $L$ hidden layers, the components are defined as follows:
\begin{itemize}
    \item \textbf{Neurons:} Each neuron in hidden layer $l$ ($l = 1,\ldots,L$) applies a learnable affine transformation to the output of the previous layer, followed by a nonlinear activation. The $k_l$-th neuron in layer $l$ computes
        \[
            h^{(l)}_{k_l}(\boldsymbol{x}_m)
            =
            \sigma\!\left(
                \boldsymbol{\theta}^{(l)}_{k_l}{}^{\top}
                \boldsymbol{h}^{(l-1)}(\boldsymbol{x}_m)
            \right),
        \]
        where $\boldsymbol{\theta}^{(l)}_{k_l}$ is the parameter vector of this neuron and $\sigma(\cdot)$ denotes an activation function such as the sigmoid, tanh, or ReLU.

    \item \textbf{Hidden layers:}
    Hidden layer $l$ consists of $K_l$ neurons. The outputs of all neurons in the layer
    form the hidden representation vector
    \[
        \boldsymbol{h}^{(l)}(\boldsymbol{x}_m)
        =
        \bigl(
            h^{(l)}_{1}(\boldsymbol{x}_m),\,
            h^{(l)}_{2}(\boldsymbol{x}_m),\,
            \ldots,\,
            h^{(l)}_{K_l}(\boldsymbol{x}_m)
        \bigr)^{\top}.
    \]
    As depth increases, the hidden representations become progressively more expressive,
    enabling the network to capture hierarchical nonlinear patterns.

    \item \textbf{Output layer:}
    The output layer (indexed as layer $L+1$) maps the final hidden representation
    $\boldsymbol{h}^{(L)}(\boldsymbol{x}_m)$ to the network's prediction. For a
    regression task, it applies a linear regression function:
    \[
        \hat{y}_m
        =
        \theta^{(L+1)}_0
        +
        \sum_{k_L=1}^{K_L}
        \theta^{(L+1)}_{k_L}\,
        h^{(L)}_{k_L}(\boldsymbol{x}_m),
    \]
    where $\theta^{(L+1)}_0$ and $\theta^{(L+1)}_{k_L}$ are the learnable parameters of
    the output layer.

\end{itemize}

This multilayer composition enables deep neural networks to construct rich
hierarchical representations, allowing them to approximate highly complex
regression functions.

\paratitle{Universal Approximation Theorem of Neural Networks.} The expressive power of neural networks is supported by a fundamental result known as the universal approximation theorem. It states that even shallow neural networks with a single hidden layer are capable of approximating a broad class of functions.
\begin{theorem}[Universal Approximation Theorem]
Let $f: \mathbb{R}^{N} \rightarrow \mathbb{R}$ be a continuous function defined on a compact subset of $\mathbb{R}^{N}$. Let $\sigma(\cdot)$ be a non-constant, bounded, and continuous activation function. Then, for any $\varepsilon > 0$, there exist an integer $K$ and parameters $\{\boldsymbol{\theta}^{(1)}_{k}\}_{k=1}^{K}$ and $\{\theta^{(2)}_{k}\}_{k=1}^{K}$ such that the neural network
\[
    \hat{f}(\boldsymbol{x})
    =
    \theta^{(2)}_0
    +
    \sum_{k=1}^{K}
    \theta^{(2)}_k\,
    \sigma\!\left( \boldsymbol{\theta}^{(1)\top}_{k} \boldsymbol{x} \right)
\]
satisfies
\[
    \lvert f(\boldsymbol{x}) - \hat{f}(\boldsymbol{x}) \rvert < \varepsilon,
    \qquad
    \text{for all } \boldsymbol{x} \text{ in the compact domain}.
\]
\end{theorem}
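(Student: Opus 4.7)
My plan is to prove the universal approximation theorem through a functional-analytic argument based on the Hahn--Banach and Riesz representation theorems, following the classical Cybenko--Hornik strategy. The main idea is that proving density of the single-hidden-layer network family in $C(K)$ (with $K$ the compact domain, equipped with the uniform norm) can be converted, by duality, into proving that no nonzero bounded linear functional on $C(K)$ can annihilate every ridge function $\bm{x}\mapsto \sigma(\bm{w}^{\top}\bm{x}+b)$.

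First, I would define $\mathcal{S}$ to be the linear span of all functions of the form $g_{\bm{w},b}(\bm{x})=\sigma(\bm{w}^{\top}\bm{x}+b)$ for $\bm{w}\in\mathbb{R}^N$ and $b\in\mathbb{R}$, together with the constant function $1$ to account for the bias $\theta_0^{(2)}$. The network $\hat{f}(\bm{x})$ in the statement is then simply an arbitrary element of $\mathcal{S}$, so what must be shown is $\overline{\mathcal{S}}=C(K)$ in the supremum norm. Suppose, for contradiction, that $\overline{\mathcal{S}}$ is a proper closed subspace. By the Hahn--Banach theorem, there exists a nonzero continuous linear functional $L:C(K)\to\mathbb{R}$ vanishing on $\overline{\mathcal{S}}$, and by the Riesz representation theorem this $L$ is given by a nonzero finite signed regular Borel measure $\mu$ on $K$ satisfying
\[
\int_K \sigma(\bm{w}^{\top}\bm{x}+b)\, d\mu(\bm{x}) = 0
\qquad\text{for all } \bm{w}\in\mathbb{R}^N,\ b\in\mathbb{R}.
\]
The entire argument now reduces to deriving a contradiction, i.e., showing that this orthogonality to all ridge activations forces $\mu=0$.

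The hard part will be this last step, because it is precisely where the hypotheses on $\sigma$ (non-constant, bounded, continuous) must be exploited. I intend to proceed by a Fourier-analytic route: extending $\mu$ by zero to all of $\mathbb{R}^N$ and regarding it as a compactly supported tempered distribution, the identity above says that the one-dimensional function $t\mapsto \int_K \sigma(t+b)\, d\mu_{\bm{w}}(t)$, where $\mu_{\bm{w}}$ is the pushforward of $\mu$ under $\bm{x}\mapsto\bm{w}^{\top}\bm{x}$, vanishes for all $b$. This means $\sigma\ast \check{\mu}_{\bm{w}}\equiv 0$, so on the Fourier side $\widehat{\sigma}\cdot\widehat{\mu}_{\bm{w}}=0$ as distributions. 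Because $\sigma$ is bounded, continuous, and non-constant, $\widehat{\sigma}$ is a nonzero tempered distribution whose support contains nonzero frequencies; by scaling $\bm{w}\mapsto\lambda\bm{w}$ one can force $\widehat{\mu}_{\bm{w}}$ to vanish at every nonzero frequency along every direction. Since $\widehat{\mu}_{\bm{w}}(\xi)=\widehat{\mu}(\xi\bm{w})$, this shows $\widehat{\mu}\equiv 0$ on $\mathbb{R}^N\setminus\{0\}$, and combined with the continuity of $\widehat{\mu}$ it follows that $\widehat{\mu}=0$, hence $\mu=0$, contradicting the Hahn--Banach step.

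The chief obstacle is making the Fourier-side cancellation fully rigorous when $\sigma$ is only bounded and continuous (so $\widehat{\sigma}$ is only a tempered distribution rather than an ordinary function), and handling the fact that nowhere did I require $\sigma$ to be integrable. If a more elementary argument is preferred, one may restrict to the sigmoidal subcase (where $\sigma(z)\to 0$ as $z\to-\infty$ and $\sigma(z)\to 1$ as $z\to+\infty$) and replace the Fourier step by the pointwise limit $\sigma(\lambda(\bm{w}^{\top}\bm{x}+b))\to \mathbf{1}\{\bm{w}^{\top}\bm{x}+b>0\}$ as $\lambda\to\infty$, invoking dominated convergence to deduce that $\mu$ annihilates the indicator of every open half-space, and finally appealing to the fact that half-spaces generate the Borel $\sigma$-algebra on $\mathbb{R}^N$ to conclude $\mu=0$. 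Either route completes the proof, with the choice of closure argument depending on how general a class of activations one wishes to cover.
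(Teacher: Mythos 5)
The paper states this theorem without proof (it is quoted as a known result to justify the expressive power of single-hidden-layer networks), so there is no in-paper argument to compare against; your proposal must stand on its own. Your main route is the standard Cybenko--Hornik duality argument, and its skeleton is sound: Hahn--Banach plus Riesz reduces density in $C(K)$ to showing that a finite signed Borel measure $\mu$ annihilating every ridge function $\bm{x}\mapsto\sigma(\bm{w}^{\top}\bm{x}+b)$ (and the constants) must vanish, and the Fourier step does close the loop. The one place you should be explicit is the claim that $\widehat{\sigma}$ has a point $\xi_0\neq 0$ in its support, since this is exactly where all three hypotheses on $\sigma$ enter: a tempered distribution whose Fourier transform is supported at the origin is a polynomial, and a bounded polynomial is constant; non-constancy then forces $\xi_0\neq 0$ to exist. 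Once you have such a $\xi_0$, the analyticity of $\widehat{\mu_{\bm{w}}}$ (as the transform of a compactly supported measure) gives $\widehat{\mu}(\xi_0\bm{w})=\widehat{\mu_{\bm{w}}}(\xi_0)=0$ for every $\bm{w}$, which already sweeps out all of $\mathbb{R}^{N}$, so the extra ``scaling $\bm{w}\mapsto\lambda\bm{w}$'' step is not even needed. The convolution--multiplication exchange is legitimate here because $\check{\mu}_{\bm{w}}$ is compactly supported and $\widehat{\mu_{\bm{w}}}$ lies in $\mathcal{O}_M$, so the technical worry you raise about $\sigma$ not being integrable is handled by the standard distributional calculus.

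The genuine gap is in your proposed ``more elementary'' alternative ending. Knowing that $\mu$ assigns measure zero to every open half-space does \emph{not} let you conclude $\mu=0$ by appealing to the fact that half-spaces generate the Borel $\sigma$-algebra: two finite signed measures that agree on a generating class agree everywhere only when that class is a $\pi$-system (closed under finite intersections), and the intersection of two half-spaces is not a half-space. This is precisely why Cybenko's original proof does not stop there; it fixes a direction $\bm{w}$, uses differences of half-spaces to show that $\int h(\bm{w}^{\top}\bm{x})\,d\mu(\bm{x})=0$ for indicators of intervals and hence for all bounded measurable $h$ of one variable, and then takes $h(t)=e^{it}$ to get $\widehat{\mu}(\bm{w})=0$ for all $\bm{w}$ --- i.e., it re-enters the Fourier (equivalently, Radon-transform injectivity) argument anyway. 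So the ``elementary'' route is not actually a way to avoid the harmonic analysis; if you keep it, replace the $\sigma$-algebra-generation claim with the one-dimensional marginal argument.
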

This theorem provides a theoretical justification for using neural networks as flexible nonlinear regression models. Although the network parameters are learned from data, the architecture itself is expressive enough to approximate a wide range of input-output relationships. Moreover, deeper architectures often achieve the same approximation accuracy more efficiently, using fewer neurons per layer and enabling hierarchical feature extraction.

\paratitle{Loss Function of Neural Networks.} Consistent with the regression models introduced earlier, neural networks optimize the same class of loss functions, with the distinction that the parameter set now includes \emph{all} learnable weights across both the hidden layers and the output layer. For a neural network with $L$ hidden layers, let
\begin{equation*}\label{}
  \boldsymbol{\Theta}^{(1)},\, \boldsymbol{\Theta}^{(2)},\, \ldots,\, \boldsymbol{\Theta}^{(L)}
\end{equation*}
denote the collections of neuron-level parameters in layers $1$ through $L$, where each \(\boldsymbol{\Theta}^{(l)}\) consists of the parameter vectors ${\small
    \boldsymbol{\Theta}^{(l)}
    =
    \bigl(
        \boldsymbol{\theta}^{(l)}_{1},\,
        \boldsymbol{\theta}^{(l)}_{2},\,
        \ldots,\,
        \boldsymbol{\theta}^{(l)}_{K_l}
    \bigr)}$.
The output layer is parameterized by {\small $
    \boldsymbol{\theta}^{(L+1)}
    =
    \bigl(
        \theta^{(L+1)}_0,\,
        \theta^{(L+1)}_1,\,
        \ldots,\,
        \theta^{(L+1)}_{K_L}
    \bigr)$}.
For regression problems with real-valued outputs, the standard loss function is the mean squared error (MSE):
\[
    \mathcal{L}
    \left(
        \boldsymbol{\Theta}^{(1)}, \ldots, \boldsymbol{\Theta}^{(L)},\,
        \boldsymbol{\theta}^{(L+1)}
    \right)
    =
    \frac{1}{M}
    \sum_{m=1}^{M}
        \Bigl( \hat{y}_m - y_m \Bigr)^2 .
\]
For binary classification tasks, the network output is interpreted as a probability
$\hat{p}_m$, and the cross-entropy loss is used:
\[
    \mathcal{L}
    \left(
        \boldsymbol{\Theta}^{(1)}, \ldots, \boldsymbol{\Theta}^{(L)},\,
        \boldsymbol{\theta}^{(L+1)}
    \right)
    =
    -\frac{1}{M}
    \sum_{m=1}^{M}
        \Bigl(
            y_m \log \hat{y}_m
            + (1 - y_m)\log(1 - \hat{y}_m)
        \Bigr).
\]
Thus, while the loss functions mirror those in classical regression and classification, the key distinction of neural networks lies in their learnable nonlinear transformations, which endow them with a far more expressive hypothesis space.

\subsection{Backpropagation  (BP) Algorithm}

Training a neural network is fundamentally different from fitting a linear regression model. Because the network contains nonlinear activation functions and parameters that interact across layers, the resulting prediction function is \emph{nonlinear} in all of its parameters. As a consequence, the optimal parameters cannot be obtained in closed form, and iterative gradient-based optimization methods -- most commonly stochastic gradient descent and its variants -- must be employed. A central challenge then arises: neural networks often contain thousands or even millions of parameters arranged across multiple layers, and directly computing the gradient of the loss with respect to each parameter via naive application of the chain rule would be computationally prohibitive. The breakthrough that makes large-scale neural network training feasible is the \emph{backpropagation} (BP) algorithm.

The BP algorithm provides an efficient, systematic, and scalable way to compute all parameter gradients by exploiting the layered structure of the network. Rather than recomputing intermediate derivatives repeatedly, BP propagates error signals backward from the output layer to the earlier layers, reusing computations along the way. This mechanism enables the gradient of every parameter to be obtained with a computational cost that is only a small constant multiple of the cost of evaluating the network itself.

In the remainder of this subsection, we derive the BP algorithm step by step using a layer-by-layer formulation. Beginning with a single-hidden-layer network to illustrate the essential ideas, we then generalize the derivation to networks with multiple hidden layers. This development reveals the recurring structure behind gradient computations and highlights why backpropagation is both elegant and indispensable in modern deep learning.

\paratitle{BP Algorithm for Single-hidden-layer Neural Networks.} Consider the single-hidden-layer neural network defined earlier:
\[
    \hat{y}_m
    =
    \theta^{(2)}_0
    +
    \sum_{k_1=1}^{K_1}
        \theta^{(2)}_{k_1}\,
        \sigma\!\left(
            \boldsymbol{\theta}^{(1)\top}_{k_1}\boldsymbol{x}_m
        \right),
\]
and let $\mathcal{L}(\boldsymbol{\Theta}^{(1)},\boldsymbol{\theta}^{(2)})$ denote the loss function. We rewrite the network in a layer-by-layer form. The output layer is
\begin{equation}\label{eq:nn_output}
    \hat{y}_m
    =
    \boldsymbol{\theta}^{(2)\top}\,
    \boldsymbol{h}^{(1)}(\boldsymbol{x}_m),
\end{equation}
and the hidden layer is
\begin{equation}\label{eq:nn_hidden}
    h_{k_1}^{(1)}(\boldsymbol{x}_m)
    =
    \sigma\!\left(
        \boldsymbol{\theta}^{(1)\top}_{k_1}\boldsymbol{x}_m
    \right),
    \qquad
    k_1=1,\ldots,K_1.
\end{equation}
We derive the backpropagation (BP) algorithm using the familiar pattern
\[
    \text{gradient}
    =
    (\text{prediction} - \text{label}) \times \text{input} = \text{error} \times \text{input} .
\]

$\bullet$ {\bf Gradients for the Output Layer.} For the output layer, we treat $\boldsymbol{h}^{(1)}(\boldsymbol{x}_m)$ as the input. Equation~\eqref{eq:nn_output} then becomes a linear regression model. Following the pattern ``$\text{gradient} = \text{error} \times \text{input}$'', we can directly write the corresponding derivative of the parameter  $\theta^{(2)}_{k_1}$ for the $m$-th training sample as
\[
    \frac{\partial \mathcal{L}_m}{\partial \theta^{(2)}_{k_1}}
    =
    (\hat{y}_m - y_m)\cdot
    h^{(1)}_{k_1}(\boldsymbol{x}_m).
\]
Here, the {\em error} is $(\hat{y}_m - y_m)$, and the {\em input} is
$h^{(1)}_{k_1}(\boldsymbol{x}_m)$ from the hidden layer.

$\bullet$ {\bf Gradients for the Hidden Layer.} Now consider the parameters $\boldsymbol{\theta}^{(1)}_{k_1}$ of the $k$-th hidden neuron for the first hidden layer. Applying the chain rule to \eqref{eq:nn_output} and \eqref{eq:nn_hidden}, we obtain
\[
\frac{\partial \mathcal{L}_m}{\partial \boldsymbol{\theta}^{(1)}_{k_1}}
=
\frac{\partial \mathcal{L}_m}{\partial \hat{y}_m}
\cdot
\frac{\partial \hat{y}_m}{\partial h^{(1)}_{k_1}}
\cdot
\frac{\partial h^{(1)}_{k_1}}{\partial (\boldsymbol{\theta}^{(1)\top}_{k_1}\boldsymbol{x}_m)}
\cdot
\frac{\partial (\boldsymbol{\theta}^{(1)\top}_{k_1}\boldsymbol{x}_m)}
     {\partial \boldsymbol{\theta}^{(1)}_{k_1}}.
\]
Each term is
\begin{equation}\footnotesize
  \frac{\partial \mathcal{L}_m}{\partial \hat{y}_m} = (\hat{y}_m - y_m),
\quad
\frac{\partial \hat{y}_m}{\partial h^{(1)}_{k_1}} = \theta^{(2)}_{k_1},
\quad
\frac{\partial h^{(1)}_{k_1}}{\partial (\boldsymbol{\theta}^{(1)\top}_{k_1}\boldsymbol{x}_m)}
=
\sigma'\!\left(
    \boldsymbol{\theta}^{(1)\top}_{k_1}\boldsymbol{x}_m
\right),
\quad
\frac{\partial (\boldsymbol{\theta}^{(1)\top}_{k_1}\boldsymbol{x}_m)}
     {\partial \boldsymbol{\theta}^{(1)}_{k_1}}
=
\boldsymbol{x}_m.
\end{equation}
Thus,
\[
\frac{\partial \mathcal{L}_m}{\partial \boldsymbol{\theta}^{(1)}_{k_1}}
=
(\hat{y}_m - y_m)\,
\theta^{(2)}_{k_1}\,
\sigma'\!\left(
    \boldsymbol{\theta}^{(1)\top}_{k_1}\boldsymbol{x}_m
\right)
\boldsymbol{x}_m.
\]

$\bullet$ {\bf Backpropagation of Error Signals.} We define the hidden layer as layer \(1\) and the output layer as layer \(2\). The error at
layer \(2\) is
\begin{equation}\label{eq:error_2}
    \delta^{(2)} = \hat{y}_m - y_m.
\end{equation}
To compute the gradient for the hidden layer, we first determine how much of the
output-layer error should be assigned to each hidden neuron. This quantity is the {\em error signal} for neuron \(k_1\) in layer \(1\), denoted as $\delta^{(1)}_{k_1}$. It is defined by propagating $\delta^{(2)}$ backward through the weight connecting neuron \(k_1\) to the output, \ie
\begin{equation}\label{eq:error_1_new_rewrite}
    \delta^{(1)}_{k_1}
    =
    \delta^{(2)}\,
    \theta^{(2)}_{k_1}\,
    \sigma'\!\left(
        \boldsymbol{\theta}^{(1)\top}_{k_1}\boldsymbol{x}_m
    \right).
\end{equation}
Once the error signal is computed, the gradient with respect to the hidden-layer
parameters follows immediately:
\begin{equation}\label{eq:pattern}
    \frac{\partial \mathcal{L}_m}{\partial \boldsymbol{\theta}^{(1)}_{k_1}}
    =
    \delta^{(1)}_{k_1}\,\boldsymbol{x}_m.
\end{equation}

This expression reflects the fundamental pattern ``$\text{gradient} = \text{error} \times \text{input}$''. Here, the {\em error} term is the error signal $\delta^{(1)}_{k_1}$, which measures the influence of neuron \(k_1\) on the final prediction. The {\em input} term is the vector $\boldsymbol{x}_m$, indicating how strongly the neuron was activated by the sample. Together, they determine how the parameters of the hidden neuron should be updated (\ie the gradient ${\partial \mathcal{L}_m}/{\partial \boldsymbol{\theta}^{(1)}_{k_1}}$).

\paratitle{Extension to Multiple Hidden Layers.} For a neural network with $L$ hidden layers, we index the hidden layers by $l = 1,\ldots,L$ and the output layer by $l = L+1$. The backpropagation algorithm computes an \emph{error signal} for each neuron, layer by layer, starting from the output layer and moving backward. For layer $l$, the error signal of neuron $k_l$ is obtained by propagating backward the error signals from layer $l+1$:
\[
    \delta^{(l)}_{k_l}
    =
    \sum_{k_{l+1}=1}^{K_{l+1}}
        \delta^{(l+1)}_{k_{l+1}}\,
        \theta^{(l+1)}_{k_{l+1},k_l}\,
        \sigma'\!\left(
            z^{(l)}_{k_l}
        \right),
\]
where
\begin{itemize}
    \item $\delta^{(l)}_{k_l}$ is the error signal of neuron $k_l$ in layer $l$,
    \item $\theta^{(l+1)}_{k_{l+1},k_l}$ is the weight connecting neuron $k_l$ in layer $l$
          to neuron $k_{l+1}$ in layer $l+1$,
    \item $\delta^{(l+1)}_{k_{l+1}}$ is the error signal propagated from layer $l+1$,
    \item $z^{(l)}_{k_l}$ is the pre-activation input of neuron $k_l$ in layer $l$,
    \item $\sigma'(\cdot)$ is the derivative of the activation function.
\end{itemize}

The formula provides an intuitive interpretation of how error signals propagate
backward through the network. The error signal of a neuron in layer $l$ is obtained by
\emph{collecting all error signals coming from the next layer}, weighting each by the
strength of its connection, and then scaling the result by the neuron's activation sensitivity. More concretely:
\begin{itemize}
    \item Each neuron in layer $l$ sends its output to all neurons in layer $l+1$.
    \item When the error signals $\delta^{(l+1)}_{k_{l+1}}$ are computed for layer $l+1$,
          they flow backward through their respective weights
          $\theta^{(l+1)}_{k_{l+1},k_l}$.
    \item A larger weight indicates a stronger forward influence of neuron $k_l$ on
          neuron $k_{l+1}$, so a larger portion of the error should be assigned to
          neuron $k_l$.
    \item Summing over all neurons in layer $l+1$ aggregates these contributions.
    \item Multiplying by $\sigma'\!\left(z^{(l)}_{k_l}\right)$ adjusts the total by the
          local slope of the activation function, capturing how sensitive the neuron is
          at its current activation level.
\end{itemize}
In summary,
\[
    \delta^{(l)}_{k_l}
    =
    \text{(sum of error signals from the next layer)}
    \times
    \text{(local activation sensitivity)}.
\]
This recursion is a direct consequence of the chain rule: a neuron receives a nonzero error signal only to the extent that it influenced the next layer and was active enough for that influence to matter.

\paratitle{Algorithm and Efficiency Analysis.} Algorithm~\ref{alg:bp} summarizes the full backpropagation procedure for a feedforward neural network with $L$ hidden layers. The algorithm consists of three sequential stages: a forward pass that computes all intermediate activations, a backward pass that propagates error signals from the output layer back through the hidden layers, and a final parameter update step that applies the gradient-based learning rule. This layer-by-layer organization makes it possible to compute all partial derivatives
efficiently and systematically.

\begin{algorithm}[t]\footnotesize
\caption{Backpropagation Algorithm}
\label{alg:bp}
\begin{algorithmic}[1]
\Require training sample $(\boldsymbol{x}_m, y_m)$;
         network parameters $\{\boldsymbol{\Theta}^{(l)}\}_{l=1}^{L}$ and $\boldsymbol{\theta}^{(L+1)}$;
         learning rate $\eta$
\Statex

\State \textbf{Forward pass:}
\State $\boldsymbol{h}^{(0)} \gets \boldsymbol{x}_m$
\For{$l = 1, \ldots, L$}
    \For{$k_l = 1, \ldots, K_l$}
        \State $z^{(l)}_{k_l} \gets \boldsymbol{\theta}^{(l)\top}_{k_l}\,\boldsymbol{h}^{(l-1)}$
        \State $h^{(l)}_{k_l} \gets \sigma\!\bigl(z^{(l)}_{k_l}\bigr)$
    \EndFor
\EndFor
\State $\hat{y}_m \gets \theta^{(L+1)}_0 + \displaystyle\sum_{k_L=1}^{K_L} \theta^{(L+1)}_{k_L}\,h^{(L)}_{k_L}$
\State compute loss $\mathcal{L}_m$ and set
       $\delta^{(L+1)} \gets \dfrac{\partial \mathcal{L}_m}{\partial \hat{y}_m}$
       \Comment{e.g., $\delta^{(L+1)} = \hat{y}_m - y_m$ for MSE}
\Statex

\State \textbf{Backward pass (error signals):}
\For{$k_L = 1, \ldots, K_L$}
    \State $\delta^{(L)}_{k_L}
        \gets
        \delta^{(L+1)}\,\theta^{(L+1)}_{k_L}\,
        \sigma'\!\bigl(z^{(L)}_{k_L}\bigr)$
\EndFor
\For{$l = L-1, \ldots, 1$}
    \For{$k_l = 1, \ldots, K_l$}
        \State $\delta^{(l)}_{k_l}
            \gets
            \sigma'\!\bigl(z^{(l)}_{k_l}\bigr)
            \displaystyle\sum_{k_{l+1}=1}^{K_{l+1}}
                \theta^{(l+1)}_{k_{l+1},k_l}\,
                \delta^{(l+1)}_{k_{l+1}}$
    \EndFor
\EndFor
\Statex

\State \textbf{Gradient updates:}
\State $\theta^{(L+1)}_0 \gets \theta^{(L+1)}_0 - \eta\,\delta^{(L+1)}$
\For{$k_L = 1, \ldots, K_L$}
    \State $\theta^{(L+1)}_{k_L}
        \gets
        \theta^{(L+1)}_{k_L}
        - \eta\,\delta^{(L+1)}\,h^{(L)}_{k_L}$
\EndFor
\For{$l = 1, \ldots, L$}
    \For{$k_l = 1, \ldots, K_l$}
        \State $\boldsymbol{\theta}^{(l)}_{k_l}
            \gets
            \boldsymbol{\theta}^{(l)}_{k_l}
            - \eta\,\delta^{(l)}_{k_l}\,\boldsymbol{h}^{(l-1)}$
    \EndFor
\EndFor

\end{algorithmic}
\end{algorithm}

As shown in Algorithm~\ref{alg:bp}, the backpropagation procedure is highly efficient because it exploits the layered structure of the network and reuses intermediate computations from the forward pass. During the backward sweep, each layer computes its error signals using only local information: the error signals from the next layer, the corresponding weights, and the stored pre-activation values. This avoids redundant computations and prevents the exponential blow-up that would occur if derivatives were computed independently for each parameter.

From a computational perspective, the total running time of backpropagation scales linearly with the number of parameters in the network. Let $W$ denote the total number of weights (including biases). Both the forward pass and the backward pass require processing each weight exactly once, through either a multiplication or an elementwise activation evaluation. Therefore, the overall computational cost of one full forward-backward cycle satisfies
\[
    T_{\text{BP}} = O(W),
\]
up to a small constant factor. This linear-time complexity is the key reason why
backpropagation remains practical even for modern deep neural networks containing millions or billions of parameters.

\paratitle{Limitations of the Backpropagation Algorithm.} Despite its central role in modern deep learning, the backpropagation algorithm has a fundamental limitation: the problem of \emph{vanishing} and \emph{exploding} gradients. Because BP repeatedly applies the chain rule across many layers, gradients are multiplied by weights and activation derivatives at every layer. In deep networks, these multiplicative factors may be consistently smaller than~$1$
or larger than~$1$, causing gradients to either decay toward zero or grow
uncontrollably. As a result, early layers may receive almost no learning signal
(vanishing gradients), or the training process may become numerically unstable
(exploding gradients).  To see these limitations more concretely, we can examine the recursive formula for the error signals. For a network with $L$ hidden layers, the error signal in layer~$l$ is given by
\[
    \delta^{(l)}_{k_l}
    =
    \sigma'\!\left(z^{(l)}_{k_l}\right)
    \sum_{k_{l+1}=1}^{K_{l+1}}
        \theta^{(l+1)}_{k_{l+1},k_l}\,
        \delta^{(l+1)}_{k_{l+1}}.
\]
Along a single path of neurons
\[
    k_1 \to k_2 \to \cdots \to k_L \to \text{output},
\]
ignoring the summation over branches, repeated substitution of the recursion yields
an approximate expression for the error signal in layer~1:
\[
    \delta^{(1)}_{k_1}
    \approx
    \delta^{(L+1)}
    \prod_{l=1}^{L}
        \theta^{(l+1)}_{k_{l+1},k_l}\,
        \sigma'\!\left(z^{(l)}_{k_l}\right).
\]
This product form clearly reveals the source of vanishing and exploding gradients. If the magnitudes of the weights $\theta^{(l+1)}_{k_{l+1},k_l}$ and activation derivatives $\sigma'\!\left(z^{(l)}_{k_l}\right)$ are typically less than~$1$, their product shrinks rapidly with depth, causing the gradient to vanish. If these terms are often greater than~$1$, the product grows exponentially, leading to exploding gradients. Thus, the recursive structure of backpropagation directly explains why training deep networks can be slow or unstable unless additional mechanisms are introduced to stabilize gradient flow.

A variety of techniques have been developed to mitigate vanishing and exploding
gradients. Common remedies include using alternative activation functions
(\eg ReLU and its variants), careful weight initialization strategies,
normalization layers such as batch normalization, and architectural designs
that improve gradient flow, most notably residual connections in deep
residual networks. These methods substantially stabilize training in very
deep models. We will discuss these techniques in detail in the course on
deep learning.

\newpage

\section{Regularization}

In the preceding sections, we have shown that regression models equipped with flexible basis functions -- such as neural networks -- can, in principle, approximate arbitrarily complex data distributions. At first glance, this appears to provide a universal modeling tool capable of capturing any underlying relationship in the data. However, with such expressive power comes a new challenge: the risk of overfitting. When a model becomes excessively strong, it may not only learn the general patterns present in the data but also memorize noise, idiosyncrasies, or sample-specific artifacts that do not generalize beyond the training set. Consequently, although highly expressive models can achieve near-perfect performance on training data, they may exhibit poor predictive accuracy on unseen samples.

In many practical scenarios, a moderately complex model is often preferable, as it captures the essential structure or ``shared regularities'' of the data while ignoring incidental fluctuations. In contrast, an overly flexible model may fit every detail of the training samples, including those arising from randomness, thereby harming its generalization ability. To address this issue, one must adopt techniques that explicitly control model complexity. This section introduces one of the most important approaches for achieving this goal --- regularization.

\subsection{Underfitting and Overfitting}

The linear basis function regression introduced in the previous section, together with the regularization techniques to be discussed in this chapter, plays a crucial role in addressing the problems of underfitting and overfitting. In this subsection, we focus on introducing these two fundamental concepts.

\paratitle{Concepts of Underfitting and Overfitting.} Underfitting and overfitting are two closely related yet fundamentally opposite phenomena that arise from the mismatch between model capacity and the underlying data-generating process.
\begin{itemize}
    \item \textbf{Underfitting} occurs when the regression model (together with its estimated parameters) has insufficient capacity to capture the true underlying data-generating process. As a result, it fails to fit not only unseen data but also the training data adequately, leading to high error on both.
    \item \textbf{Overfitting} occurs when the regression model (and its estimated parameters) is overly flexible and fits not only the true underlying structure but also the noise and sample-specific irregularities in the training data. Consequently, although it achieves very low training error, its performance deteriorates significantly on unseen data drawn from the same population.
\end{itemize}
{\em Underfitting typically arises when the model is too simple to capture the underlying structure of the data, whereas overfitting usually occurs when the model is excessively complex relative to the amount of available data.}

\begin{figure}[t]
    \centering
    \includegraphics[width=0.7\columnwidth]{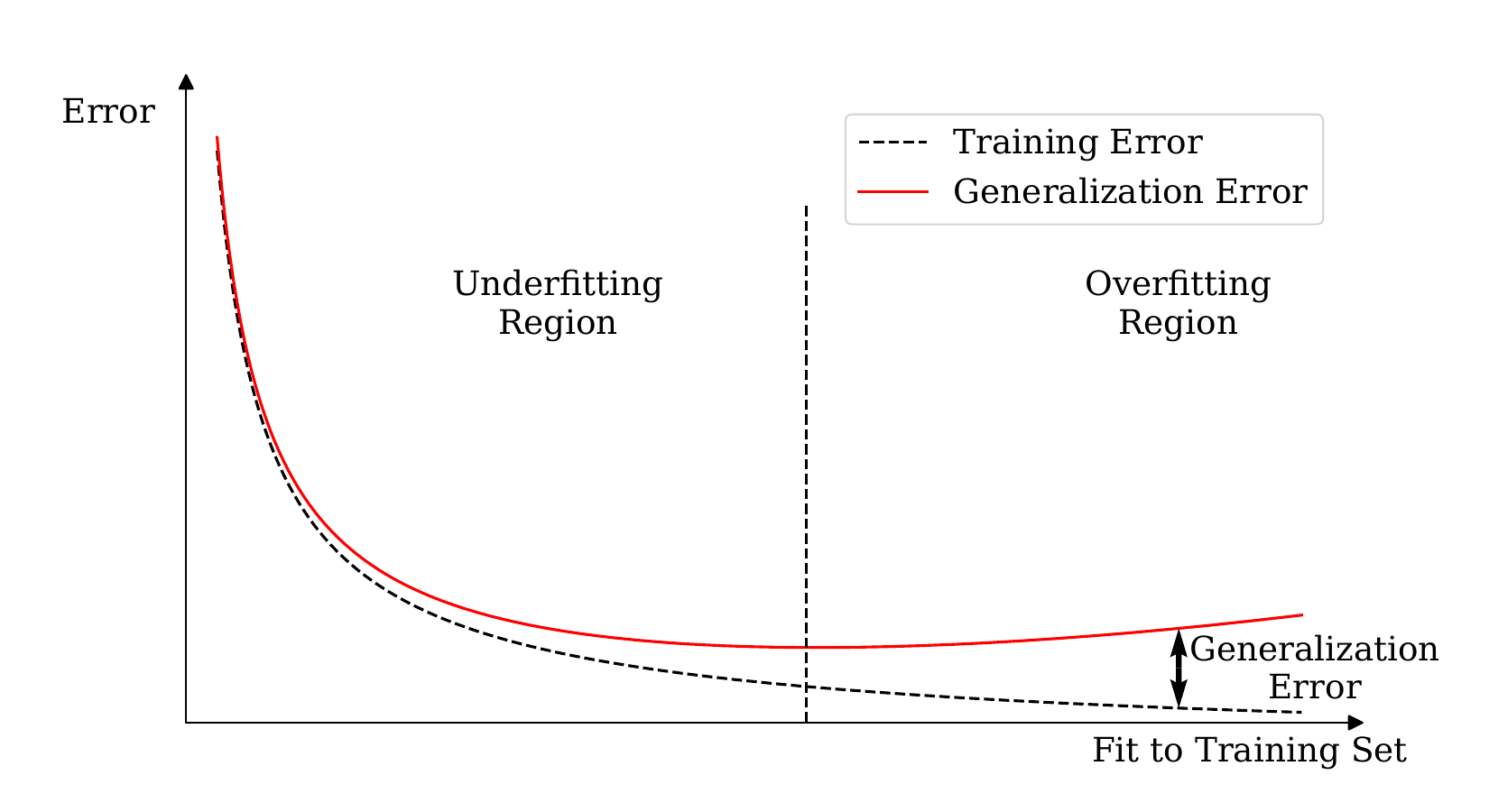}
    \caption{\small Relationship between underfitting, overfitting, training error, and generalization error.}
    \label{fig:lr:k28_2}
\end{figure}

We refer to the model's prediction error on the training dataset $\mathcal{D}_{\text{train}}$ as the \textbf{training error}, and the prediction error on previously unseen data (such as the test set $\mathcal{D}_{\text{test}}$) as the \textbf{generalization error}. The relationship between these two quantities under underfitting and overfitting is illustrated in Figure~\ref{fig:lr:k28_2}. When the model is too simple, both the training error and generalization error remain high, reflecting underfitting. As the model becomes more complex, the training error continues to decrease; however, beyond a certain level of complexity, the generalization error begins to rise due to overfitting. The discrepancy between the training error and the generalization error is known as the {generalization gap}. As shown in Figure~\ref{fig:lr:k28_2}, this gap widens as the degree of overfitting increases. A model that achieves a low generalization error is said to possess strong {generalization ability} or simply good {generalization}.

\paratitle{An Illustrative Example of Underfitting and Overfitting.} These concepts may seem abstract. To provide a more tangible understanding of underfitting and overfitting, we illustrate them with a concrete example.\footnote{This example is adapted from the book \emph{Pattern Recognition and Machine Learning}~\cite{bishop2006pattern}, with slight modifications.} Consider the dataset
\[
    \mathcal{D} = \{(x_1, y_1), (x_2, y_2), \ldots, (x_M, y_M)\},
\]
where both $x_m \in \mathbb{R}$ and $y_m \in \mathbb{R}$ are real-valued. The data are generated according to
\begin{equation}\label{eq:sin_data}
    y = \sin(2\pi x) + \epsilon,
\end{equation}
where $y$ is obtained by adding Gaussian noise $\epsilon$ with zero mean to the sinusoidal function of $x$. From the generative model~\eqref{eq:sin_data}, we sample $M = 10$ data points, resulting in the dataset $\mathcal{D}$ shown in Figure~\ref{fig:lr:sine_basis}.

\begin{figure}[t]
    \centering
    \includegraphics[width=0.4\columnwidth]{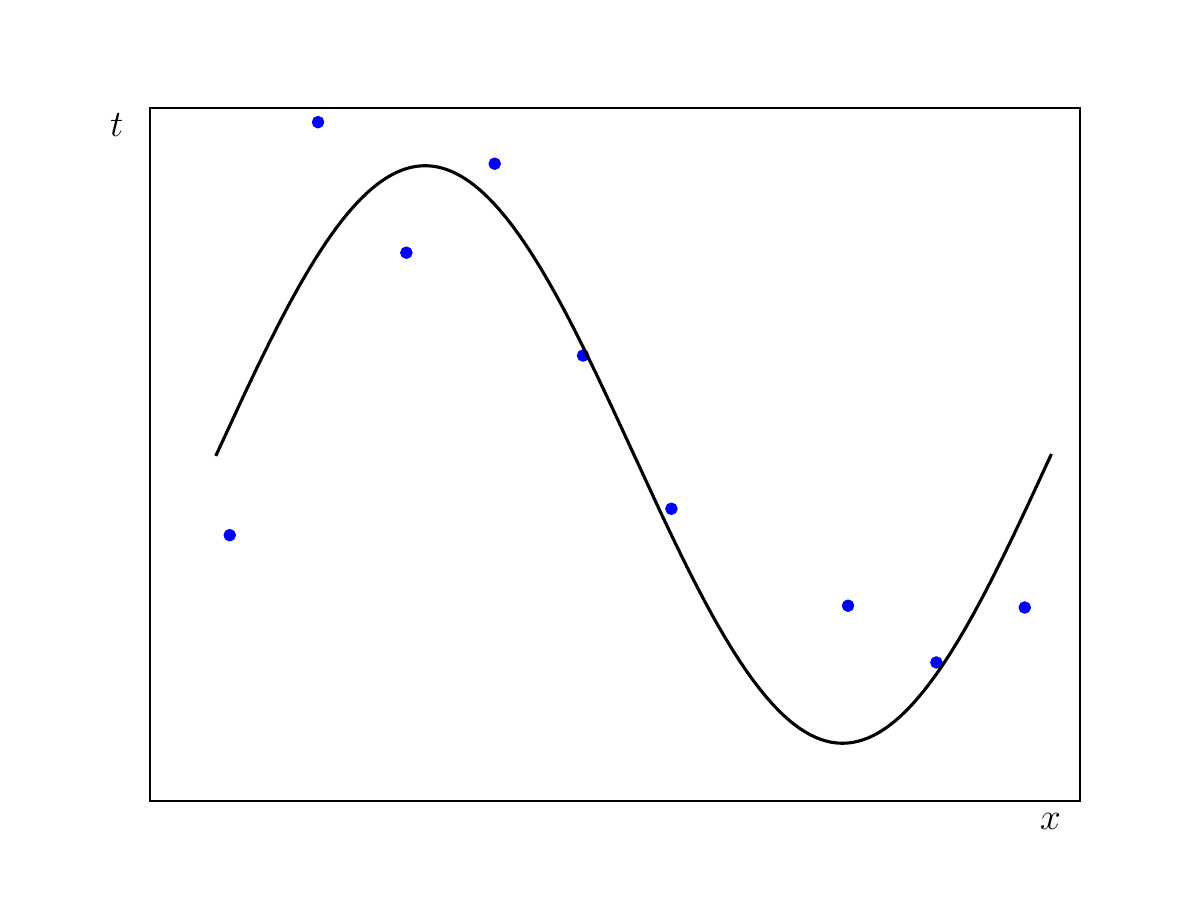}
    \caption{\small Visualization of the sampled dataset $\mathcal{D}$ generated from the noisy sinusoidal model $y=\sin(2\pi x)+\epsilon$, where $\epsilon$ is zero-mean Gaussian noise. The ten data points illustrate how random fluctuations around the underlying periodic pattern can make the true data-generating process difficult to observe directly.}
    \label{fig:lr:sine_basis}
\end{figure}

In real-world data analysis, the true data-generating process (such as $y = \sin(2\pi x) + \epsilon$) is unknown. Therefore, we approximate it using a polynomial regression model of degree $K$:
\begin{equation}\label{eq:lr:k29_1}
    \hat{y}_m(x_m, \bm{\theta})
    = \theta_0 + \theta_1 x_m + \theta_2 x_m^2 + \cdots + \theta_K x_m^K
    = \sum_{k=0}^K \theta_k x_m^k.
\end{equation}
When adopting the regression function in~\eqref{eq:lr:k29_1}, we must determine the degree $K$ of the polynomial. This choice is made during the design of the regression function, rather than through optimization of the objective function. Such parameters -- defined prior to model training -- are referred to as \textbf{hyperparameters}.

The value of the hyperparameter $K$ determines the complexity of the polynomial regression model. Figure~\ref{fig:lr:sine_mpoly} illustrates how the fitted curves obtained via least squares change as the model complexity increases. The phenomenon can be summarized as follows:
\begin{itemize}
    \item \textbf{Underfitting (e.g., $K=1$).}
    When $K=1$, the model reduces to simple linear regression. As shown in Figure~\ref{fig:lr:sine_mpoly_a}, such a low-capacity model cannot capture the nonlinear structure of the sinusoidal data. The fitted line exhibits a large systematic deviation from the underlying pattern, resulting in high error on both the training points and the true data-generating process.

    \item \textbf{Appropriate fitting (moderate $K$).}
    For intermediate polynomial degrees (e.g., $K=3$ or $K=4$), the model becomes expressive enough to represent the smooth curvature of the sinusoidal function while still avoiding unnecessary fluctuations. In this regime, the fitted curve follows the overall trend of the data without chasing noise, achieving a desirable balance between bias and variance.

    \item \textbf{Overfitting (e.g., $K=9$).}
    When $K=9$, the model becomes overly flexible relative to the very small dataset ($M=10$). A ninth-degree polynomial can interpolate all data points exactly, producing a curve that passes through every sample. However, as shown in Figure~\ref{fig:lr:sine_mpoly_c}, the resulting function oscillates wildly and deviates severely from the true underlying distribution. This behavior reflects classic overfitting: extremely low training error but poor generalization.
\end{itemize}

\begin{figure}[t]
    \centering
    \subfigure[Underfitting]{\includegraphics[width=0.3\linewidth]{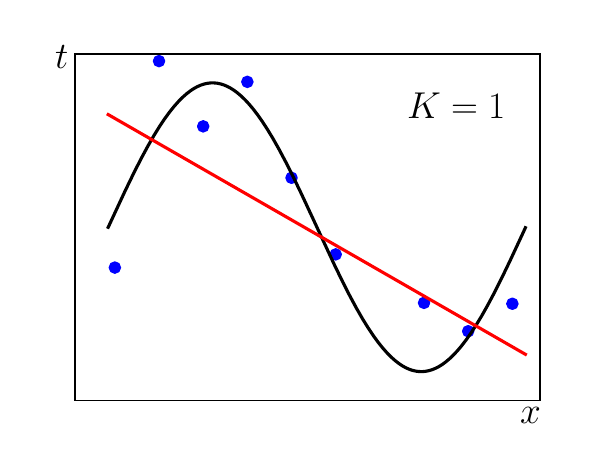}\label{fig:lr:sine_mpoly_a}}~~
    \subfigure[Appropriate fitting]{\includegraphics[width=0.3\linewidth]{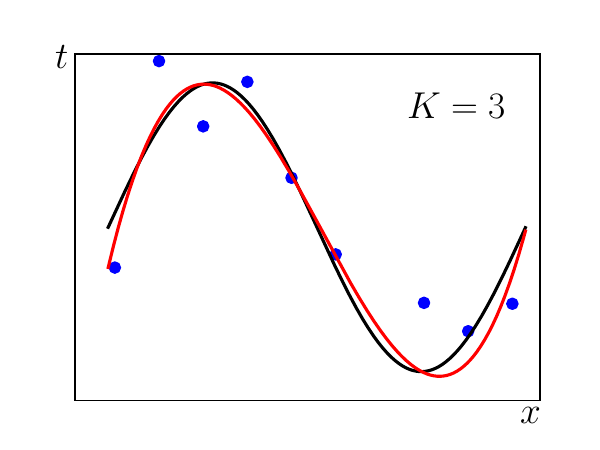}\label{fig:lr:sine_mpoly_b}}
    \subfigure[Overfitting]{\includegraphics[width=0.3\linewidth]{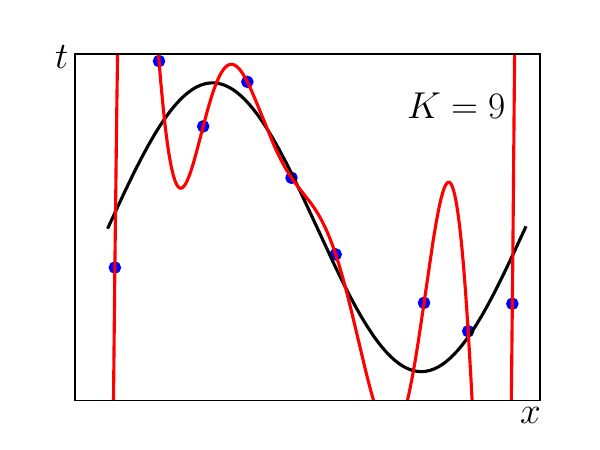}\label{fig:lr:sine_mpoly_c}}
    \caption{\small As the model complexity increases, the fitted polynomial model transitions from underfitting to overfitting.}
    \label{fig:lr:sine_mpoly}
\end{figure}

Figure~\ref{fig:lr:sine_residual} shows the performance of the polynomial regression model on both the training set (training error) and the test set (generalization error) for different choices of $K$. When $K < 3$, the generalization error decreases as the training error decreases, indicating that the model is in the underfitting regime. When $K > 8$, although the training error drops to zero, the generalization error rises sharply. This sudden deterioration reflects overfitting caused by an overly complex model. The overall trend in Figure~\ref{fig:lr:sine_residual} is fully consistent with the conceptual illustration in Figure~\ref{fig:lr:k28_2}.

\begin{figure}[t]
    \centering
    \includegraphics[width=0.5\columnwidth]{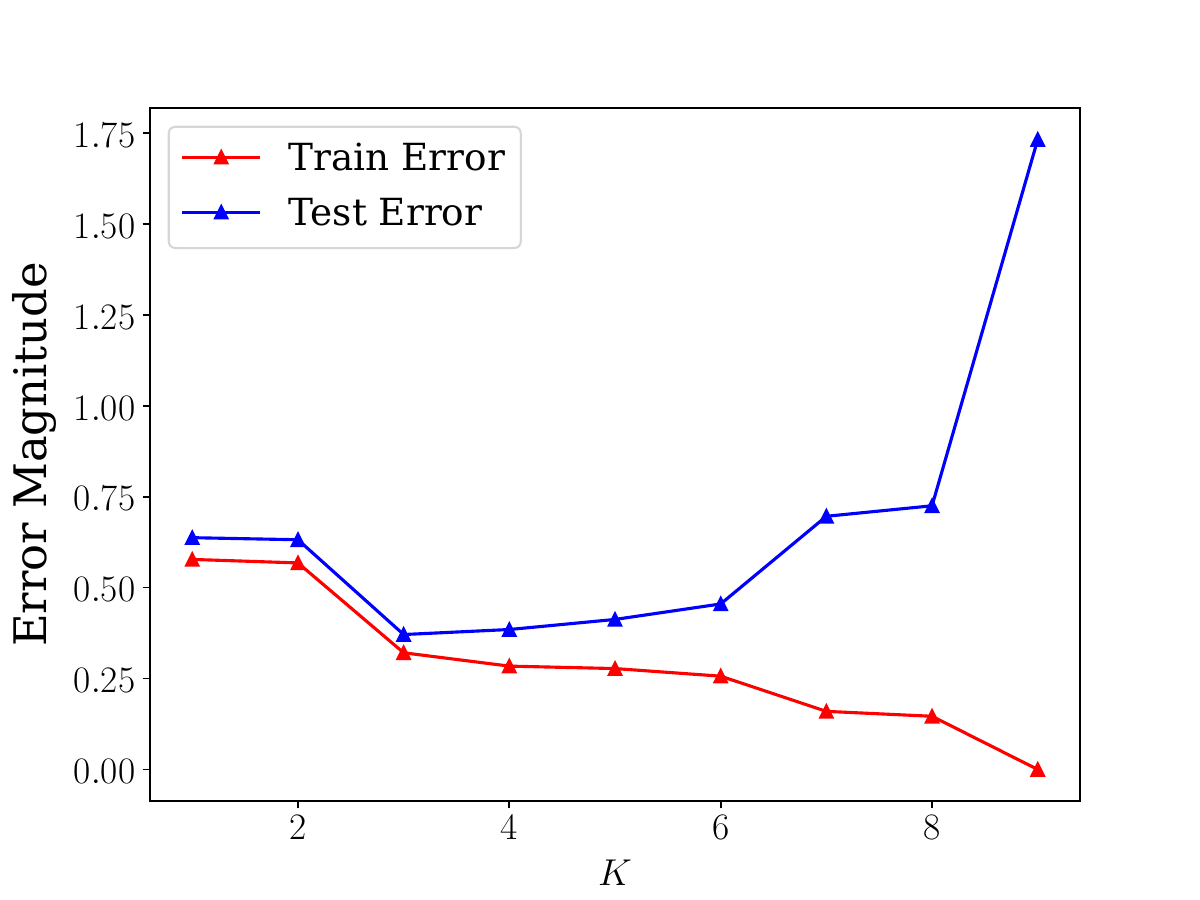}
    \caption{\small Behavior of training and test errors under different model complexities, where the polynomial degree $K$ controls the capacity of the model.}
    \label{fig:lr:sine_residual}
\end{figure}

\paratitle{Approaches to Address Underfitting and Overfitting.} As illustrated in the previous examples, underfitting is relatively straightforward to remedy. The key is to increase the complexity of the model so that it has sufficient capacity to capture the underlying structure of the data. The linear basis function models introduced earlier in this chapter achieve this by expanding the input variable into a higher-dimensional feature space, thereby increasing the expressive power of the regression model and mitigating underfitting.

In contrast, resolving overfitting is more challenging. Here, we introduce three commonly used approaches.

\textbf{Method 1: Increasing the amount of training data.} Increasing the size of the training dataset provides the model with a fuller and more stable picture of the underlying data-generating process. This reduces the chance that the model will latch onto noise or accidental patterns that arise when only a small number of samples are available. Figure~\ref{fig:lr:sine_npoly} illustrates the behavior of a polynomial regression model with $K=9$ when the number of training samples is increased from $M=15$ to $M=100$. When $M=15$, the model already avoids the severe overfitting observed when only 10 samples are used. When $M=100$, the model fits the underlying sinusoidal function remarkably well, indicating that sufficient data can effectively prevent overfitting even for high-capacity models.

\begin{figure}[t]
    \centering
    \subfigure{\includegraphics[width=0.4\linewidth]{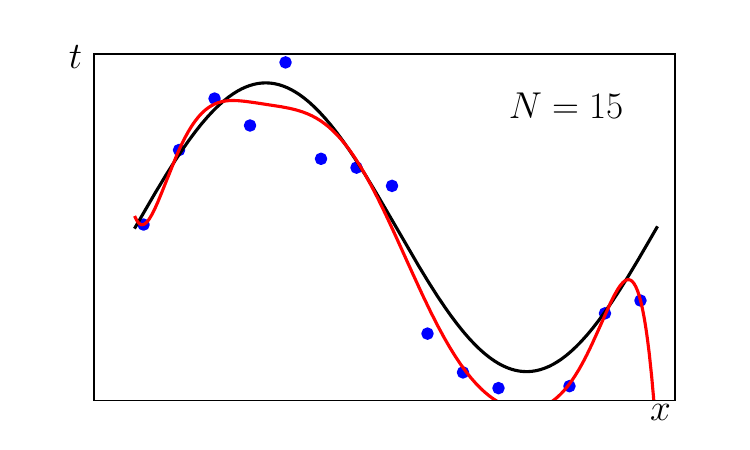}\label{fig:lr:sine_npoly_a}}~~
    \subfigure{\includegraphics[width=0.4\linewidth]{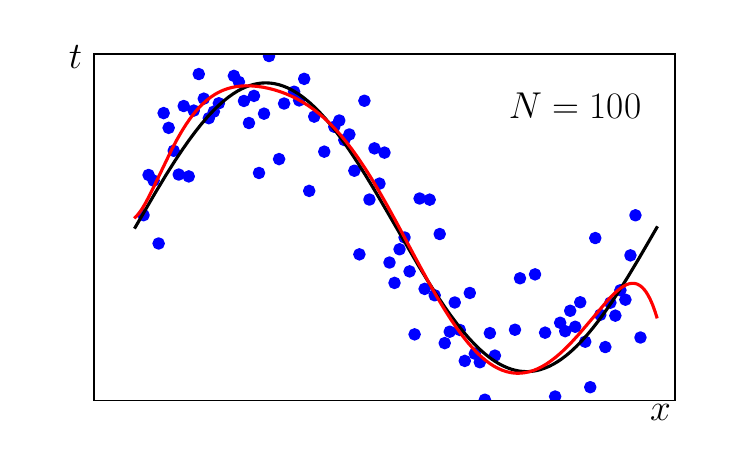}\label{fig:lr:sine_npoly_b}}
    \caption{\small Preventing overfitting by increasing the amount of training data.}
    \label{fig:lr:sine_npoly}
\end{figure}

\textbf{Method 2: Cross-validation.} In many practical scenarios, increasing the amount of training data is difficult or even infeasible. Cross-validation therefore provides an effective alternative for tuning the hyperparameters of a regression model -- such as the polynomial degree $K$ -- so as to strike a balance between model complexity and generalization performance. In {cross-validation}, the original training dataset is further partitioned into two subsets: one subset is used for training the model parameters~$\boldsymbol{\theta}$, and the other subset serves as a \textbf{validation set} for selecting hyperparameters. The procedure typically involves the following steps:
\begin{enumerate}
  \item Specify several candidate values for the hyperparameter (say, $N$ choices).
  \item Train $N$ models using the training subset, each corresponding to a different hyperparameter value.
  \item Evaluate their generalization errors on the validation set.
  \item Plot the relationship between the hyperparameter and the generalization error (as in Figures~\ref{fig:lr:sine_mpoly} and \ref{fig:lr:sine_residual}) and select the hyperparameter value that yields the smallest validation error.
\end{enumerate}

This procedure is known as \textbf{simple cross-validation}. More advanced variants include \textbf{leave-one-out cross-validation (LOOCV)}~\cite{lachenbruch1968estimation} and \textbf{$k$-fold cross-validation}~\cite{geisser1975predictive}, which provide more stable or more computationally efficient estimates of the generalization error.

In addition, regularization is another important technique for preventing overfitting, and we will discuss it in detail in the next subsection.

\subsection{Regularization Terms and Ridge/LASSO Regression}

In statistics and machine learning, the term \textbf{regularization} refers to the process of enforcing ``simpler'' solutions through certain techniques. Regularization is commonly used either to obtain stable solutions to ill-posed problems or to prevent model overfitting. The theoretical foundation behind regularization is the principle of \textbf{Occam's Razor}, which states that ``entities should not be multiplied beyond necessity,'' or in essence, simpler models are preferable when possible. Regularization can be implemented in two primary ways:
\begin{itemize}
    \item \textbf{Explicit regularization}. This refers to approaches that incorporate a {\em regularization term} directly into the optimization objective. Such terms are often interpreted as priors, penalties, or constraints.
    \item \textbf{Implicit regularization}. This includes all forms of regularization not explicitly added to the loss function, such as early stopping in gradient-based optimization or discarding outliers. Implicit regularization techniques are widely used in training deep neural networks.
\end{itemize}
In this section, we focus primarily on explicit regularization methods in regression models, namely the introduction of {\bf Regularization Terms}.

\paratitle{Regularization Terms and Structural Risk Minimization.}
Introducing a regularization term is one of the most fundamental ways to apply explicit regularization in regression models. The core idea is to modify the original loss function by adding a term that penalizes overly complex models. The regularized formulation is expressed as
\begin{equation}
    \label{eq:lr:k25_1}
    \mathcal{J}(\boldsymbol{\theta})
    = \mathcal{L}(\boldsymbol{\theta}; \boldsymbol{X}, \boldsymbol{y})
    + \lambda \, \Omega(\boldsymbol{\theta}).
\end{equation}
Here, $\mathcal{L}(\boldsymbol{\theta}; \boldsymbol{X}, \boldsymbol{y})$ is the loss function, measuring the discrepancy between the model's predictions and the observed data. The term $\Omega(\boldsymbol{\theta})$ is a complexity function that evaluates how ``complicated'' the model is, typically by measuring the magnitude or structure of the parameter vector~$\boldsymbol{\theta}$. Larger values of $\Omega(\boldsymbol{\theta})$ correspond to more complex models. The coefficient $\lambda$ controls the trade-off between fitting the data and suppressing unnecessary complexity, and is usually treated as a hyperparameter to be tuned.

Once the regularization term is added, parameter estimation becomes the problem of minimizing the new objective:
\begin{equation}\label{eq:objective_function}
    \boldsymbol{\theta}^*
    = \underset{\boldsymbol{\theta}}{\arg \min}\, \mathcal{J}(\boldsymbol{\theta}).
\end{equation}
Because this objective now incorporates both data fidelity and model complexity, $\mathcal{J}(\boldsymbol{\theta})$ is commonly referred to as the \textbf{objective function}. In contrast, the original loss function $\mathcal{L}(\boldsymbol{\theta}; \boldsymbol{X}, \boldsymbol{y})$ is often called the \textbf{empirical loss} or \textbf{empirical risk}. The objective function minimization in Eq.~\eqref{eq:objective_function} operationalizes Occam's Razor by quantitatively encoding a preference for simplicity through the complexity penalty $\Omega(\boldsymbol{\theta})$. The central premise is that, among all models that explain the data adequately, simpler models should be favored.

From the perspective of learning theory, regularization alters the fundamental principle guiding model selection. When no regularization is applied, learning follows the paradigm of \textbf{Empirical Risk Minimization (ERM)}, which seeks a model that minimizes the predictive error on the training dataset. While ERM works well when data are abundant and noise-free, it easily leads to overfitting in high-dimensional or noisy settings. With regularization, the learning objective shifts to \textbf{Structural Risk Minimization (SRM)}. Under SRM, the learner considers not only how well the model fits the training data but also how complex the model is. The term ``structure'' refers to the set of hypotheses ordered by their complexity. By minimizing both empirical risk and the complexity penalty, SRM provides a principled and theoretically grounded framework for balancing bias and variance, leading to models with improved generalization performance on unseen data.

\paratitle{Ridge Regression and LASSO Regression (L2 and L1 Regularization).}
In regression models, the \textbf{norm} of the parameter vector is one of the most commonly used regularization terms for quantifying model complexity. Given a parameter vector $\boldsymbol{\theta} = (\theta_1, \ldots, \theta_n)^\top$, its $p$-norm, denoted $L_p$, is defined as
\begin{equation}
    \label{eq:lr:1.59}
    \Vert \boldsymbol{\theta} \Vert_p = \left( \sum_i |\theta_i|^p \right)^{\frac{1}{p}},
\end{equation}
where $p \in \mathbb{R}$ and $p \ge 1$. $L_2$ and $L_1$ norms are the two most widely used norms in regularization. Their corresponding regression methods can be summarized as follows:
\begin{itemize}
    \item \textbf{Ridge Regression ($L_2$ Regularization).}
    When the $L_2$ norm of the parameter vector is used as the regularization term, the resulting method is known as \emph{Ridge Regression}. The Ridge objective function is given by
    \begin{equation}
        \label{eq:lr:k25_2}
        \mathcal{J}(\boldsymbol{\theta})
        = \frac{1}{M} \sum_{m=1}^M \Big(\hat{y}_m(\boldsymbol{\theta}) - {y}_m\Big)^2
        + \lambda \, \Vert \boldsymbol{\theta} \Vert_2^2.
    \end{equation}
    Ridge Regression is one of the earliest and most influential regularization techniques in statistics. It was first introduced by Hoerl and Kennard in the 1970s as a remedy for multicollinearity in linear regression~\cite{hoerl1970ridge}. When the input variables are highly correlated, the ordinary least squares (OLS) estimator becomes unstable, and small perturbations in the data can lead to large fluctuations in the estimated parameters. Ridge Regression tackles this issue by penalizing the $L_2$ norm of the parameter vector, shrinking the coefficients toward zero and thereby stabilizing the solution.

    \item \textbf{LASSO Regression ($L_1$ Regularization).}
    When the $L_1$ norm of the parameter vector is used as the regularization term, the resulting method is known as \emph{LASSO} (Least Absolute Shrinkage and Selection Operator). The LASSO objective function is
    \begin{equation}
        \label{eq:lr:k25_3}
        \mathcal{J}(\boldsymbol{\theta})
        = \frac{1}{M} \sum_{m=1}^M \Big(\hat{y}_m(\boldsymbol{\theta}) - {y}_m\Big)^2
        + \lambda \, \Vert \boldsymbol{\theta} \Vert_1.
    \end{equation}
    LASSO was proposed much later by Tibshirani in 1996~\cite{tibshirani1996regression}, motivated by the need for a method that not only stabilizes estimation but also performs automatic feature selection. By penalizing the $L_1$ norm of the parameter vector, LASSO induces sparsity: it drives many coefficients exactly to zero, yielding simpler and more interpretable models. This sparsity property distinguishes LASSO from Ridge Regression and makes it especially suitable for high-dimensional settings where the number of features may exceed the number of samples.
\end{itemize}

\paratitle{Interpreting the Role of Regularization Terms.} The $p$-norm can be interpreted as the ``distance'' from the point $\boldsymbol{\theta}$ to the origin in parameter space. This interpretation provides an intuitive understanding of why $\Vert \boldsymbol{\theta} \Vert_p$ serves as a measure of model complexity. For the linear basis function model in Eq.~\eqref{eq:lr:k23_1}, each parameter $\theta_k$ determines the contribution of a corresponding basis function. If the value of $\Vert \boldsymbol{\theta} \Vert_p$ is large, it indicates that many coefficients have large magnitudes, meaning that the model places strong emphasis on multiple basis functions. Such models tend to exhibit complex behaviors -- sharp bends, large oscillations, or highly flexible functional forms -- making them more prone to overfitting. Conversely, when $\Vert \boldsymbol{\theta} \Vert_p$ is small, the coefficients are either collectively small in magnitude or many are exactly zero (especially under $L_1$ regularization). In this situation, the resulting model behaves more smoothly and has a simpler functional structure. A small-norm solution therefore corresponds to a model that captures the essential patterns in the data while avoiding unnecessary complexity. This explains why controlling $\Vert \boldsymbol{\theta} \Vert_p$ is a principled and effective way to regularize regression models.

We continue using the polynomial regression example with sinusoidal data to illustrate the effect of regularization (Figure~\ref{fig:lr:sine_mpoly}). Table~\ref{tab:lr:k29_1} lists the estimated coefficients for different degrees $K$. When $K=9$, the magnitudes of the parameters become extremely large. Intuitively, if we constrain the size of such parameters -- for example, by introducing a LASSO or Ridge regularization term -- we can effectively control the complexity of the model and mitigate overfitting.

\begin{table}[t]\small
\caption{\small Estimated parameters of polynomial regression models with different degrees $K$.}
\label{tab:lr:k29_1}
\centering
\begin{tabular}{r|rrrr}
& $K=0$ & $K=1$ & $K=3$ & $K=9$ \\
\hline
$w_0^*$ & -0.04 & 0.78 & -0.28 & -6.18 \\
$w_1^*$ &        & -0.26 & 1.95 & 78.45 \\
$w_2^*$ &        &        & -0.87 & -221.32 \\
$w_3^*$ &        &        & 0.09 & 285.53 \\
$w_4^*$ &        &        &      & -203.44 \\
$w_5^*$ &        &        &      & 86.65 \\
$w_6^*$ &        &        &      & -22.61 \\
$w_7^*$ &        &        &      & 3.54 \\
$w_8^*$ &        &        &      & -0.30 \\
$w_9^*$ &        &        &      & 0.01
\end{tabular}
\end{table}

To see the effect of regularization more clearly, we add an $L_1$ regularization term (LASSO) to the polynomial regression model in Eq.~\eqref{eq:lr:k29_1} and vary the value of the regularization coefficient~$\lambda$ (see Eq.~\eqref{eq:lr:k25_2}). The resulting fitted curves are shown in Figure~\ref{fig:lr:sine_aploy}. As seen from Figure~\ref{fig:lr:sine_apoly_a}, when $\lambda$ is chosen appropriately, even a highly flexible model with $K=9$ can produce a smooth curve that generalizes well to the underlying sinusoidal function. However, Figure~\ref{fig:lr:sine_aploy_b} also demonstrates that if $\lambda$ is set too large, the model becomes overly restricted, leading to underfitting. In this sense, $\lambda$ itself acts as a hyperparameter, and selecting an appropriate value typically requires cross-validation.

\begin{figure}[t]\small
    \centering
    \subfigure[Moderate $\lambda$]{
    \includegraphics[width=0.4\linewidth]{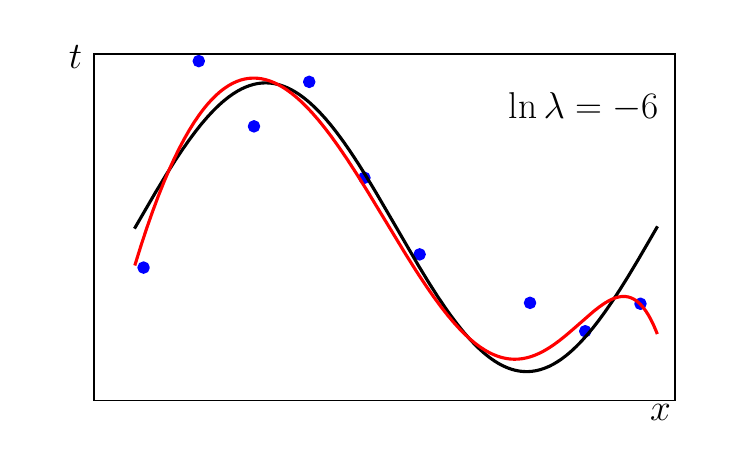}
    \label{fig:lr:sine_apoly_a}
    }
    \subfigure[Overly large $\lambda$]{
    \includegraphics[width=0.4\linewidth]{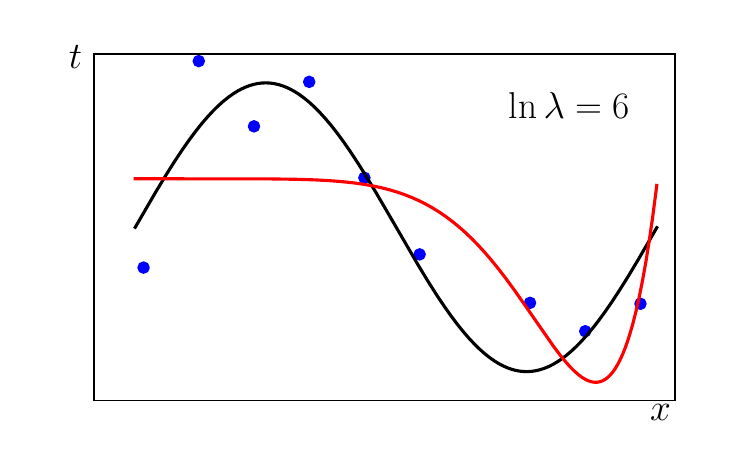}
    \label{fig:lr:sine_aploy_b}
    }
    \caption{\small Preventing overfitting through regularization.}
    \label{fig:lr:sine_aploy}
\end{figure}

Table~\ref{tab:lr:k29_2} further illustrates how the regression coefficients change under different levels of regularization. When no regularization is applied ($\ln \lambda = -\infty$), the coefficients are extremely large, reflecting the severe overfitting observed earlier. As the regularization strength increases ($\ln \lambda = -6$), most coefficients shrink substantially. When $\lambda$ becomes very large ($\ln \lambda = 6$), nearly all coefficients are driven close to zero. These results show that introducing a regularization term dramatically reduces parameter magnitudes, effectively limiting model complexity and preventing overfitting.

\begin{table}[t] \small
\caption{\small Changes in polynomial regression coefficients under different regularization strengths.}
\label{tab:lr:k29_2}
\centering
\begin{tabular}{r|rrr}
& $\ln \lambda=-\infty$ & $\ln \lambda=-6$ & $\ln \lambda=6$ \\
\hline
$w_0^*$ & -6.18 & -0.25 & 0.33 \\
$w_1^*$ & 78.45 & 1.90 & 0.00 \\
$w_2^*$ & -221.32 & -0.75 & 0.00 \\
$w_3^*$ & 285.53 & 0.02 & 0.00 \\
$w_4^*$ & -203.44 & 0.00 & 0.00 \\
$w_5^*$ & 86.65 & 0.00 & 0.00 \\
$w_6^*$ & -22.61 & 0.00 & 0.00 \\
$w_7^*$ & 3.54 & 0.00 & -4.09 $\times 10^{-5}$ \\
$w_8^*$ & -0.30 & 0.00 & 5.52 $\times 10^{-7}$ \\
$w_9^*$ & 0.01 & 0.00 & 9.46 $\times 10^{-7}$
\end{tabular}
\end{table}

\subsection{Parameter Estimation for Ridge/LASSO Regression}

When Ridge Regression and LASSO Regression were first introduced, both methods were formulated by augmenting the linear regression's ordinary least squares objective with $L_2$ and $L_1$ regularization terms, respectively~\cite{hoerl1970ridge, tibshirani1996regression}. These regularization schemes were later generalized to a wide range of regression models. In principle, any regression-type model -- such as the linear regression, logistic regression, Softmax regression, and linear basis function models discussed earlier, as well as neural network models trained with squared-error or cross-entropy loss -- can incorporate $L_2$ or $L_1$ penalties to achieve structural risk minimization. For this reason, $L_2$ and $L_1$ penalties are often referred to more broadly as \emph{Ridge} and \emph{LASSO} regularization. In this subsection, we focus on the case of linear regression and examine the properties of the closed-form solutions for Ridge and LASSO Regression. We then extend the discussion to the gradient descent framework and analyze how the two regularization terms influence the parameter update dynamics. This analysis provides insight into the different behaviors induced by $L_2$ and $L_1$ penalties.

\paratitle{Closed-form Solution of Ridge Regression.} For Ridge Regression based on the linear regression model, the objective function can be written in matrix form as
\begin{equation}
    \mathcal{J}(\boldsymbol{\theta})
    = \sum_{m=1}^{M} \big(\boldsymbol{\theta}^{\top}{\boldsymbol{x}}_m - y_m \big)^2
      + \lambda \Vert \boldsymbol{\theta} \Vert_2^2
    = ({\boldsymbol{X}}\boldsymbol{\theta} - \boldsymbol{y})^{\top}
      ({\boldsymbol{X}}\boldsymbol{\theta} - \boldsymbol{y})
      + \lambda \boldsymbol{\theta}^{\top}\boldsymbol{\theta}.
\end{equation}
For convenience, the coefficient \( \frac{1}{M} \) in the loss function is omitted.
Taking the gradient with respect to \( \boldsymbol{\theta} \) gives
\begin{equation}
    \frac{\partial \mathcal{J}(\boldsymbol{\theta})}{\partial \boldsymbol{\theta}}
    = 2{\boldsymbol{X}}^\top {\boldsymbol{X}}\boldsymbol{\theta}
      - 2{\boldsymbol{X}}^{\top}\boldsymbol{y}
      + 2\lambda \boldsymbol{\theta}.
\end{equation}
Setting the gradient to zero yields
\begin{equation}
    \label{eq:lr:1.68}
    \boldsymbol{\theta}^*
    = \left({\boldsymbol{X}}^\top {\boldsymbol{X}} + \lambda \boldsymbol{I}\right)^{-1}
      {\boldsymbol{X}}^{\top}\boldsymbol{y},
\end{equation}
which is the closed-form solution of Ridge Regression, where \( \boldsymbol{I} \) denotes the identity matrix.
Since the regularization coefficient \( \lambda \) is strictly positive, the matrix
\( {\boldsymbol{X}}^\top {\boldsymbol{X}} + \lambda \boldsymbol{I} \)
is guaranteed to be invertible.

Comparing Eq.~\eqref{eq:lr:1.68}, the closed-form solution of Ridge Regression, with the ordinary least squares solution in Eq.~\eqref{eq:lr:theta_star}, we see that Ridge Regression introduces an additional term \( \lambda\boldsymbol{I} \) to the denominator matrix. As a result, for the same dataset, Ridge Regression produces coefficient estimates with smaller magnitudes than ordinary least squares, thereby achieving effective control over model complexity.

\paratitle{A Formal Analytic Expression for LASSO Regression.} In LASSO regression, the presence of the $L_1$ norm makes the objective function non-differentiable at $\boldsymbol{\theta}=\boldsymbol{0}$. A rigorous treatment therefore relies on subgradient or KKT conditions, and in general LASSO does not admit a closed-form solution. However, to gain intuition about the shrinkage effect of the $L_1$ penalty, we can examine a local analytic form of the solution under a fixed sign pattern.

Specifically, we consider the case where $\boldsymbol{\theta}\neq\boldsymbol{0}$ and all coordinates stay away from zero so that their signs are well-defined and constant. Under this assumption, the LASSO objective can be written as
\begin{equation}
    \label{eq:lr:k25_4}
    \begin{aligned}
    \mathcal{J}(\boldsymbol{\theta})
    &= \sum_{m=1}^M\big(\boldsymbol{\theta}^{\top} {\boldsymbol{x}}_m - y_m\big)^2
       + \lambda \sum_{n=1}^N \operatorname{sign}(\theta_n)\,\theta_n \\
    &= \Big({\boldsymbol{X}}\boldsymbol{\theta} - \boldsymbol{y}\Big)^{\top}
       \Big({\boldsymbol{X}}\boldsymbol{\theta} - \boldsymbol{y}\Big)
       + \lambda \sum_{n=1}^N \operatorname{sign}(\theta_n)\,\theta_n,
    \end{aligned}
\end{equation}
where $\operatorname{sign}(\theta_n)\in\{+1,-1\}$ denotes the sign of $\theta_n$, and
$\operatorname{sign}(\boldsymbol{\theta})$ is the vector of signs of all components. Taking the gradient with respect to $\boldsymbol{\theta}$ (under the fixed-sign assumption) yields
\begin{equation}
    \label{eq:lr:k25_5}
    \frac{\partial \mathcal{J}(\boldsymbol{\theta})}{\partial \boldsymbol{\theta}}
    = 2 {\boldsymbol{X}}^{\top} {\boldsymbol{X}} \boldsymbol{\theta}
      - 2 {\boldsymbol{X}}^{\top} \boldsymbol{y}
      + \lambda \,\operatorname{sign}(\boldsymbol{\theta}) .
\end{equation}
Setting this gradient to zero leads to the following \emph{formal} expression:
\begin{equation}
    \label{eq:lr:k25_6}
    \boldsymbol{\theta}^*
    = \left({\boldsymbol{X}}^{\top} {\boldsymbol{X}}\right)^{-1}
      \left({\boldsymbol{X}}^{\top} \boldsymbol{y}
      - \frac{1}{2} \lambda \,\operatorname{sign}(\boldsymbol{\theta})\right).
\end{equation}

It is important to emphasize that Eq.~\eqref{eq:lr:k25_6} is \emph{not} a genuine closed-form solution for LASSO, because the right-hand side still depends on $\operatorname{sign}(\boldsymbol{\theta})$, which itself is determined by $\boldsymbol{\theta}$. Thus, Eq.~\eqref{eq:lr:k25_6} should be viewed as an implicit, local analytic form valid under a fixed sign pattern, rather than a globally applicable closed-form estimator. In practice, LASSO solutions are typically obtained via numerical optimization methods such as coordinate descent. Nevertheless, comparing Eq.~\eqref{eq:lr:k25_6} with the ordinary least squares solution in Eq.~\eqref{eq:lr:theta_star} still reveals an important qualitative property. The additional term
\[
- \frac{1}{2} \lambda \,\operatorname{sign}(\boldsymbol{\theta})
\]
acts componentwise: when $\theta_n>0$, we have $-\operatorname{sign}(\theta_n)<0$, and when $\theta_n<0$, we have $-\operatorname{sign}(\theta_n)>0$. In both cases, each parameter is ``pulled'' toward zero, so the LASSO estimates tend to have smaller magnitudes than the ordinary least squares estimates. This shrinkage toward zero is a key mechanism by which LASSO controls model complexity and, in the full subgradient/KKT treatment, also leads to sparse solutions with many coefficients exactly equal to zero.

\paratitle{Gradient Descent Solutions for Ridge and LASSO.} We now analyze the behavior of Ridge and LASSO regression from the perspective of gradient descent, which provides an intuitive view of how the two regularization terms influence parameter updates. Since the properties discussed here apply broadly to different regression models, we express the two objective functions using the following general forms:
\begin{equation*}
    \mathcal{J}_{L_2}(\boldsymbol{\theta})
    = \mathcal{L}(\boldsymbol{\theta}; \boldsymbol{X}, \boldsymbol{y})
      + \lambda \Vert \boldsymbol{\theta} \Vert_2^2,
    \qquad
    \mathcal{J}_{L_1}(\boldsymbol{\theta})
    =\mathcal{L}(\boldsymbol{\theta}; \boldsymbol{X}, \boldsymbol{y})
      + \lambda \Vert \boldsymbol{\theta} \Vert_1.
\end{equation*}
Correspondingly, the coordinate-wise update rule for each parameter $\theta_n$ during gradient descent takes the form:
\begin{equation}
    \label{eq:lr:k25_7}
    \begin{aligned}
        &\text{NoRegular: }
        && \theta_n^{\text{(new)}} \leftarrow \theta_n - \alpha\,\Delta, \\[4pt]
        &\text{Ridge: }
        && \theta_n^{\text{(new)}} \leftarrow \theta_n - \alpha\,\Delta - \alpha\lambda\,\theta_n, \\[4pt]
        &\text{LASSO: }
        && \theta_n^{\text{(new)}} \leftarrow \theta_n - \alpha\,\Delta - \alpha\lambda\,\operatorname{sign}(\theta_n),
    \end{aligned}
\end{equation}
where
\[
\Delta = \frac{\partial \mathcal{L}(\boldsymbol{\theta}; \boldsymbol{X}, \boldsymbol{y})}{\partial \theta_n},
\]
and $\alpha$ is the learning rate.
For comparison, we include the update rule for the unregularized model (NoRegular).

From Eq.~\eqref{eq:lr:k25_7}, we observe that both Ridge and LASSO introduce an additional ``penalty'' term -- $\lambda\theta_n$ for Ridge and $\lambda\operatorname{sign}(\theta_n)$ for LASSO -- into each gradient update. As a result, given the same number of iterations, the parameters obtained by Ridge or LASSO will have smaller magnitudes than those obtained without regularization. This observation is fully consistent with the closed-form analyses discussed earlier, but importantly, it does not depend on the specific form of the regression function and loss function $\mathcal{L}(\boldsymbol{\theta}; \boldsymbol{X}, \boldsymbol{y})$, making the conclusion applicable to all kinds of regression models. In general, both Ridge and LASSO regularization shrink parameter magnitudes.

A closer examination of the update rules in Eq.~\eqref{eq:lr:k25_7} reveals distinct behaviors induced by the two regularization terms:
\begin{itemize}
    \item \textbf{Ridge Regularization.}
    The penalty term $\lambda\theta_n$ is proportional to the magnitude of the parameter. Larger parameters receive larger penalties and shrink more aggressively, while smaller parameters are penalized less. This proportional shrinkage encourages the parameters to approach similar magnitudes. Ridge therefore tends to produce \emph{smooth, non-sparse} solutions.

    \item \textbf{LASSO Regularization.}
    The penalty term $\lambda\,\operatorname{sign}(\theta_n)$ has a constant magnitude, regardless of the size of $\theta_n$. This means that every parameter receives the same force pushing it toward zero. Under such dynamics, small parameters may be shrunk all the way to zero, while larger parameters shrink more moderately. This behavior leads to \emph{sparsity}, with many coefficients becoming exactly zero.
\end{itemize}

The sparsity-inducing property of LASSO makes it particularly useful for feature selection. We will discuss this property in greater detail in the next subsection.

\subsection{Feature Selection Effect of LASSO}

LASSO regression and Ridge regression are both effective for reducing model complexity and minimizing structural risk. However, a key distinction lies in the fact that \textbf{LASSO can perform feature selection}, whereas Ridge regression cannot. This is because the $L_2$ norm encourages coefficient values to be spread out more evenly, resulting in dense solutions in which most coefficients remain nonzero. In contrast, the $L_1$ norm encourages sparsity by driving many coefficients exactly to zero, thereby yielding a model that relies only on a subset of the original features. We have already gained some intuition about this behavior from the gradient-descent updates of Ridge and LASSO. In this subsection, we provide a deeper and more geometric explanation of why LASSO produces sparse solutions.

\paratitle{Geometric Principle Behind LASSO Feature Selection.}
Recall the formulation of the regularized objective function in Eq.~\eqref{eq:lr:k25_1}:
\begin{equation}
    \label{eq:lr:k25_8}
    \mathcal{J}(\boldsymbol{\theta})
    = \mathcal{L}(\boldsymbol{\theta}; \boldsymbol{X}, \boldsymbol{y})
      + \lambda\,\Omega(\boldsymbol{\theta}).
\end{equation}
Any optimal solution (including local optima in non-convex cases) must satisfy the first-order optimality condition
\[
\frac{\partial \mathcal{J}(\boldsymbol{\theta})}{\partial \boldsymbol{\theta}} = 0,
\]
which leads to
\begin{equation}
    \label{eq:lr:k25_9}
    \frac{\partial \mathcal{L}(\boldsymbol{\theta}; \boldsymbol{X}, \boldsymbol{y})}{\partial \boldsymbol{\theta}}
    = -\lambda\,\frac{\partial \Omega(\boldsymbol{\theta})}{\partial \boldsymbol{\theta}}.
\end{equation}
Geometrically, Eq.~\eqref{eq:lr:k25_9} states that at an optimum, the level sets of the loss function $\mathcal{L}$ and the regularizer $\Omega$ \emph{must be tangent}. This is because the gradients of $\mathcal{L}$ and $\Omega$ point in exactly opposite directions at the optimum, while the level sets of a differentiable function are orthogonal to its gradient.\footnote{At the optimum, the gradients of $\mathcal{L}$ and $\Omega$ cancel each other. Since the gradient is perpendicular to the contour line, their contour lines must be tangent at the point of contact.}

\begin{figure}[t]
    \centering
    \includegraphics[width=0.65\columnwidth]{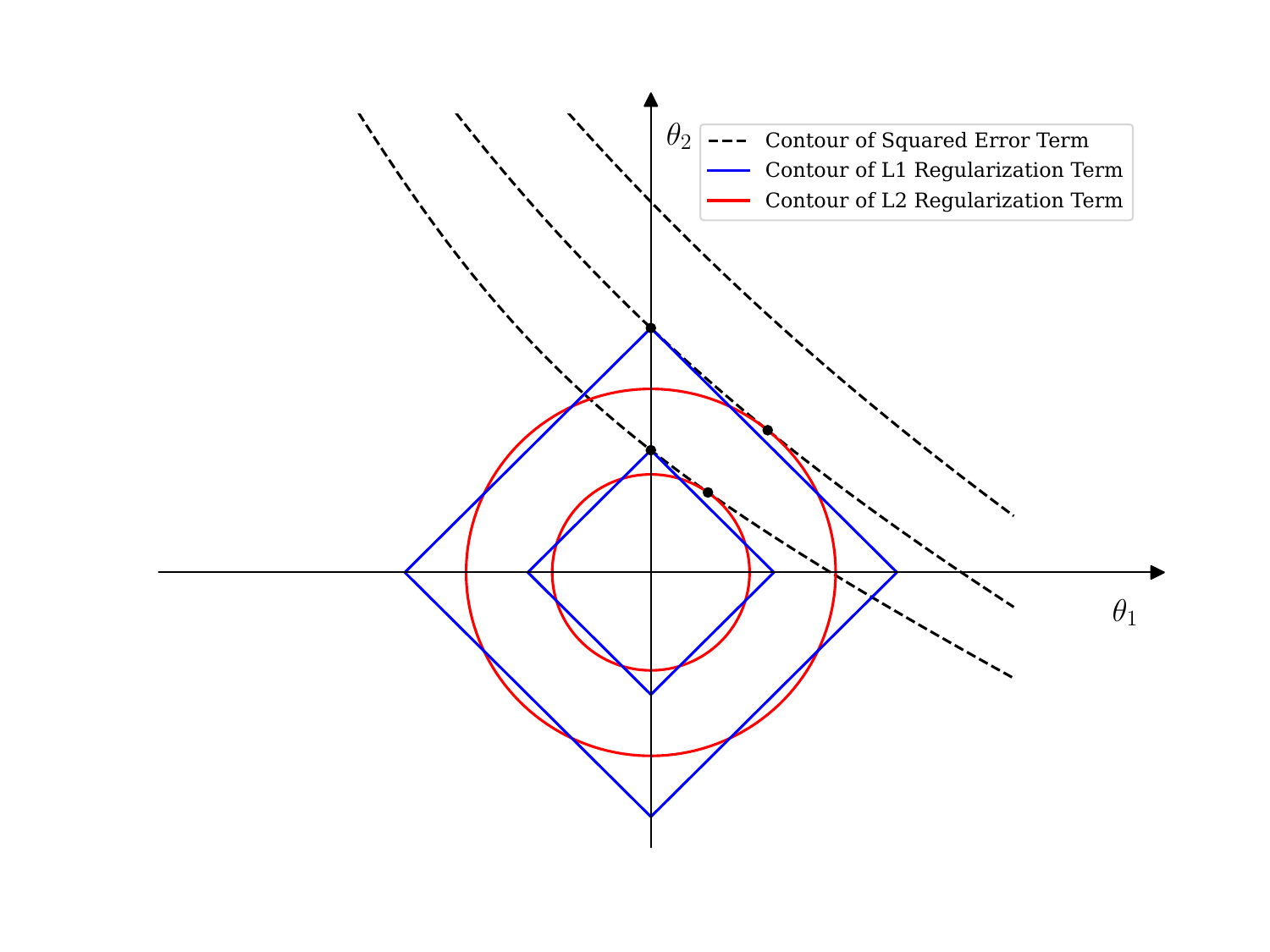}
    \caption{\small Illustration of why $L_1$ regularization yields sparse solutions more frequently than $L_2$ regularization.}
    \label{fig:lr:feat_sel}
\end{figure}

Since Ridge and LASSO use the $L_2$ and $L_1$ norms respectively as their regularization functions, the shapes of their level sets differ drastically, which leads to very different behaviors of the optimal solution. Figure~\ref{fig:lr:feat_sel} provides an intuitive illustration in a two-dimensional parameter space. As shown in Figure~\ref{fig:lr:feat_sel}, the level sets of the $L_2$ norm are circles (or hyperspheres in higher dimensions). Consequently, the tangency point between the loss function and the $L_2$ constraint can occur at any point on the circle. In contrast, the level sets of the $L_1$ norm form diamonds (or cross-polytopes in higher dimensions). Because of the sharp corners of the diamond, the tangency between the loss function and the $L_1$ constraint is highly likely to occur at one of these corners -- that is, along one of the coordinate axes. A tangency point lying on a coordinate axis corresponds to an optimal solution in which at least one component of $\boldsymbol{\theta}$ is exactly zero. Geometrically, this explains why LASSO produces sparse solutions.

In other words, under an $L_1$ regularization constraint, the optimal solution $\boldsymbol{\theta}^*$ naturally tends to have only a small number of nonzero entries corresponding to the most important features. In many regression models, such as linear regression, logistic regression and Softmax regression, the parameters $\boldsymbol{\theta}$ determine how the features are linearly combined. When a coefficient $\theta_n$ becomes zero, the corresponding feature $x_n$ has no effect on predicting the target variable $y$ and can be removed. The remaining nonzero coefficients indicate the selected ``useful features.'' LASSO achieves feature selection precisely through this sparsity-inducing property of the $L_1$ norm.

Indeed, the full name of LASSO is \emph{Least Absolute Shrinkage and Selection Operator}, where the word ``Selection'' explicitly refers to this unique ability to perform feature selection. Strictly speaking, $L_p$ penalties with $0 < p < 1$ also promote sparsity and can even induce stronger sparsity than the $L_1$ norm. However, for $0<p<1$, the resulting optimization problem becomes non-convex and generally difficult to solve reliably, often containing many local minima. For this reason, such fractional penalties are rarely used in practice. Among all sparsity-inducing penalties, the $L_1$ norm is the only one that yields a convex objective, making LASSO the most widely adopted method for feature selection.

\paratitle{An Example of LASSO Feature Selection.} To illustrate the feature selection capability of LASSO more concretely, we consider a real-data example described in~\cite{hastie2009elements}\footnote{This example is based on the LASSO case study discussed in \emph{The Elements of Statistical Learning} (Hastie, Tibshirani, and Friedman, 2009).}. The dataset contains pathological measurements from 67 prostate cancer patients undergoing radiation therapy. The variables and their interpretations are listed in Table~\ref{tab:lr:dataset_var}.

\begin{table}[t]\small
    \centering
    \caption{Descriptions of variables in the prostate cancer dataset (Hastie et al., 2009).}\label{tab:lr:dataset_var}
    \renewcommand{\arraystretch}{1.3}
    \begin{tabular}{lll}
    \toprule
         Variable & Full Name & Description \\
    \midrule
lpsa    & $\log$ PSA score                        & Log prostate-specific antigen level \\
lcavol  & $\log$ cancer volume                    & Log cancer volume \\
lweight & $\log$ prostate weight                  & Log prostate weight \\
age     & Patient age                             & Patient age (in years) \\
\multirow{2}*{lbph}
        & Log of benign prostatic                 & \multirow{2}*{Log amount of benign prostatic hyperplasia}\\
        & hyperplasia amount                      & ~\\
svi     & Seminal vesicle invasion                & Indicator of seminal vesicle invasion (0/1) \\
lcp     & $\log$ capsular penetration             & Log capsular penetration \\
gleason & Gleason score                           & Gleason score (tumor grade) \\
pgg45   & Percent of Gleason grade 4 or 5         & Percentage of tissue of Gleason grade 4 or 5 \\
    \bottomrule
    \end{tabular}
\end{table}

\begin{figure}[t]
    \centering
    \includegraphics[width=0.7\columnwidth]{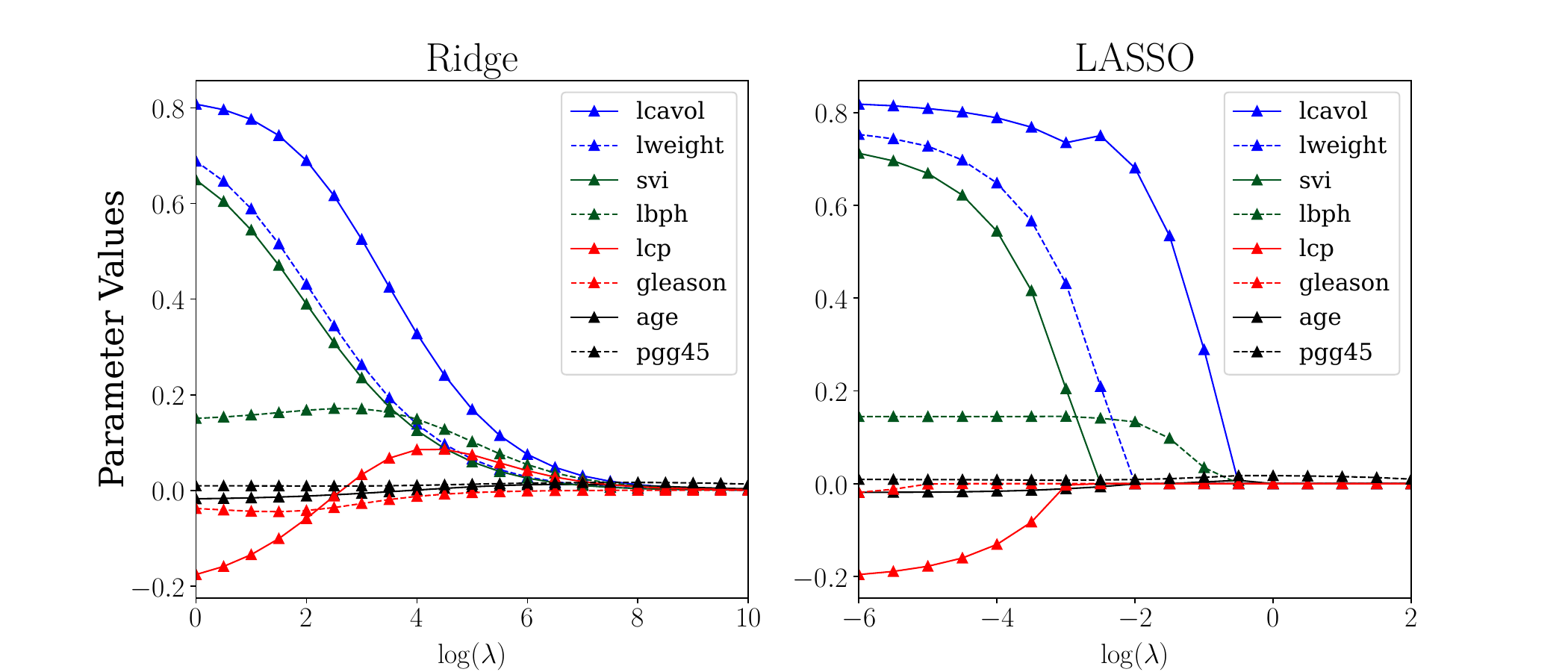}
    \caption{\small Coefficient paths of Ridge and LASSO as the regularization parameter $\lambda$ varies.}
    \label{fig:LASSO_Ridge_lambda}
\end{figure}

In the experiment, \texttt{lpsa} is treated as the response variable, and all other variables serve as predictors. Both Ridge Regression and LASSO Regression are fitted to the data. The estimated coefficient paths as functions of the regularization parameter $\lambda$ are shown in Figure~\ref{fig:LASSO_Ridge_lambda}. For ease of interpretation, the horizontal axis is plotted on the $\log(\lambda)$ scale. As illustrated in the figure, as $\lambda$ increases (\ie as the strength of regularization grows), the coefficients in both models shrink toward zero. However, the two methods exhibit distinctly different shrinkage behaviors. In particular:
\begin{itemize}
    \item \textbf{Ridge Regression:} The coefficients shrink smoothly and simultaneously toward zero. All coefficients decrease at comparable rates, reflecting the fact that Ridge encourages all parameters to be small but rarely drives any coefficient exactly to zero.
    \item \textbf{LASSO Regression:} The coefficients shrink in a \emph{sequential and selective} manner. Some coefficients drop to zero very quickly as $\lambda$ increases, whereas others diminish much more slowly. This corresponds to a feature selection mechanism: predictors with weaker influence on the response are eliminated early, leaving only the most informative variables as $\lambda$ becomes large.
\end{itemize}
This example clearly demonstrates LASSO's ability to perform automatic feature selection, a property not shared by Ridge Regression.

\newpage

\section{Conclusion}

Throughout this document, we have seen that the central purpose of regression analysis is to provide a unified framework for understanding, modeling, and predicting the relationship between input variables $x$ and output variables $y$. As a core component of intelligent computing, regression analysis connects classical statistical thinking with modern machine-learning practice, allowing us both to make predictions and to interpret the mechanisms that may generate the data.

Linear regression, logistic regression, and softmax regression together form a basic toolkit for handling continuous outcomes, binary classification, and multi-class problems. With their associated loss functions -- such as mean squared error and cross-entropy -- these models create a general modeling structure that applies to many types of data. Building on this foundation, Linear Basis Function Models and kernel methods introduce powerful function-approximation techniques, enabling regression models to capture nonlinear relationships that simple linear models cannot express. Even deep neural networks, which drive many of today's most successful AI systems, can be seen as highly flexible regression functions within this same framework. Their strong approximation ability is supported by solid theoretical results, yet this strength also leads to the long-standing challenge of overfitting. To address this, we introduced regularization methods, including Ridge and LASSO, which help control model complexity and, in the case of LASSO, perform effective feature selection -- especially valuable in high-dimensional settings.

In terms of parameter estimation, the closed-form ordinary least squares solution for linear regression offers a clear and intuitive example of how regression models work. However, most nonlinear models do not allow closed-form solutions, so gradient descent becomes the main tool for learning model parameters. Interestingly, all regression models share the same structure for their gradients: \emph{gradient = error $\times$ input}. This simple rule is also the foundation of the backpropagation algorithm that makes it possible to train deep neural networks with millions of parameters. In the part on kernel methods, we also saw how regression analysis connects to another important family of algorithms -- support vector machines (SVM) -- revealing the deep mathematical ties running across different machine-learning approaches.

These lecture notes grew out of my teaching experience in the Intelligent Computing course cluster at the School of Computer Science and Engineering, Beihang University, especially during my teaching of the course \emph{Introduction to Data Mining}. As I continue teaching and learning, I plan to enrich this document further. Future additions include: a Bayesian view of regression analysis, techniques for assessing feature importance, and several pieces of background knowledge -- such as basic supervised-learning concepts, dataset splitting, and model evaluation -- that will help make the notes even more self-contained.

Producing high-quality teaching material is not easy. It requires time, patience, and careful refinement of ideas. The effort is often greater than writing a research paper of similar length, yet the formal academic recognition from the university -- typically centered on papers, grants, and measurable outputs -- is usually much smaller. Even so, creating clear and accessible teaching materials is deeply valuable. A well-written textbook or set of notes does more than support a single course: it helps shape how students understand a field, lowers the barrier to entry for newcomers, and plays a quiet but important role in the long-term development and dissemination of knowledge. In this sense, teaching and textbook writing contribute to the growth of a discipline in ways that may not be immediately reflected in academic evaluation, but whose impact accumulates over time.

I have long been inspired by several excellent textbooks in our field: Christopher M. Bishop's \emph{Pattern Recognition and Machine Learning}, Trevor Hastie, Robert Tibshirani, and Jerome Friedman's \emph{The Elements of Statistical Learning}, Zhou Zhi-Hua's \emph{Machine Learning}, and Li Hang's \emph{Statistical Learning Methods}. Compared with these classics, this tutorial is still modest. But I hope it can offer something of its own: that readers with no background in regression -- equipped only with basic university mathematics -- can complete the entire document without external references and gain a solid and connected understanding of regression analysis. I also hope that readers already familiar with the topic may still find new insights here, perhaps by seeing familiar ideas presented in a clearer and more unified structure. This wish is, in many ways, the reason I decided to write these notes, even though many excellent materials already exist.

Inevitably, there are places where this document can be improved. I sincerely welcome feedback and suggestions from readers.

\begin{flushright}
--- \mywritedate, by the Kunyu River in Beijing

Jingyuan Wang
\end{flushright}
\newpage
\section*{Acknowledgements}

I would like to thank my students, Zimeng Li, Dayan Pan, and Yichuan Zhang (listed in alphabetical order), for their contributions to these notes.

\bibliographystyle{plain}
\bibliography{bibfile4}

\end{document}